\documentclass{article}
\usepackage[utf8]{inputenc}
\pdfoutput=1
\usepackage[sc,osf]{mathpazo}
\usepackage[margin=1.1in,letterpaper]{geometry}
\usepackage[sort&compress,numbers]{natbib}
\usepackage[colorlinks=true,citecolor=blue,breaklinks]{hyperref}
\usepackage[hyphenbreaks]{breakurl}

\linespread{1.025}              % for Palatino

\makeatletter
\newlength\aftertitskip     \newlength\beforetitskip
\newlength\interauthorskip  \newlength\aftermaketitskip

%% Changeable parameters.
\setlength\aftertitskip{0.1in plus 0.2in minus 0.2in}
\setlength\beforetitskip{0.05in plus 0.08in minus 0.08in}
\setlength\interauthorskip{0.08in plus 0.1in minus 0.1in}
\setlength\aftermaketitskip{0.3in plus 0.1in minus 0.1in}

%% overall definition of maketitle, @maketitle does the real work
\def\maketitle{\par
 \begingroup
   \def\thefootnote{\fnsymbol{footnote}}
   \def\@makefnmark{\hbox to 4pt{$^{\@thefnmark}$\hss}}
   \@maketitle \@thanks
 \endgroup
\setcounter{footnote}{0}
 \let\maketitle\relax \let\@maketitle\relax
 \gdef\@thanks{}\gdef\@author{}\gdef\@title{}\let\thanks\relax}

\def\@startauthor{\noindent \normalsize\bf}
\def\@endauthor{}
\def\@starteditor{\noindent \small {\bf Editor:~}}
\def\@endeditor{\normalsize}
\def\@maketitle{\vbox{\hsize\textwidth
 \linewidth\hsize \vskip \beforetitskip
 {\begin{center} \LARGE\@title \par \end{center}} \vskip \aftertitskip
 {\def\and{\unskip\enspace{\rm and}\enspace}%
  \def\addr{\small\it}%
  \def\email{\hfill\small\tt}%
  \def\name{\normalsize\bf}%
  \def\AND{\@endauthor\rm\hss \vskip \interauthorskip \@startauthor}
  \@startauthor \@author \@endauthor}
}}

\makeatother

\pdfoutput=1                    % force pdflatex to run

\usepackage[utf8]{inputenc} % allow utf-8 input
\usepackage[T1]{fontenc}    % use 8-bit T1 fonts
\usepackage{hyperref}       % hyperlinks

\usepackage[usenames,dvipsnames]{xcolor}
\definecolor{darkblue}{rgb}{0,0,0.90}
\hypersetup{
  colorlinks = true,
  citecolor  = darkblue,
  linkcolor  = darkblue,
  citecolor  = darkblue,
  filecolor  = darkblue,
  urlcolor   = darkblue,
}

\usepackage{algorithm,algorithmic}
\usepackage{graphicx}
\usepackage{bbold}
\usepackage{framed}
\usepackage{amssymb}
\usepackage{amsfonts}
\usepackage{mathrsfs}
\usepackage{mathtools}
\usepackage{array}
\usepackage{amsthm}
\usepackage{verbatim} %this allows commenting out big sections
\usepackage{enumerate}
\usepackage{bbm}
\usepackage{mathtools}
\usepackage{commath}
\usepackage{wrapfig}
\usepackage{amsbsy}
\usepackage[sc,osf]{mathpazo}
\usepackage{amsmath}
\usepackage{changepage}
\usepackage{nicefrac}

\usepackage[font=small,labelfont=bf,tableposition=top]{caption}
\usepackage{cuted}

\newtheorem{prop}{Proposition}
\newtheorem{lemma}[prop]{Lemma}

\newtheorem{theorem}[prop]{Theorem}

\theoremstyle{definition}
\newtheorem{defn}[prop]{Definition}

\newcommand{\sj}[1]{\textcolor{red}{SJ: #1}}

\usepackage{chngcntr}
\usepackage{apptools}
%\AtAppendix{\counterwithin{lemma}{section}}
\AtAppendix{\counterwithin{prop}{section}}
%\AtAppendix{\counterwithin{theorem}{section}}

\usepackage{microtype}
\usepackage{graphicx}
\usepackage{subfigure}
\usepackage{booktabs} % for professional tables

\usepackage{hyperref}

\title{Strength from Weakness: Fast Learning Using Weak Supervision}

\author{\name Joshua Robinson \email{joshrob@mit.edu}\\
  \name Stefanie Jegelka \email{stefje@csail.mit.edu}\\
  \name Suvrit Sra \email{suvrit@mit.edu}\\
  \addr{Massachusetts Institute of Technology, Cambridge, MA 02139} 
}

\begin{document}

\maketitle

\begin{abstract} 
  We study generalization properties of weakly supervised learning. That is, learning where only a few ``strong'' labels (the actual target of our prediction) are present but many more ``weak'' labels are available. In particular, we show that having access to weak labels can significantly accelerate the learning rate for the strong task to the fast rate of $\mathcal{O}(\nicefrac1n)$, where $n$ denotes the number of strongly labeled data points. This acceleration can happen even if by itself the strongly labeled data admits only the slower  $\mathcal{O}(\nicefrac{1}{\sqrt{n}})$ rate. The actual acceleration depends continuously on the number of weak labels available, and on the relation between the two tasks. Our theoretical results are reflected empirically across a range of tasks and illustrate how weak labels speed up learning on the strong task.
\end{abstract}

\vspace{-15pt}
\section{Introduction}
While access to large amounts of labeled data has enabled the training of big models with great successes in applied machine learning, it remains a key bottleneck. In numerous settings (e.g., scientific measurements, experiments, medicine) obtaining a large number of labels can be prohibitively expensive, error prone, or otherwise infeasible. %, faulty, or require expert labelers, which can greatly limit the amount of labeled data.
When labels are scarce, a common alternative is to use additional sources of information: ``weak labels'' that contain information about the ``strong'' target task and are more readily available, e.g., a related task, or noisy versions of strong labels from non-experts or cheaper measurements. \\

Such a setting is called \emph{weakly supervised learning}, and given its great practical relevance it has received much attention \cite{zhou2018brief,pan2009survey,liao2005logistic,dai2007boosting,huh2016makes}. A prominent example that enabled breakthrough results in computer vision and is now standard, is to pre-train a complex model on a related, large data task, and to then use the learned features for fine-tuning for instance the last layer on the small-data target task~\citep{girshick14,donahue14,zeiler14,sun17}. Numerous approaches to weakly supervised learning have succeeded in a variety of tasks; beyond computer vision \cite{oquab2015object,durand2017wildcat,carreira17,fries2019weakly}. Examples include clinical text classification~\cite{wang2019clinical}, 
sentiment analysis \cite{medlock2007weakly}, social media content tagging \cite{mahajan2018exploring} and many others. Weak supervision is also closely related to unsupervised learning methods such as complementary and contrastive learning \cite{xu2019generative,chen2019weakly,arora2019theoretical}, and particularly to self-supervised learning \cite{doersch2015unsupervised}, where feature maps learned via supervised training on artificially constructed tasks have been found to even outperform ImageNet learned features on certain downstream tasks \cite{misra2019self}. \\

In this paper, we make progress towards building theoretical foundations for weakly supervised learning, i.e., where we have a few strong labels, but too few to learn a good model in a conventional supervised manner. %\cite{liao2005logistic,dai2007boosting,huh2016makes}. 
Specifically we ask, \\
\begin{adjustwidth}{15pt}{15pt}
%\emph{Can one learn an accurate model by supplementing a small amount of strongly labeled data with a large amount of weakly labeled data?} 
\emph{Can large amounts of weakly labeled data provably help learn a better model than strong labels alone?}\\
\end{adjustwidth}

We answer this question positively by analyzing a generic feature learning algorithm that learns features on the weak task, and uses those features in the strong downstream task.
%and demonstrate how it can improve performance on the strong task.
While generalization bounds for supervised learning typically scale as $\mathcal{O}(\nicefrac{1}{\sqrt{n}})$, where $n$ is the number of strongly labeled data points, we show that the feature transfer algorithm can do better, achieving the superior rate of  $\widetilde{ \mathcal{O} }(n^{-\gamma})$ for $1/2 \leq \gamma \leq 1$, where $\gamma$ depends on how much weak data is available, and on generalization error for the weak task. This rate smoothly interpolates between  $\widetilde{\mathcal{O}}(\nicefrac 1 n)$ in the best case, when weak data is plentiful and the weak task is not too difficult, and slower rates when less weak data is available or the weak task itself is hard.
One instantiation of our results for categorical weak labels says that, if we can train a model with $\mathcal{O}(\nicefrac{1}{\sqrt{m}})$ excess risk for the weak task (where $m$ is the amount of weak data), and $m = \Omega(n^2)$, then we obtain a ``fast rate'' $\widetilde{\mathcal{O}}(\nicefrac 1 n)$ on the excess risk of the \emph{strong task}. This speedup is significant compared to the commonly observed  $\mathcal{O}(\nicefrac{1}{\sqrt{n}})$ ``slow rates''. \\

In order to obtain any such results, it is necessary to capture the task relatedness between weak and strong tasks. We formalize and quantify this relatedness via the \emph{central condition} \citep{erven2012mixability, van2015fast}. This condition essentially implies that there is a suitable feature embedding that works well on the weak task, and also helps with the strong task. The challenge, however, arises from the fact that we do not know \emph{a priori} what the suitable embedding is. The main part of our theoretical work is devoted to establishing that using instead an embedding learned from weak data still allows fast learning on the strong task. \\

%to establish that one can indeed learn well even if we only learn an embedding from weak data, and no information other than the input feature and strong label is available for strong data. %\sj{Please check. This must sound strong.}
%
%\vspace{-5pt}

%\paragraph{Summary of Contributions.}
In short, we make the following contributions:
\begin{itemize}\setlength{\itemsep}{-1pt}
\item We introduce a theoretical framework for analyzing weakly supervised learning problems.
  %	\vspace{-4pt}
  
\item We propose the central condition as a viable way to quantify relatedness between weak and strong tasks. The condition requires there is an embedding that is good for both tasks, but which is unobservable; this makes obtaining generalization bounds non-trivial.
  %	\vspace{-4pt}
\item We obtain generalization bounds for the strong task. These bounds depend  continuously on two key quantities: 1) the growth rate of the number $m$ of weak labels in terms of the number $n$ of strong labels, and 2)~generalization performance on the weak task.
  %	\vspace{-10pt}
\item We show that in the best case, when $m$ is sufficiently larger than $n$, weak supervision delivers \emph{fast rates}.
  \vspace*{4pt}
\end{itemize}
We validate our theoretical findings, and observe that our fast and intermediate rates are indeed borne out in practice.
%across a range of tasks.
%Our theoretical predictions are tested by empirical studies, which show that the claimed fast and intermediate rates are indeed borne out in reality across a range of vision tasks. Next, we illustrate the ubiquity of weakly supervised learning problems through several examples. 
 \vspace{-5pt}

\subsection{Examples of Weak Supervision}
%We summarize some examples of weakly supervised learning below. %Connections with closely related topics such as self-supervised learning and meta-learning are discussed in Section~\ref{sec:discussion}. \sj{In the context of so much attention on pretraining, meta-learning etc, these below seem a bit weak.}
\noindent
\textbf{Coarse Labels.}  
It is often easier to collect labels that capture only part of the information about the true label of interest \cite{zhao2011large,guo2018cnn,yan2015hd, taherkhani2019weakly}. A particularly pertinent example is semantic labels obtained from hashtags attached to images \cite{mahajan2018exploring,li2017webvision}. Such tags are generally easy to gather in large quantities, but tend to only capture certain aspects of the image that the person tagging them focused on. For example, an image with the tag \textbf{\texttt{\#dog}} could easily also contain children, or other  label categories that have not been explicitly tagged. \\
%For example, any adult can classify natural images of ``trees'' and ``flowers'', but it requires greater general knowledge and mental effort to make the fine distinctions between particular types of trees (maple, oak, palm etc.) or flower (orchids, poppies, roses etc.) \cite{yan2015hd, taherkhani2019weakly}. 
%\vspace{-8pt}

\noindent
\textbf{Crowd Sourced Labels.}  A primary way for obtaining large labeled data is via crowd-sourcing using platforms such as Amazon Mechanical Turk \cite{khetan2017learning, kleindessner2018crowdsourcing}. Even for the simplest of labeling tasks, crowd-sourced labels can often noisy \cite{zhang2018generalized, branson2017lean,zhang2014spectral}, which becomes worse for labels requiring expert knowledge.
%The situation is even more complex for ``skilled'' labelling tasks, where the expertise level of the labelers for a given task becomes critical.
Typically, more knowledgeable labelers are more expensive (e.g., professional doctors versus medical students for a medical imaging task), which introduces a tradeoff between label quality and cost that the user must carefully manage. \\
%\vspace{-8pt}

\noindent
\textbf{Object Detection.} A common computer vision task is to draw bounding boxes around objects in an image \cite{oquab2015object}. A popular alternative to expensive bounding box annotations is a collection of words describing the objects present, without localization information~\cite{bilen2016weakly,branson2017lean,wan2018min}. This setting is also an instance of coarse labeling. \\

%However, applying supervised learning algorithms is often prohibited by the lack of data due to the human labor cost of carefully annotating each image with the correct bounding boxes. A popular alternative is to simply associate to each image a bag of words describing the objects present, forfeiting information on spatial location \cite{bilen2016weakly,branson2017lean}. This is a particular example of coarse labelling.
%\vspace{-8pt}

\noindent
\textbf{Model Personalization.} In examples like recommender systems \cite{ricci2011introduction}, online advertising \cite{naumov2019deep}, and personalized medicine \cite{schork2015personalized}, one needs to make predictions for individuals, while using information shared by a larger population as supportive, weak supervision \cite{desrosiers2011comprehensive}.

%There are many settings where data comes from different individuals in a population who share some characteristics with one another, but are also distinct from one another. Examples include recommender systems \cite{ricci2011introduction}, online advertising \cite{naumov2019deep}, and personalized medicine \cite{schork2015personalized}. Since members of the population share properties, one wishes to use knowledge from all data while considering an individual. A simple supervised implementation could lead to average case prediction. So,  data from the considered individual is privileged, while data from other individuals is still useful, but at a lesser extent. In other words, some data has weak labels, and other strong. 

% Medical datasets invariably contain data from many different individuals. However, clinicians wish to make decisions that are particular to a single specific patient. This is complicated by the fact that for a given set of measurable variables and a given treatment, two patients may respond completely differently to the treatment. Therefore for the task of delivering personalized treatment using historical population data can be viewed as weakly supervised.

%%% Local Variables:
%%% mode: latex
%%% TeX-master: "fast_main"
%%% End:

\vspace*{-6pt}
\section{Weakly Supervised Learning}
%There are three spaces $\mathcal{X}$, $\mathcal{W}$, and $\mathcal{Y}$ corresponding to features, weak labels, and strong labels respectively. There is an unknown joint distribution $p(X,W,Y)$ on $\mathcal{X} \times \mathcal{W} \times \mathcal{Y}$. The definition allows for arbitrarily many sources of weak labeled data since if you have several sources of weak labels $W_i \in \mathcal{W}_i$ for $i \in [L]$ then we can simply consider them as a single variable $W = (W_1, \ldots, W_L) \in \prod_{i=1}^L \mathcal{W}_i =: \mathcal{W}$. We make the terminology ''weak'' precise with the following definition.
%\begin{defn}\label{def: weak label}
%We assume the following (deterministic) structured causal model (SCM):
%\begin{equation}
%\begin{split}
%&X := N_X, \\
%&W := g_0(X), \\
%&Y := f_0(X,W),
%\end{split}
%\end{equation}
%\subsection{Problem formulation}

We begin with some notation. The spaces $\cal X$ and $\cal Y$ denote as usual the space of features and strong labels. In \emph{weakly supervised learning}, we have in addition $\cal W$, the space of weak labels. We receive the tuple $(X,W,Y)$ drawn from the product space $\mathcal{X} \times \mathcal{W} \times \mathcal{Y}$. The goal is to then predict the strong label $Y$ using the features $X$, and  possibly benefiting from the related information captured by $W$.

More specifically, we work with two datasets: (1) a weakly labeled dataset $\mathcal{D}_m^\text{weak}$ of $m$ examples drawn independently from the marginal distribution $P_{X,W}$; and (2) a dataset $\mathcal{D}_n^\text{strong}$ of $n$ strong labeled examples drawn from the marginal $P_{X,Y}$. Typically, $n \ll m$.
We then use the weak labels to learn an embedding in a latent space $\mathcal{Z} \subset \mathbb{R}^s$. In particular, we assume that there exists an unknown ``good'' embedding $Z = g_0(X) \in \mathcal{Z}$, using which a linear predictor $\beta_{g_0}$ can determine $W$, i.e., $\beta_{g_0}^\top Z =\beta_{g_0}^\top  g_0(X) = W$. The strong equality assumption can be relaxed via an additive error term in our risk bounds that capture the risk of $\beta_{g_0}^\top  g_0$. \\
%Hence, we have the full joint distribution $P_{X,Z,W,Y}$. Very often, we will use the marginal $P_U = P_{X,Z,Y}$ on the space $\mathcal{U} = \mathcal{X} \times \mathcal{Z} \times \mathcal{Y}$.

Using the latent space $\cal Z$, we define two function classes: \emph{strong predictors} $\mathcal{F} \subset \{ f : \mathcal{X} \times \mathcal{Z} \rightarrow \mathcal{Y} \}$, and \emph{weak feature maps} $\mathcal{G} \subset \{ g : \mathcal{X} \rightarrow \mathcal{Z} \}$.  Later we will assume that class $\mathcal{F}$ is parameterized, and identify functions $f$ in $\mathcal{F}$ with parameter vectors. We then learn a predictor $f \in \mathcal{F}$ by replacing the latent vector $Z$ with an embedding $\hat{g}(X) \in \mathcal{Z}$ that we learn from weakly labeled data. Corresponding to these function classes we introduce two loss functions. \\

First, $\ell : \mathcal{Y} \times \mathcal{Y} \rightarrow \mathbb{R}_+$ measures loss of the strong predictor; we assume this loss to be continuously differentiable in its first argument. We will equivalently write % , we use the $\ell$ notation for predicting from $z \in \mathcal{Z}$, as $\ell_f(u) =
$\ell_{f} (x,z,y) := \ell (f(x,z),y)$ for predicting from a latent vector $z \in \mathcal{Z}$; similarly, for predicting from an estimate $\hat{z} = g(x)$, we write the loss as $\ell_{f(\cdot , g)} (x,y) := \ell (f(x,g(x)),y)$. \\

Second, $\ell^\text{weak} : \mathcal{W} \times \mathcal{W} \rightarrow \mathbb{R}_+$ measures loss for the weak task. This loss also applies to measuring loss of feature maps $g : \mathcal{X} \rightarrow \mathcal{Z}$, by using the best possible downstream linear classifier, i.e., $ \ell^\text{weak}_g(x,w) = \ell^\text{weak}(\beta_g ^\top g(x),w)$ where $ \beta_g \in \arg \min_{\beta \in \mathbb{R}^s}  \mathbb{E} \ell^{\text{weak}} (\beta ^\top g(X) , W)$. Our primary goal is to learn a model $\hat{h} = \hat{f}( \cdot , \hat{g} ): \mathcal{X} \rightarrow \mathcal{Y}$ that achieves low risk  $\mathbb{E}  [\ell_{\hat{h}}(X,Y)  ]$.
% for a loss function $\ell : \mathcal{F} \circ \mathcal{G} \times \mathcal{X} \times \mathcal{Y} \rightarrow \mathbb{R}_+$ where $ \mathcal{F} \circ \mathcal{G} $ denotes the composition function space $\{ f( \cdot, g(\cdot)) : f \in \mathcal{F}, g \in \mathcal{G} \}$. 
To that end, we seek to bound the \emph{excess risk}:
\begin{equation}
\label{eq:1}
\mathbb{E}_{P} [\ell_{\hat{h}}(X,Y) - \ell_{h^*}(X,Y) ],
\end{equation}
for $h^* = f^*( \cdot , g^*)$ where $g^*$ and $f^*$ are given by
\begin{align*}
  g^* &\in \mathrm{argmin}_{g \in \mathcal{G}}  \mathbb{E} [\ell^\text{weak}_g(X,W)],\\
  f^* &\in \mathrm{argmin}_{f \in \mathcal{F}}  \mathbb{E} [\ell_{f(\cdot, g^*)}(X,Y)].
\end{align*}
The comparison of $\hat{h}$ to $h^*$ based on the best weak task model $g^*$ is the most natural one for the feature transfer algorithm that we analyze (Algorithm \ref{algo:double-estimation}). We study the \emph{rate} at which the excess risk~\eqref{eq:1} goes to zero. Specifically, if the excess risk is $\mathcal{O}(n^{-\gamma})$, the learning rate is $\gamma$. We refer to $\gamma \leq 1/2$ as a \emph{slow rate}, and $\gamma \geq 1$ as a \emph{fast rate} (possibly ignoring logarithmic factors, i.e., $\widetilde{\mathcal{O}}(\nicefrac 1 n)$). When $1/2 < \gamma < 1$ we have \emph{intermediate rates}.

  \begin{figure}[t]
  \centering
  \includegraphics[width=55mm]{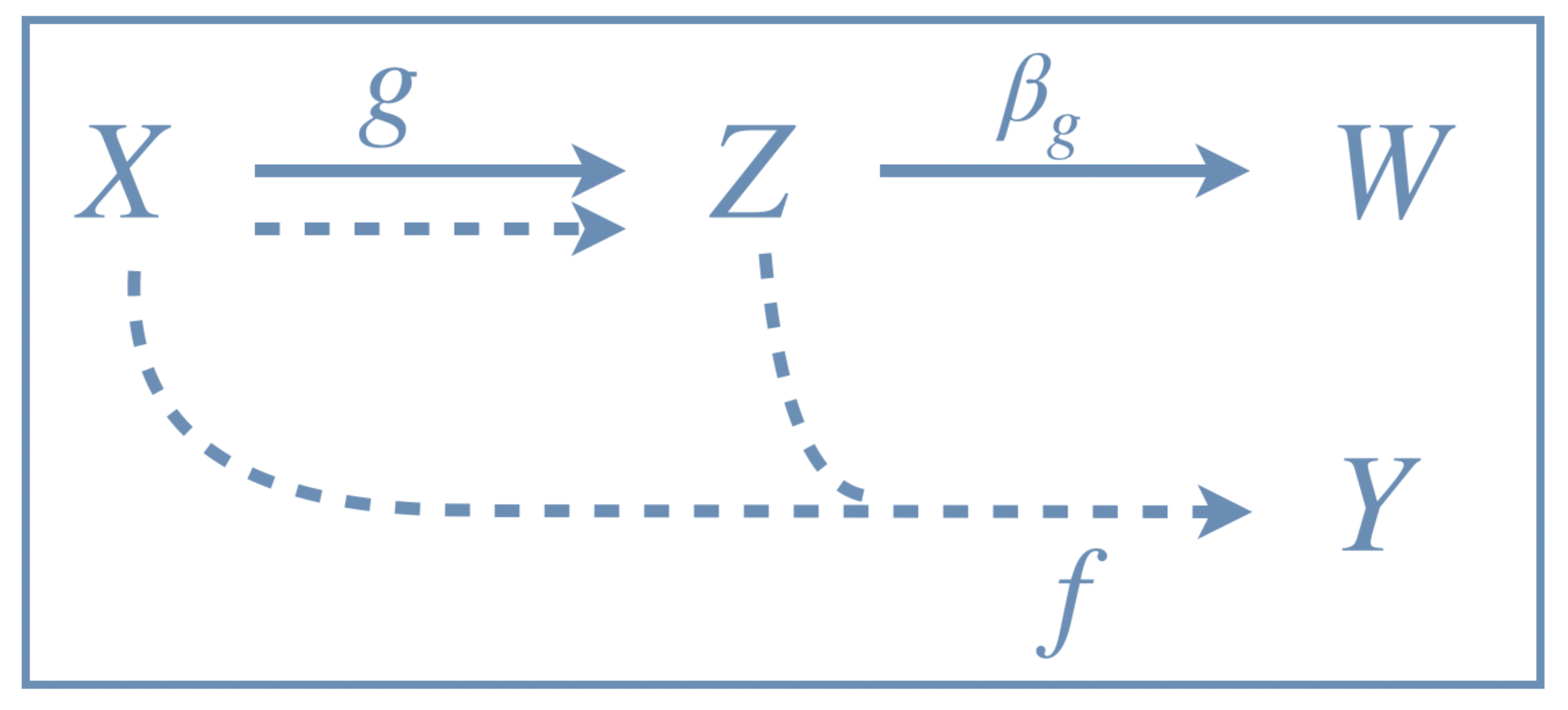}
  \caption{Schema for weakly supervised learning using Algorithm \ref{algo:double-estimation}. The dotted lines denote the flow of strong data, and the solid lines the flow of weak data.}
    \label{fig: schema}
 \end{figure}
 \vspace{-9pt}

\vspace{+3pt}
\subsection{Feature transfer meta-algorithm}

The algorithm we analyze solves two supervised learning problems in sequence. The first step runs an algorithm, 

\[ \hat{g} \leftarrow \text{Alg}_m(\mathcal{G}, P_{X,W})\]

on $m$ i.i.d.\ observations from $P_{X,W}$, and outputs a feature map $\hat{g} \in \mathcal{G}$. Using the resulting $\hat{g}$ we form an augmented dataset $\mathcal{D}_n^\text{aug} = \{(x_i, z_i, y_i) \}_{i=1}^n$, where $z_i := \hat{g}(x_i)$ for $(x_i,y_i) \in \mathcal{D}_n^\text{strong}$. Therewith, we have $n$ i.i.d.\ samples from the distribution $\hat{P}(X,Z,Y) := P(X,Y) \mathbb{1}\{ Z=\hat{g}(X)\}$. The second step then runs an algorithm,

\[ \hat{f} \leftarrow \text{Alg}_n(\mathcal{F}, \hat{P})\]

on $n$ i.i.d samples from $\hat{P}$, and outputs a  strong predictor $\hat{f} \in \mathcal{F}$. The final output is then simply the composition  $\hat{h} = \hat{f}( \cdot , \hat{g} )$. This procedure is summarized in Algorithm~\ref{algo:double-estimation} and the high level schema in Figure \ref{fig: schema}. \\

\begin{algorithm}[t]
    \caption{Feature transfer meta-algorithm}\small 
    \label{algo:double-estimation}
    \begin{algorithmic}[1] % The number tells where the line numbering should start
	    \STATE \textbf{input} $ \mathcal{D}_m^\text{weak}$, $ \mathcal{D}_n^\text{strong}$, $\mathcal{F}$, $\mathcal{G}$
	    \STATE Obtain weak predictor $\hat{g} \leftarrow \text{Alg}_m(\mathcal{G}, P_{X,W})$
	    \STATE  Form dataset $\mathcal{D}_n^\text{aug} = \{(x_i, z_i, y_i) \}_{i=1}^n$ where $z_i := \hat{g}(x_i)$ for $(x_i,y_i) \in \mathcal{D}_n^\text{strong}$
	    \STATE Define distribution $\hat{P}(X,Z,Y) = P(X,Y) \mathbb{1}\{ Z=\hat{g}(X)\}$
	    \STATE Obtain strong predictor $\hat{f} \leftarrow \text{Alg}_n(\mathcal{F}, \hat{P})$
            \STATE \textbf{return} $\hat{h}(\cdot) := \hat{f}( \cdot , \hat{g}(\cdot) )$
    \end{algorithmic}
\end{algorithm}
\vspace{-3pt}

%We consider a canonical feature transfer method that proceeds in two steps. First one learns a model for weak label prediction, and cuts off an internal representation $g$ (we shall generally view this model as being a deep network, but in principle any model class that can be decomposed into a feature map and a classifier can be considered) . One then trains a model for strong label prediction by augmenting the input features with the estimated weak label. 
Algorithm~\ref{algo:double-estimation} is generic because in general the two supervised learning steps can use any learning algorithm. Our analysis treats the case where $\text{Alg}_n(\mathcal{F}, \hat{P})$ is empirical risk minimization (ERM) but is \emph{agnostic} to the choice of learning algorithm $\text{Alg}_m(\mathcal{G}, P_{X,W})$. Our results use high level properties of these two steps, in particular their generalization error, which we introduce next. We break the generalization analysis into two terms depending on the bounds for each of the two supervised learning steps. We introduce here the notation $\text{Rate}(\cdot)$ to enable a more convenient discussion of these rates. 
% but first define the risk associated to a  feature map $g : \mathcal{X} \rightarrow \mathcal{Z}$ outputted by the first supervised learning step. 
%plays a particularly important role in the forthcoming analysis. To proceed we need a way of quantifying two properties of $g$: 1) how good is the representation for performing weak prediction, and 2) how it useful is it for learning a strong predictor. The second property is quantified by the central condition, which is introduced in the next section. 
%\begin{defn}
%For a feature map $g : \mathcal{X} \rightarrow \mathcal{Z}$, we define $\beta_g \in \mathbb{R}^s$ to be such that 
%\[ \beta_g \in \arg \min_\beta  \mathbb{E}_P \ell^{\text{weak}} (\beta ^\top g(X) , W) \]
%where $\ell^\text{weak} : \mathcal{W} \times \mathcal{W} \rightarrow \mathbb{R}_+$ and the corresponding risk is referred to at the risk of $g$. 
%\end{defn}
%Since we wish to formulate our results to as to be agnostic to the particular supervised learning algorithms employed, we define the following black-box algorithms.
We describe our notation in the format of definitions to expedite the statement of the theoretical results in Section \ref{sec: analysis}.

%\begin{assumption} [Categorical $\mathcal{W}$ space]
%\label{assumption: rates categorical}
%takes as input a function space $\mathcal{G} $ and dataset $ \mathcal{D}_m^\text{weak}$ of $m$ i.i.d. samples from $P(X,W)$ and 
%	There is a (possibly randomized) learning algorithm $\text{Alg}_m(\mathcal{G}, P_{X,W}; \delta)$ that takes as input a function class $\mathcal{G}$ and $m$ i.i.d samples from $P_{X,W}$ and returns a weak feature map $\hat{g} \in \mathcal{G}$ such that,
	
%\[  \mathbb{P}( \beta_{\hat{g}}^\top \hat{g}(X) \neq W )  \leq \text{Rate}_m(\mathcal{G}, P_{X,W} ; \delta) \]
	
%	with probability at least $1- \delta$.
%\end{assumption} 
 
%The corresponding property for continuous  $\mathcal{W}$-space is as follows.

\begin{defn}[Weak learning]
  \label{assumption: rates regression}
  % takes as input a function space $\mathcal{G} $ and dataset $ \mathcal{D}_m^\text{weak}$ of $m$ i.i.d. samples from $P(X,W)$ and 
  Let $ \text{Rate}_m(\mathcal{G}, P_{X,W} ; \delta)$ be such that a (possibly randomized) algorithm $\text{Alg}_m(\mathcal{G}, P_{X,W})$ that takes as input a function class $\mathcal{G}$ and $m$ i.i.d.\ observations from $P_{X,W}$, returns a weak predictor $\hat{g} \in \mathcal{G}$ for which,\\[-8pt]
  \[   \mathbb{E}_P \ell^{\text{weak}} _{\hat{g}}(  X , W) \leq \text{Rate}_m(\mathcal{G}, P_{X,W} ; \delta), \]
  with probability at least $1- \delta$.
\end{defn} 

We are interested in two particular cases of loss function $\ell^\text{weak}$: (i) $\ell^\text{weak}(w,w') = \mathbb{1}\{ w \neq w'\}$ when $\mathcal{W}$ is a categorical space; and (ii) $\ell^\text{weak}(w,w') = \norm{w-w'}$ (for some norm $\norm{\cdot}$ on $\mathcal{W}$) when $\mathcal{W}$ is a continuous space.
%Assumption \ref{assumption: rates categorical} is, in a sense, a special case of Assumption \ref{assumption: rates regression} where one takes $\mathcal{W}$ to be a discrete space with the Hamming distance\footnote{We are abusing the vector space notation in the case since we are actually dealing with a discrete metric space}, i.e. $0-1$ indicator function. 
%We shall observe that it is possible to obtain stronger results for the categorical case. For the second supervised learning step, we make the following assumption.
%Importantly, although the final risk is measured using $\ell^\text{weak}$, the learning algorithm 

\begin{defn}[Strong learning]
  \label{assumption: step two rates}
  Let $ \text{Rate}_n(\mathcal{F}, Q; \delta)$ be such that a (possibly randomized) algorithm $\text{Alg}_n(\mathcal{F}, Q)$ that takes as input a function space $\mathcal{F} $, and $n$ i.i.d.\ observations from a distribution $Q(\mathcal{X} \times \mathcal{Z} \times \mathcal{Y})$, returns a strong predictor $\hat{f} \in \mathcal{F}$ for which,
  \vspace{-3pt}
  \[  \mathbb{E} _{U \sim Q }\big [\ell_{\hat{f}}( U) -\ell_{f^*}(U)\big  ]   \leq \text{Rate}_n(\mathcal{F}, Q; \delta) \]
  \vspace{-3pt}
  with probability at least $1- \delta$.
\end{defn} 
%Note the apparently innocuous yet actually critical change in distribution over which the last expectation is calculated. The loss is computed with respect to $\hat{P}(Z) := P(X, \hat{g}(X), Y)$, which amounts to supposing that your weak predictor $\hat{g}$ is perfect.
Henceforth, we drop $\delta$ from the rate symbols, for example writing $\text{Rate}_m(\mathcal{G}, P_{X,W})$ instead of $\text{Rate}_m(\mathcal{G}, P_{X,W} ; \delta)$. It is important to note that the algorithms $\text{Alg}_m(\mathcal{G}, P_{X,W})$  and $\text{Alg}_n(\mathcal{F}, Q)$ can use any loss functions during training. This is because the only requirement we place is that they imply generalization bounds in terms of the losses $\ell^\text{weak}$ and $\ell$ respectively. For concreteness, our analysis focuses the case where  $\text{Alg}_n(\mathcal{F}, Q)$ is ERM using loss $\ell$. 

% We stress however, that the ERM \emph{does not} have to use $\ell$ so long as the loss function used is continuous. This therefore allows the use of surrogate loss functions such as the cross-entropy loss for classification. 
%use Assumption \ref{assumption: step two rates} holds, but we shall treat the two cases Assumptions \ref{assumption: rates categorical} and \ref{assumption: rates regression} separately. Hereafter we shall suppress the dependence on $\delta$ from our notation - e.g. writing $\text{Rate}_n(\mathcal{F}, Q)$  instead of $\text{Rate}_n(\mathcal{F}, Q; \delta)$  - in order to reduce clutter. 
%Note carefully that Assumption \ref{assumption: rates categorical} requires a bound on the probability of incorrectly predicting $W$, which will generally speaking only be possible when $\mathcal{W}$ is discrete. 
%From an algorithmic perspective, this can be viewed as a generalization of classical supervised learning. Indeed if one replaces $\mathcal{F}$ by the subset of functions depending only on the first variable $\mathcal{F}_\text{strong} := \{ f(\alpha, \beta) = f  \in \mathcal{F} : f \  \text{depends only on}  \ \alpha \}$ Algorithm \ref{algo: double estimation} reduces to supervised learning. 

%%% Local Variables:
%%% mode: latex
%%% TeX-master: "fast_main"
%%% End:

\section{Excess Risk Analysis}\label{sec: analysis}
In this section we analyze Algorithm \ref{algo:double-estimation} with the objective of obtaining high probability excess risk bounds (see~\eqref{eq:1}) for the strong predictor $\hat{h} = \hat{f}(\cdot , \hat{g})$. Informally, the main theorem we prove is the following.

\begin{theorem}[Informal]\label{thm:informal}
Suppose that $\text{Rate}_m(\mathcal{G}, P_{X,W} ) = \mathcal{O}(m^{-\alpha})$ and that $\text{Alg}_n(\mathcal{F},\hat{P})$ is ERM. Under suitable assumptions on $(\ell, P, \mathcal{F})$, Algorithm \ref{algo:double-estimation} obtains excess risk,
\begin{equation*}
\mathcal{O} \bigl(  \frac{ \alpha \beta \log n + \log(1/\delta)}{n} + \frac{ 1 }{n^{\alpha\beta}}   \bigr)    
\end{equation*}
with probability $1- \delta$, when $m = \Omega(n^{\beta})$ for $\mathcal{W}$ discrete, or $m = \Omega(n^{2\beta})$ for $\mathcal{W}$ continuous.
\end{theorem}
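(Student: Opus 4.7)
The plan is to decompose the excess risk into a standard ERM term and a feature-transfer gap. Writing $f^*_g \in \arg\min_{f \in \mathcal{F}}\mathbb{E}[\ell_{f(\cdot,g)}(X,Y)]$ so that $f^* = f^*_{g^*}$, I would split
\begin{equation*}
\mathbb{E}[\ell_{\hat{h}} - \ell_{h^*}]
  = \underbrace{\mathbb{E}\big[\ell_{\hat{f}(\cdot,\hat{g})} - \ell_{f^*_{\hat{g}}(\cdot,\hat{g})}\big]}_{(A)} \;+\; \underbrace{\mathbb{E}\big[\ell_{f^*_{\hat{g}}(\cdot,\hat{g})} - \ell_{f^*(\cdot,g^*)}\big]}_{(B)},
\end{equation*}
and condition throughout on the $\sigma$-algebra generated by $\mathcal{D}_m^{\text{weak}}$, so that the $n$ strong samples fed to the second stage are i.i.d.\ from the data-dependent distribution $\hat{P}$. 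Term $(A)$ is an in-distribution ERM excess risk that can be attacked via Definition~\ref{assumption: step two rates}, and term $(B)$ is the price of working with $\hat{g}$ instead of the unobservable $g^*$.

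For $(A)$, I would invoke a fast-rate ERM bound driven by the central condition. The condition is assumed (through ``suitable assumptions on $(\ell,P,\mathcal{F})$'') on the idealized distribution $P^* = P(X,Y)\mathbb{1}\{Z = g^*(X)\}$; the work is to transfer it to $\hat{P}$ up to a slack that tracks the weak rate. I would work with the $\eta$-central formulation, showing that $\hat{P}$ satisfies an $\eta$-central condition with $\eta$ of order $n^{-\alpha\beta}$, and then plug this into a standard localized Rademacher (or offset complexity) argument. The usual $\log(1/\eta)$ penalty for picking a small mixability parameter is precisely what supplies the $\alpha\beta\log n$ term in the final bound, while the $\log(1/\delta)/n$ confidence term is standard.

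Term $(B)$ is where the discrete/continuous distinction enters. In the discrete case $\ell^{\text{weak}}(w,w') = \mathbb{1}\{w\neq w'\}$, the weak rate $\mathcal{O}(m^{-\alpha})$ directly upper bounds the probability that $\hat{g}$ and $g^*$ induce different downstream linear classifications; on the complementary event the integrand in $(B)$ vanishes and on the exceptional event it is bounded by a constant, giving $(B) = \mathcal{O}(m^{-\alpha}) = \mathcal{O}(n^{-\alpha\beta})$ when $m = \Omega(n^\beta)$. In the continuous case, continuous differentiability of $\ell$ together with Lipschitz control of $f^*_g$ in its latent argument bounds $(B)$ by a constant times $\mathbb{E}\|\hat{g}(X) - g^*(X)\|$. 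Translating the weak norm-loss $\mathbb{E}\|\beta_{\hat{g}}^\top\hat{g}(X) - W\| \lesssim m^{-\alpha}$ into a bound on $\mathbb{E}\|\hat{g}(X) - g^*(X)\|$ costs a square root (via Jensen combined with the optimality of $\beta_g$ and the realizability assumption $W = \beta_{g_0}^\top g_0(X)$), so matching the target $n^{-\alpha\beta}$ requires $m^{-\alpha/2} \lesssim n^{-\alpha\beta}$, i.e., $m = \Omega(n^{2\beta})$.

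The main obstacle, and where I expect the bulk of the technical effort, is the transfer of the central condition from $P^*$ to the data-dependent $\hat{P}$ in $(A)$ while carefully coupling the two learning stages. This step is what binds the $\alpha\beta\log n$ correction in $(A)$ to the algebraic $n^{-\alpha\beta}$ penalty in $(B)$ so that they combine cleanly. Once that transfer is in hand, a union bound across the two high-probability events from Definitions~\ref{assumption: rates regression} and~\ref{assumption: step two rates}, applied at confidence $\delta/2$, yields the claimed $\mathcal{O}\bigl((\alpha\beta\log n + \log(1/\delta))/n + n^{-\alpha\beta}\bigr)$ bound with probability $1-\delta$.
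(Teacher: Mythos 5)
Your high-level architecture matches the paper's: decompose the excess risk into an ERM term on the surrogate problem $(\ell,\hat{P},\mathcal{F})$ plus a feature-transfer gap, transfer the central condition from the idealized distribution to $\hat{P}$ with a slack controlled by the weak rate, and then invoke a fast-rate ERM bound under the weakened condition (the paper's Propositions \ref{thm: general bound}, \ref{prop: categorical strong implies epsilon weak}/\ref{prop: regression strong implies epsilon weak}, and \ref{prop: generalization for weak central condition}). Two of your design choices differ in detail but are defensible: the paper keeps the mixability parameter $\eta$ fixed and relaxes the condition additively to $\mathbb{E}[e^{-\eta\Delta_f}]\le e^{\eta\varepsilon}$, proving a bespoke covering-plus-Cram\'er--Chernoff ERM bound in which the $\alpha\beta\log n$ term comes from the covering radius being tied to $\varepsilon\sim n^{-\alpha\beta}$, whereas you propose shrinking $\eta$ to order $n^{-\alpha\beta}$ and using localized complexities, attributing the $\log n$ to a $\log(1/\eta)$ penalty. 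Either route is plausible, but you should be aware they are not the same relaxation.

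The genuine gap is in where you locate the discrete/continuous distinction and the square root. You place both in term $(B)$, claiming that converting the weak excess risk $\mathbb{E}\|\beta_{\hat{g}}^\top\hat{g}(X)-W\|\lesssim m^{-\alpha}$ into a bound on $\mathbb{E}\|\hat{g}(X)-g^*(X)\|$ ``costs a square root via Jensen combined with the optimality of $\beta_g$.'' This step does not follow: small excess risk of the linear readout does not imply closeness of the feature maps themselves without an identifiability or strong-convexity assumption you have not stated, and Jensen gives no square root here. The paper avoids this entirely by defining relative Lipschitzness with respect to the pushforward pseudometric $(z,z')\mapsto\ell^{\text{weak}}(\beta_g^\top z,\beta_{g'}^\top z')$, so the analogue of your $(B)$ is bounded by $2L\,\text{Rate}_m(\mathcal{G},P_{X,W})$ in \emph{both} cases with no square root. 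The square root that forces $m=\Omega(n^{2\beta})$ in the continuous case actually arises in the step you defer as ``the main obstacle'': transferring the central condition to $\hat{P}$. There one splits the exponential moment on the event $\{\|\beta_{\hat{g}}^\top\hat{g}(X)-W\|\le\delta/L\}$ and its complement, pays $2\delta$ on the good event via relative Lipschitzness, bounds the bad event by $(L/\delta)\text{Rate}_m$ via Markov, and optimizes over $\delta$ to get $\varepsilon+\mathcal{O}(\sqrt{L\,\text{Rate}_m})$; in the discrete case the exceptional event already has probability $\text{Rate}_m$, so no square root appears. Your event-splitting idea for the discrete case is exactly right, but it needs to be executed inside the exponential moment of the central condition, not on the risk difference $(B)$, and without the continuous-case truncation-and-Markov argument your proposal does not yet establish the $m=\Omega(n^{2\beta})$ claim for the right reason.
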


For the prototypical scenario where $\text{Alg}_m(\mathcal{G}, P_{X,W} ) = \mathcal{O}(\nicefrac{1}{\sqrt{m}})$, one obtains fast rates when $m = \Omega(n^{2})$, and $m = \Omega(n^{4})$, in the discrete and continuous cases, respectively. More generally, if $\alpha \beta < 1$ then $\mathcal{O}(n^{-\alpha \beta})$ is the dominant term and we observe intermediate or slow rates. \\

In order to obtain any such result, it is necessary to quantify how the weak and strong tasks relate to one another -- if they are completely unrelated, then there is no reason to expect the representation $\hat{g}(X)$ to benefit the strong task. The next subsection introduces the \emph{central condition} and a relative Lipschitz property, which embody the assumptions used for relating the weak and strong tasks. Roughly, they ask that $g_0(X)$ is a useful representation for the strong task.

%In general it would be impossible to obtain any such result without assumptions on how the weak and strong tasks relate to one another - if they are completely unrelated then there is no reason to expect the representation $\hat{g}(X)$ to benefit the strong prediction task. It is these task relatedness assumptions that are captured in the vague predicate  \emph{``Under suitable assumptions...''}. In the next subsection we introduce the \emph{central condition} and a relative Lipschitz property, which play the role of these assumptions, and are used to relate the weak and strong tasks to one another. 

%Since it is well known that one can obtain rates of $\widetilde{\mathcal{O}}(1/n)$ using the central condition in the supervised setting \cite{mehta2016fast, van2015fast}, it seems that it is possible to obtain this rate (which is faster than our bound) using a dataset of triplets $\{ x_i,w_i,y_i\}_{i=1}^n$. However, the learned model in this case is a map $\mathcal{X} \times \mathcal{W} \rightarrow \mathcal{Y}$. This is problematic since one would then require the weak label as an input each time one uses the model for inference - that is, the weak labels is really just another feature. We are instead interested in the setting where weak labels really are labels, and are not available at test time. It is for this reason that we cannot yet pack up and go home, and through a more careful analysis obtain the rate in Theorem \ref{thm: informal fast rates}.

\subsection{Relating weak and strong tasks}
In this section we introduce the central condition and our relative Lipschitz assumption for quantifying task relatedness. The Lipschitz property requires that small perturbations to the feature map $g$ that do not hurt the weak task, do not affect the strong prediction loss much either. %, in the sense the perturbation remains 

\begin{defn}
We say that $f$ is $L$-Lipschitz relative to $\mathcal{G}$ if for all $x \in \mathcal{X} $, $y \in \mathcal{Y} $, and $g,g' \in \mathcal{G}$,\\
\begin{equation*}
  |\ell_{f(\cdot, g)}( x, y) - \ell_{f(\cdot, g)}( x, y) | \leq L \ell^\text{weak}(\beta_g ^\top g(x),
  \beta_{g'}^\top g'(x))).
\end{equation*} \\
We say the function class $\mathcal{F}$ is $L$-Lipschitz relative to $\mathcal{G}$, if every $f \in \mathcal{F}$ is $L$-Lipschitz relative to $\mathcal{G}$. 
\end{defn}

This Lipschitz terminology is justified since the domain uses the pushforward pseudometric $(z,z') \mapsto \ell^\text{weak}(\beta_g ^\top z, \beta_{g'}^\top z')$, and the range is a subset of $\mathbb{R}_+$. In the special case where $\mathcal{Z} = \mathcal{W}$, and $g(X)$ is actually an estimate of the weak label $W$, our Lipschitz condition reduces to $|\ell_{f(\cdot, g)}( x, y) - \ell_{f(\cdot, g)}( x, y) | \leq L \ell^\text{weak}(g(x),g'(x))$, i.e., conventional Lipschitzness of $\ell (f(x,w),y)$ in $w$. \\
%Using this assumption we obtain the following general bound.
%We shall show it is a sufficient condition for obtaining fast rates for weakly supervised learning in the best case, and intermediate rates in other cases. 
%The central condition itself is well known to be sufficient condition for fast rates in supervised learning \cite{van2015fast}. In other words, if our problem satisfied the central condition

The central condition is well-known to yield fast rates for supervised learning~\cite{van2015fast}; it  directly implies that we could learn a map $(X,Z) \mapsto Y$ with $\widetilde{\mathcal{O}}(1/n)$ excess risk. The difficulty with this naive view is that at test time we would need access to the latent value $Z = g_0(X)$, an implausible requirement. To circumnavigate this hurdle, we replace $g_0$ with $\hat{g}$ by solving the supervised problem $(\ell, \hat{P}, \mathcal{F})$, for which we will have access to data.\\

But it is not clear whether this surrogate problem would continue to satisfy the central condition. One of our main theoretical contributions is to show that $(\ell, \hat{P}, \mathcal{F})$ indeed satisfies a weak central condition (Theorems \ref{prop: categorical strong implies epsilon weak} and \ref{prop: regression strong implies epsilon weak}), and to show that this weak central condition still enables strong excess risk guarantees (Theorem~\ref{prop: generalization for weak central condition}). We are now ready to define the central condition. In essence, this condition requires that $(X,Z)$ is highly predictive of $Y$, which, combined with the fact that $g_0(X)=Z$ has zero risk on $W$ links the weak and strong tasks together. 

\begin{defn}[The Central Condition]\label{def: central condition}
A learning problem $(\ell, P, \mathcal{F} )$ on $\mathcal{U} := \mathcal{X} \times \mathcal{Z} \times \mathcal{Y}$ is said to satisfy the \emph{$\varepsilon$-weak $\eta$-central condition} if there exists an $f^* \in \mathcal{F}$ such that\\

\begin{equation*}
  \mathbb{E}_{U\sim P(\mathcal{U})}[ e^{-\eta ( \ell_{f}(U) - \ell_{f^*}(U))}] \leq e^{ \eta \varepsilon},
\end{equation*} \\
for all $f \in \mathcal{F}$. The $0$-weak central condition is known as the strong central condition. 
\end{defn}
We drop the $\eta$ notation when it is being viewed as constant. For the strong central condition, Jensen's inequality implies that $f^*$ must satisfy $\mathbb{E}_P [\ell_{f^*}(U)] \leq \mathbb{E}_P[ \ell_{f}(U)]$ for all $f \in \mathcal{F}$.  The strong central condition is therefore a stronger requirement than the assumption that $\inf_{f \in \mathcal{F} }\mathbb{E}_P[ \ell_{f}(U)]$ is attained. Note that the weak central condition becomes stronger as $\varepsilon$ decreases. Later we derive generalization bounds that improve accordingly as $\varepsilon$ decreases. Before continuing, we take a digression to summarize the central condition's connections to other theory of fast rates.

\vspace*{-6pt}
\paragraph{The central condition and related conditions.}
The central condition unifies many well-studied conditions known to imply fast rates~\citep{van2015fast}, including Vapnik and Chervonenkis' original condition, that there is an $f^* \in \mathcal{F}$ with zero risk \cite{vapnik:264, vapnik1974theory}. The popular strong-convexity condition \cite{kakade2009generalization, lecue2014optimal} is also a special case, as is (stochastic) exponential concavity, which is satisfied by density estimation: where $\mathcal{F}$ are probability densities, and $\ell_f(u) = - \log f(u)$ is the logarithmic loss \cite{audibert2009fast, juditsky2008learning, dalalyan2012mirror}. 
Another example is Vovk mixability \cite{vovk1990aggregating, vovk1998game}, which holds for online logistic regression \cite{ foster2018logistic}, and also holds for uniformly bounded functions with the square loss. A modified version of the central condition also generalizes the Bernstein condition and Tysbakov's margin condition \cite{bartlett2006empirical, tsybakov2004optimal}.
%The central condition also implies (essential) uniqueness of the risk minimizer \cite{van2015fast}, a known necessary condition for fast rates in the case of $\ell_2$ loss \cite{mendelson2008lower}. 

%\paragraph{Some examples satisfying the central condition} 
%There are many notable situations in which the strong central condition holds. Vapnik and Chervonenkis' original condition, that there is an $f^* \in \mathcal{F}$ with zero risk is one example \cite{vapnik:264, vapnik1974theory}. Another example is density estimation, where $\mathcal{F}$ is a collection of probability densities, and $\ell_f(u) = - \log f(u)$ is the logarithmic loss. In this case $\ell$ is $1$-exp-concave, and hence the $1$-central condition holds \cite{van2015fast}. Online logistic regression also satisfies the central condition \cite{ foster2018logistic}. Furthermore, for a space $\mathcal{F}$ of functions uniformly bounded by some $V >0$, the square loss is $1/V^2$-Vovk mixable, implying the central condition holds \cite{van2015fast}. 

\vspace*{-6pt}
\paragraph{Capturing task relatedness with the central condition.} 
%While the central condition is heavily dependent on all three elements of the triplet $(\ell, P, \mathcal{F} )$, the dependence on $P$ is most fundamental for encoding task relatedness. 
Intuitively, the strong central condition requires that the minimal risk model $f^*$ attains a higher loss than $f \in \mathcal{F}$ on a set of $U = (X,Z,Y)$ with exponentially small probability mass.  This is likely to happen when $(X,Z)$ is highly predictive of $Y$ so that the probability mass of $P(Y|X,Z)$ concentrates in a single location for most $(X,Z)$ pairs. In other words, $(X,Z)$ is highly predictive of $Y$. Further, if $f^*$ in $\mathcal{F}$ such that $f^*(X,Z)$ maps into this concentration, then $\ell_{f^*}(U)$ will be close to zero most of the time, making it probable that Definition \ref{def: central condition}  holds. \\

%if $P$ is such that $P(Y | X,W)$ has low entropy, then the weak label $W$ has the necessary complimentary information to $X$ so as to determine the strong label. More precisely, rewriting the central condition as a double integral over $P(Y|X,W)$ and $P(X,W)$ one sees that: if $H(Y | X,W)$ is very close to zero for all but an exponentially small set of $(X,W)$ values, and $\mathcal{F}$ is sufficiently expressive so as to contain an element corresponding to mapping $(X,W)$ to the concentration of $H(Y | X,W)$, then the central condition is likely to hold.      

%For the remainder of this section we shall introduce a change of perspective that is nothing more than a change of notation, but which neatly rephrases the second step of the two-step algorithm as a normal supervised learning task using the unaltered dataset $\mathcal{D}_n^\text{strong}$ of samples $(x,y) \sim P(X,Y)$.  Specifically, we shall replace the population distribution $P$ with $\hat{P}(U) = \hat{P}(X,Z,Y) = P(X,Y)\mathbb{1}\{ Z = \hat{g}(X) \}$. 
%Furthermore, since we shall always be considering models of the form $h(\cdot) = f( \cdot, \hat{g}(\cdot))$ for some $f$, we shall hereafter write $\ell_f$ in place of $\ell_h$. 
%Step $5$ of Algorithm \ref{algo:double-estimation} is precisely to solve the supervised learning problem $(\ell,  \hat{P}, \mathcal{F})$ using augmented data  $\mathcal{D}_n^\text{aug}$. 
%is an exact rephrasing of steps $3$ and $4$ of Algorithm \ref{algo:double-estimation}.

We also assume that the strong central condition holds for the learning problem $(\ell, P, \mathcal{F} )$ with $P = P_U= P_{X,Z,Y}$ where $Z = g_0(X)$. 
%The following lemma shows that assuming the central condition holds for $P(X,W,Y)$ is a weaker assumption than assuming it for $P(X,Y)$, hence validating that there may be scenarios where (as we shall see later) our weakly supervised learning algorithm yields fast rates but ERM with the strongly labeled data does not.
%\begin{Conjecture}
%Suppose $\mathcal{F}'  \subset \mathcal{F}$. If $(\ell, P, \mathcal{F}')$  satisfies the strong central condition, then so does $(\ell, P, \mathcal{F})$. In particular this holds in our context with $\mathcal{F}_{\text{strong}} \subset \mathcal{F}$. 
%\end{Conjecture}
%\begin{proof}
%Need to modify statement since it is trivially untrue in its current form.
%\end{proof}
But as noted earlier, since $Z$ is not observable at test time, we cannot simply treat the problem as a single supervised learning problem. Therefore, obtaining fast or intermediate rates is a nontrivial challenge. We approach this challenge by splitting the learning procedure into two supervised tasks (Algorithm \ref{algo:double-estimation}). In its second step, Algorithm~\ref{algo:double-estimation} replaces $(\ell, P, \mathcal{F} )$ with  $(\ell, \hat{P}, \mathcal{F})$. Our strategy to obtain generalization bounds is first to guarantee that $(\ell,  \hat{P}, \mathcal{F})$ satisfies the weak central condition, and then show that the weak central condition implies the desired generalization guarantees. \\
%A simple but important observation for the analysis is that $\hat{P}(X,Y) = P(X,Y)$. 

The rest of this section develops the theoretical machinery needed for obtaining our bounds. We summarize the key steps of our argument below.
\begin{enumerate}
  \setlength{\itemsep}{-1pt}
\item Decompose the excess risk into two components: the excess risk of the weak predictor and the excess risk on the learning problem $(\ell, \hat{P},\mathcal{F})$ (Proposition \ref{thm: general bound}).
\item Show that the learning problem $(\ell,  \hat{P}, \mathcal{F})$ satisfies a relaxed version of the central condition - the ``weak central condition'' (Propositions \ref{prop: categorical strong implies epsilon weak} and \ref{prop: regression strong implies epsilon weak}).
\item Show that the $\varepsilon$-weak central condition yields excess risk bounds that improve as $\varepsilon$ decreases (Prop.~\ref{prop: generalization for weak central condition}).
\item Combine all previous results to obtain generalization bounds for Algorithm  \ref{algo:double-estimation} (Theorem~\ref{thm: fast rates}).
\end{enumerate}
	%yields generalization guarantees of $\widetilde{\mathcal{O}}\big (\nicefrac{\alpha\beta}{n} + \nicefrac{1}{n^{\alpha\beta}} \big )$  (Theorem \ref{thm: fast rates}).

\subsection{Generalization Bounds for Weakly Supervised Learning}
The first item on the agenda is Proposition \ref{thm: general bound} which obtains a generic bound on the excess risk in terms of $\text{Rate}_m(\mathcal{G}, P_{X,W} ) $ and $\text{Rate}_n(\mathcal{F}, \hat{P})$. 
\begin{prop}[Excess risk decomposition]\label{thm: general bound}
%Suppose Assumption \ref{assumption: step two rates} holds, and one of Assumptions \ref{assumption: rates categorical} and \ref{assumption: rates regression} hold. 
%Suppose $\ell_h$ is uniformly $$-Lipschitz\footnote{Lipschitzness of $\ell_h$ with respect to $h$ uses the $L^1$-norm on the space of models $\Vert h \Vert = \mathbb{E}_X \Vert h(X) \Vert_\mathcal{Y}$, where $\Vert \cdot \Vert_\mathcal{Y}$ is the norm equipped to the space $\mathcal{Y}$.} in $h$ and 
Suppose that $f^*$ is $L$-Lipschitz relative to $\mathcal{G}$. Then the excess risk  $\mathbb{E}  [\ell_{\hat{h}}(X,Y) - \ell_{h^*}(X,Y) ]$ is bounded by,
\[ 2L \text{Rate}_m(\mathcal{G}, P_{X,W} ) +  \text{Rate}_n(\mathcal{F}, \hat{P}).  \]
\end{prop}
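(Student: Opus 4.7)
The plan is to decompose the excess risk of $\hat{h} = \hat{f}(\cdot, \hat{g})$ against $h^* = f^*(\cdot, g^*)$ by inserting an intermediate predictor that isolates the two sources of error: the ERM error on the learning problem $(\ell,\hat P,\mathcal{F})$, and the error incurred by swapping $g^*$ for $\hat g$. To this end, introduce the auxiliary predictor $\tilde f^* \in \arg\min_{f\in\mathcal{F}} \mathbb{E}_P[\ell_{f(\cdot,\hat g)}(X,Y)]$, i.e.\ the risk minimizer under the augmented distribution $\hat P$, and write
\[ \ell_{\hat f(\cdot,\hat g)} - \ell_{f^*(\cdot,g^*)} = \bigl[\ell_{\hat f(\cdot,\hat g)} - \ell_{\tilde f^*(\cdot,\hat g)}\bigr] + \bigl[\ell_{\tilde f^*(\cdot,\hat g)} - \ell_{f^*(\cdot,g^*)}\bigr]. \]

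The first bracket, after taking $P$-expectations, equals $\mathbb{E}_{U\sim\hat P}[\ell_{\hat f}(U) - \ell_{\tilde f^*}(U)]$, because the $(X,Y)$-marginal of $\hat P$ is $P_{X,Y}$ and $Z=\hat g(X)$ deterministically; by Definition \ref{assumption: step two rates} (applied with $Q=\hat P$) this is at most $\text{Rate}_n(\mathcal{F},\hat P)$. For the second bracket, the minimality of $\tilde f^*$ for the $\hat P$-risk gives $\mathbb{E}_P[\ell_{\tilde f^*(\cdot,\hat g)}(X,Y)] \leq \mathbb{E}_P[\ell_{f^*(\cdot,\hat g)}(X,Y)]$, so it suffices to bound $\mathbb{E}_P[\ell_{f^*(\cdot,\hat g)}(X,Y) - \ell_{f^*(\cdot,g^*)}(X,Y)]$. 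Invoking the $L$-Lipschitz assumption on $f^*$ pointwise and then integrating yields
\[ \mathbb{E}_P[\ell_{f^*(\cdot,\hat g)}(X,Y) - \ell_{f^*(\cdot,g^*)}(X,Y)] \leq L\,\mathbb{E}_P\bigl[\ell^{\text{weak}}(\beta_{\hat g}^\top \hat g(X),\beta_{g^*}^\top g^*(X))\bigr]. \]

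Finally, since both the $0$-$1$ loss and every norm-based loss considered in the paper are metrics, the triangle inequality bounds the right-hand side by $L\bigl(\mathbb{E}_P[\ell^{\text{weak}}_{\hat g}(X,W)] + \mathbb{E}_P[\ell^{\text{weak}}_{g^*}(X,W)]\bigr)$. The first summand is at most $\text{Rate}_m(\mathcal{G},P_{X,W})$ by Definition \ref{assumption: rates regression}, and the second also, since $g^*$ minimizes the weak population risk over $\mathcal{G}$ and thus $\mathbb{E}_P[\ell^{\text{weak}}_{g^*}(X,W)]\leq \mathbb{E}_P[\ell^{\text{weak}}_{\hat g}(X,W)]\leq \text{Rate}_m(\mathcal{G},P_{X,W})$. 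Summing the two contributions gives the advertised bound.

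The main thing to be careful about is distinguishing the two distinct ``optimal'' strong predictors in play — the paper's $f^*$ (minimizer when the embedding is $g^*$) versus the ERM comparator $\tilde f^*$ (minimizer when the embedding is $\hat g$) — and routing the decomposition so that the Lipschitz hypothesis is applied to $f^*$, for which it is assumed. A secondary point worth flagging explicitly is the use of triangle inequality for $\ell^{\text{weak}}$, which singles out the two loss families $\ell^{\text{weak}}(w,w')=\mathbb{1}\{w\neq w'\}$ and $\ell^{\text{weak}}(w,w')=\|w-w'\|$ highlighted after Definition \ref{assumption: rates regression}.
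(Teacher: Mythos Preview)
Your proof is correct and follows essentially the same strategy as the paper: decompose, bound the $\hat P$-excess risk by $\text{Rate}_n$, and use the relative Lipschitz property of $f^*$ together with optimality of $g^*$ to control the remainder by $2L\,\text{Rate}_m$. Two small deviations are worth noting. First, the paper skips your auxiliary $\tilde f^*$ and compares $\hat f$ directly to the fixed $f^*$ under $\hat P$; this is harmless since $\mathbb{E}_{\hat P}[\ell_{\hat f}-\ell_{f^*}]\le\mathbb{E}_{\hat P}[\ell_{\hat f}-\ell_{\tilde f^*}]\le\text{Rate}_n(\mathcal{F},\hat P)$, but your version makes the role of the $\hat P$-minimizer explicit. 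Second, rather than invoking the triangle inequality for $\ell^{\text{weak}}$, the paper inserts the ground-truth embedding $g_0$ (with $\beta_{g_0}^\top g_0(X)=W$) as a third intermediate and applies the Lipschitz bound twice, once between $\hat g$ and $g_0$ and once between $g_0$ and $g^*$. The two routes are equivalent in effect; yours has the mild advantage that it does not require $g_0\in\mathcal{G}$, while the paper's avoids explicitly naming the metric/triangle-inequality property of $\ell^{\text{weak}}$.
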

The first term corresponds to excess risk on the weak task, which we expect to be small since that environment is data-rich. Hence, the problem of obtaining excess risk bounds reduces to bounding the second term, $\text{Rate}_n(\mathcal{F}, \hat{P})$.  This second term is much more opaque; we spend the rest of the section primarily analyzing it. \\

%The first term, $\text{Rate}_m(\mathcal{G}, P_{X,W} )$, is simply the excess risk on the supervised learning problem of predicting $W$ from $X$, and therefore should be easy to bound. Hence, the problem of obtaining excess risk bounds reduces to bounding the second term, $\text{Rate}_n(\mathcal{F}, \hat{P})$. 
 %For continuous $\mathcal{Y}$ space this will usually be the Euclidean norm, while for categorical it will be the Hamming metric. Furthermore in the $\mathcal{W}$-discrete case, the $$-Lipschitzness of $f^*( x, \cdot)$ is implied by assuming that $f^*( x, \cdot)$ is bounded by $/2$.

We now prove that if $(\ell, P, \mathcal{F} )$ satisfies the $\varepsilon$-weak central condition, then the artificial learning problem $(\ell, \hat{P}, \mathcal{F} )$ obtained by replacing the true population distribution $P$ with the estimate $\hat{P}$ satisfies a slightly weaker central condition. We consider the categorical and continuous $\mathcal{W}$-space cases separately, obtaining an improved rate in the categorical case. In both cases, the proximity of this weaker central condition to the  $\varepsilon$-weak central condition is governed by $\text{Rate}_m(\mathcal{G}, P_{X,W})$, but the dependencies are different. \\
\vspace{-3pt}

\begin{prop}[Categorical weak label]\label{prop: categorical strong implies epsilon weak}
%Suppose Assumptions \ref{assumption: step two rates} and \ref{assumption: rates categorical} hold. 
Suppose that $\ell^\text{weak}(w,w') = \mathbb{1}\{ w \neq w'\}$ and that $\ell$ is bounded by $B>0$, $\mathcal{F}$ is Lipschitz relative to $\mathcal{G}$, and that $(\ell, P, \mathcal{F})$ satisfies the $\varepsilon$-weak central condition. Then $(\ell, \hat{P}, \mathcal{F})$ satisfies the $\varepsilon + \mathcal{O}\big (e^B \text{Rate}_m(\mathcal{G}, P_{X,W} ) \big )$-weak central condition with probability at least $1-\delta$. 
\end{prop}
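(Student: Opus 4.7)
The plan is to transfer the $\varepsilon$-weak $\eta$-central condition from $P$ to $\hat{P}$ by observing that $\hat{P}$ and $P$ share the marginal $P_{X,Y}$ and differ only in the value of $Z$ fed to $f$: under $P$ we plug in $g_0(X)$ (so $\beta_{g_0}^\top g_0(X) = W$ by the standing assumption), and under $\hat{P}$ we plug in $\hat{g}(X)$. The same $f^*$ that witnesses the central condition under $P$ will serve as the witness under $\hat{P}$.

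First I would write, for each fixed $f \in \mathcal{F}$,
\begin{equation*}
\mathbb{E}_{\hat{P}}\bigl[e^{-\eta(\ell_f(U) - \ell_{f^*}(U))}\bigr] - \mathbb{E}_{P}\bigl[e^{-\eta(\ell_f(U) - \ell_{f^*}(U))}\bigr] = \mathbb{E}_{(X,Y)\sim P_{X,Y}}\bigl[e^{-\eta \Delta_{\hat{g}}(X,Y)} - e^{-\eta \Delta_{g_0}(X,Y)}\bigr],
\end{equation*}
where $\Delta_g(x,y) := \ell_{f(\cdot,g)}(x,y) - \ell_{f^*(\cdot,g)}(x,y)$. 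Since $\ell$ is bounded by $B$, both exponents lie in $[-\eta B, \eta B]$, so $x \mapsto e^{-\eta x}$ is $\eta e^{\eta B}$-Lipschitz on the relevant range, giving the pointwise bound $|e^{-\eta \Delta_{\hat{g}}} - e^{-\eta \Delta_{g_0}}| \leq \eta e^{\eta B} |\Delta_{\hat{g}} - \Delta_{g_0}|$.

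Next I would apply the triangle inequality and the $L$-Lipschitz property relative to $\mathcal{G}$ to both $f$ and $f^*$ to get
\begin{equation*}
|\Delta_{\hat{g}}(X,Y) - \Delta_{g_0}(X,Y)| \leq 2L \, \ell^{\text{weak}}\!\bigl(\beta_{\hat{g}}^\top \hat{g}(X),\, \beta_{g_0}^\top g_0(X)\bigr) = 2L \, \mathbb{1}\{\beta_{\hat{g}}^\top \hat{g}(X) \neq W\} = 2L \, \ell^{\text{weak}}_{\hat{g}}(X,W),
\end{equation*}
where I used $W = \beta_{g_0}^\top g_0(X)$ and the categorical form of $\ell^{\text{weak}}$. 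Taking expectations and invoking Definition~\ref{assumption: rates regression} for $\mathrm{Alg}_m$, with probability at least $1-\delta$,
\begin{equation*}
\bigl|\mathbb{E}_{\hat{P}}[e^{-\eta(\ell_f - \ell_{f^*})}] - \mathbb{E}_{P}[e^{-\eta(\ell_f - \ell_{f^*})}]\bigr| \leq 2\eta L e^{\eta B} \, \text{Rate}_m(\mathcal{G}, P_{X,W}).
\end{equation*}

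Finally, combining with the $P$-side hypothesis $\mathbb{E}_P[\cdot] \leq e^{\eta \varepsilon}$ and using $1 + t \leq e^t$ to absorb the additive slack into the exponent yields
\begin{equation*}
\mathbb{E}_{\hat{P}}[e^{-\eta(\ell_f - \ell_{f^*})}] \leq e^{\eta \varepsilon} + 2\eta L e^{\eta B} \, \text{Rate}_m(\mathcal{G}, P_{X,W}) \leq e^{\eta(\varepsilon + \varepsilon')},
\end{equation*}
with $\varepsilon' = 2L e^{\eta(B-\varepsilon)} \, \text{Rate}_m(\mathcal{G}, P_{X,W}) = \mathcal{O}(e^B \, \text{Rate}_m(\mathcal{G}, P_{X,W}))$, as claimed. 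The main obstacle is bookkeeping in the Lipschitz step: one must carefully handle both $f$ and $f^*$ and recognize that the assumption $\beta_{g_0}^\top g_0(X) = W$ is precisely what turns a perturbation in $g$ into the weak risk $\ell^{\text{weak}}_{\hat{g}}(X,W)$ controlled by $\mathrm{Alg}_m$. Exploiting the boundedness of $\ell$ is essential; without it the Lipschitz constant of $e^{-\eta x}$ would be unbounded and the argument would fail (this is where the $e^B$ factor in the statement originates).
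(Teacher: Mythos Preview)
Your proof is correct and takes a somewhat different route from the paper's. The paper splits the expectation on the event $A=\{\beta_{\hat g}^\top \hat g(X)=W\}$: on $A$, the relative Lipschitz property together with $\ell^{\text{weak}}(W,W)=0$ forces $\ell_{f(\cdot,\hat g)}=\ell_{f(\cdot,g_0)}$ exactly, so one may drop the indicator and invoke the $P$-central condition; on $A^c$, the integrand is bounded crudely by $e^{\eta B}$ and $\mathbb P(A^c)$ is precisely the weak risk controlled by $\text{Rate}_m$. You instead bound the difference $e^{-\eta\Delta_{\hat g}}-e^{-\eta\Delta_{g_0}}$ pointwise via the Lipschitz constant $\eta e^{\eta B}$ of $t\mapsto e^{-\eta t}$ on $[-B,B]$, and then apply relative Lipschitzness to $|\Delta_{\hat g}-\Delta_{g_0}|$. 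Both routes land on the same $\mathcal O(e^{B}\,\text{Rate}_m)$ slack; the paper's indicator split avoids the factor $L$ (yielding $\tfrac{e^{\eta B}}{\eta}\,\text{Rate}_m$ versus your $2L\,e^{\eta(B-\varepsilon)}\,\text{Rate}_m$), while your argument is more uniform: it never uses the $0$--$1$ structure of $\ell^{\text{weak}}$ and would in fact deliver a linear rather than square-root dependence on $\text{Rate}_m$ in the continuous setting of Proposition~\ref{prop: regression strong implies epsilon weak}.
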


%When $\text{Rate}_m(\mathcal{G}, P_{X,W} )  \leq \mathcal{O}(e^{-B})$, a slight modification of the proof obtains the same result but without the $e^B$ term.

Next, we consider the norm induced loss. In this case it is also possible to obtain obtain the weak central condition for the artificially augmented problem  $(\ell, \hat{P}, \mathcal{F} )$. \\
\vspace{-3pt}

\begin{prop}[Continuous weak label] \label{prop: regression strong implies epsilon weak}
Suppose that $\ell^\text{weak}(w,w') = \norm{w-w'}$ and that $\ell$ is bounded by $B >0$, $\mathcal{F}$ is $L$-Lipschitz relative to $\mathcal{G}$, and that $(\ell, P, \mathcal{F})$ satisfies the $\varepsilon$-weak central condition. Then $(\ell, \hat{P}, \mathcal{F})$ satisfies the $\varepsilon + \mathcal{O}\big (\sqrt{Le^B \text{Rate}_m(\mathcal{G}, P_{X,W} )}\big )$-weak central condition with probability at least $1-\delta$. 
\end{prop}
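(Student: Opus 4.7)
The plan is to control the difference between $\mathbb{E}_{\hat{P}}[e^{-\eta(\ell_f - \ell_{f^*})}]$ and the analogous quantity under $P$, and then invoke the $\varepsilon$-weak central condition for $(\ell, P, \mathcal{F})$. Since $P$ and $\hat{P}$ share the $(X,Y)$ marginal (differing only in whether the latent coordinate is $g_0(X)$ or $\hat{g}(X)$), I would set
\[
\phi_f(X,Y) := \ell_{f(\cdot, g_0)}(X,Y) - \ell_{f^*(\cdot, g_0)}(X,Y),
\qquad
\hat{\phi}_f(X,Y) := \ell_{f(\cdot, \hat{g})}(X,Y) - \ell_{f^*(\cdot, \hat{g})}(X,Y),
\]
define $D_f := \hat{\phi}_f - \phi_f$, and restrict attention to the event $\mathcal{A}$ of probability at least $1-\delta$ on which $\mathbb{E}[\ell^\text{weak}_{\hat{g}}(X,W)] \leq \text{Rate}_m(\mathcal{G}, P_{X,W})$, guaranteed by Definition~\ref{assumption: rates regression}.

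Next I would derive a pointwise bound on $|D_f|$ by combining two available estimates. Applying the $L$-Lipschitz relative to $\mathcal{G}$ property separately to $f$ and $f^*$, using the triangle inequality, and substituting $\beta_{g_0}^\top g_0(X) = W$, yields $|D_f| \leq 2LE$ where $E := \|\beta_{\hat{g}}^\top \hat{g}(X) - W\| = \ell^\text{weak}_{\hat{g}}(X,W)$. The boundedness of $\ell$ independently gives $|D_f| \leq 2B$. Interpolating via $\min(2B, 2LE) \leq 2\sqrt{BLE}$ produces the key pointwise bound $|D_f| \leq 2\sqrt{BLE}$; this is precisely the source of the square root that distinguishes the continuous case from the categorical Proposition~\ref{prop: categorical strong implies epsilon weak}, where the indicator structure of $\ell^\text{weak}$ kept the correction linear in $\text{Rate}_m$.

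With these ingredients I would decompose
\[
\mathbb{E}_{\hat{P}}[e^{-\eta\hat{\phi}_f}] \;=\; \mathbb{E}_P[e^{-\eta\phi_f}] + \mathbb{E}_P\bigl[e^{-\eta\phi_f}(e^{-\eta D_f}-1)\bigr],
\]
bound the first term by $e^{\eta\varepsilon}$ using the $\varepsilon$-weak central condition for $(\ell, P, \mathcal{F})$, and for the second use the elementary inequality $|e^{-\eta D_f}-1| \leq \eta|D_f|e^{\eta|D_f|}$ with $|D_f|\leq 2B$ plugged into the exponent, $|D_f|\leq 2\sqrt{BLE}$ plugged into the linear factor, and $e^{-\eta\phi_f}\leq e^{\eta B}$. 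Jensen's inequality then converts $\mathbb{E}\sqrt{E}$ into $\sqrt{\mathbb{E} E} \leq \sqrt{\text{Rate}_m(\mathcal{G}, P_{X,W})}$ on $\mathcal{A}$, so the error term becomes $\mathcal{O}\bigl(\sqrt{Le^B\,\text{Rate}_m(\mathcal{G}, P_{X,W})}\bigr)$. Finally, $e^{\eta\varepsilon}(1+\eta c) \leq e^{\eta(\varepsilon+c)}$ lets me rewrite the resulting sum in the target form with $c = \mathcal{O}\bigl(\sqrt{Le^B\,\text{Rate}_m(\mathcal{G}, P_{X,W})}\bigr)$, which is the claim. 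The main obstacle is the interpolation step: a direct application of Lipschitzness alone yields $\mathbb{E}|D_f| \leq 2L\,\text{Rate}_m$ and hence a qualitatively different error behavior, so producing the stated $\sqrt{\text{Rate}_m}$ dependence relies essentially on combining the Lipschitz and boundedness pointwise bounds before taking expectations and then pulling the square root out through Jensen.
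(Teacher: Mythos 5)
Your proof is correct, and it reaches the stated bound by a genuinely different mechanism than the paper. The paper splits the expectation over the two events $\{\|\beta_{\hat{g}}^\top\hat{g}(X)-W\|\le \delta'/L\}$ and its complement for a free threshold $\delta'$: on the good event the relative Lipschitz property perturbs the exponent by at most $2\delta'$, on the bad event boundedness gives $e^{\eta B}$ and Markov's inequality bounds the bad-event probability by $\tfrac{L}{\delta'}\mathrm{Rate}_m$, and the square root then emerges from optimizing $\delta'$ to balance the two contributions. You instead interpolate \emph{pointwise} between the same two estimates, $|D_f|\le \min(2B,\,2LE)\le 2\sqrt{BLE}$, and extract the square root of $\mathrm{Rate}_m$ via Jensen applied to $\mathbb{E}\sqrt{E}$; the multiplicative decomposition $e^{-\eta\hat\phi_f}=e^{-\eta\phi_f}+e^{-\eta\phi_f}(e^{-\eta D_f}-1)$ together with $|e^x-1|\le|x|e^{|x|}$ and $1+\eta c\le e^{\eta c}$ then closes the argument. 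Your route avoids both the threshold optimization and the Markov step, and your closing observation is exactly right: the naive $\mathbb{E}|D_f|\le 2L\,\mathrm{Rate}_m$ bound is useless here because it cannot be exponentiated uniformly, so some form of interpolation with the $B$-bound is essential in both proofs. The only cost of your version is a looser constant in $B$: bounding $e^{\eta|D_f|}\le e^{2\eta B}$ inside the error factor yields an overall prefactor of order $e^{3\eta B}\sqrt{B}$, whereas the paper's event split only pays $e^{\eta B}$ on the bad event and ends up with $\sqrt{e^{\eta B}/\eta}$ after balancing; since the paper explicitly absorbs the $B$-dependence into the $\mathcal{O}(\cdot)$, this does not affect the claimed conclusion.
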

For both propositions, a slight modification of the proofs easily eliminates the $e^B$ term when $\text{Rate}_m(\mathcal{G}, P_{X,W} )  \leq \mathcal{O}(e^{-B})$. Since we typically consider the regime where $ \text{Rate}_m(\mathcal{G}, P_{X,W} )$ is close to zero, Propositions \ref{prop: categorical strong implies epsilon weak} and \ref{prop: regression strong implies epsilon weak} essentially say that replacing $P$ by $\hat{P}$ only increases the weak central condition parameter slightly. \\
\vspace{-3pt}

The next, and final, step in our argument is to obtain a generalization bound for ERM under the $\varepsilon$-weak central condition. Once we have this bound, one can obtain good generalization bounds for the learning problem $(\ell, \hat{P}, \mathcal{F})$ since the previous two propositions guarantee that it satisfies the weak central condition from some small $\varepsilon$. Combining this observation with the results from the previous section finally allows us to obtain generalization bounds on Algorithm \ref{algo:double-estimation} when $\text{Rate}_n(\mathcal{F}, \hat{P})$ is ERM. \\

For this final step, we assume that our strong predictor class $\mathcal{F}$ is parameterized by a vector in $\mathbb{R}^d$, and identify each $f$ with this parameter vector. We also assume that the parameters live in an $L_2$ ball of radius $R$. By Lagrangian duality this is equivalent to our learning algorithm being ERM with $L_2$-regularization for some regularization parameter.

\begin{prop}\label{prop: generalization for weak central condition}
Suppose $(\ell, Q, \mathcal{F})$ satisfies the $\varepsilon$-weak central condition, $\ell$ is bounded by $B>0$, each $\mathcal{F}$ is $L'$-Lipschitz in its parameters in the $\ell_2$ norm, $\mathcal{F}$ is contained in the Euclidean ball of radius $R$, and $\mathcal{Y}$ is compact. Then when $\text{Alg}_n(\mathcal{F},Q)$ is ERM, the excess risk $\mathbb{E}_{Q}  [\ell_{\hat{f}}(U) - \ell_{f^*}(U)  ]$ is bounded by,
\[  \mathcal{O} \bigl(V \frac{d \log(RL'/\varepsilon)  + \log(1/\delta)}{n} + V \varepsilon \bigr),  \]
with probability at least $1-\delta$, where $V = B + \varepsilon$. 
\end{prop}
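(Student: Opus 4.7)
The plan is to combine a covering argument on the parameter space of $\mathcal{F}$ with a Bernstein-type concentration inequality that exploits the $\varepsilon$-weak central condition, and then to resolve the resulting variance-dependent bound via AM-GM. Throughout, let $Z_f(U) := \ell_f(U) - \ell_{f^*}(U)$, so that the goal is to bound $\mathbb{E}_Q[Z_{\hat f}]$ with high probability, using that ERM ensures $\tfrac{1}{n}\sum_i Z_{\hat f}(U_i) \leq 0$.

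The first step is to convert the central condition into a Bernstein-type second-moment bound. Since $|Z_f| \leq B$, a Taylor-style comparison of the form $e^{-x} \geq 1 - x + c(\eta B)\, x^2$ on $[-\eta B, \eta B]$, applied to $\mathbb{E}[e^{-\eta Z_f}] \leq e^{\eta\varepsilon}$, yields
\[
  \mathbb{E}[Z_f^2] \;\leq\; c_1 \bigl(\mathbb{E}[Z_f] + \varepsilon\bigr) \qquad \text{for all } f \in \mathcal{F},
\]
for some constant $c_1 = c_1(B,\eta)$. Next I would cover $\mathcal{F}$ in parameter space at scale $\tau = \varepsilon/(C_\ell L')$, where $C_\ell$ is a Lipschitz constant of $\ell$ in its first argument (which exists since $\ell$ is continuously differentiable, $\mathcal{Y}$ is compact, and the relevant range of $f$ is bounded). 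An $\ell_2$ ball of radius $R$ admits such a net of cardinality $N \leq (3 C_\ell L' R/\varepsilon)^d$, and any two functions in the same cell satisfy $\sup_U |Z_f(U) - Z_{f'}(U)| \leq 2\varepsilon$.

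The third step is to apply Bernstein's inequality to each cover element $f_k$, using the second-moment bound above to get, with probability at least $1 - \delta/N$,
\[
  \mathbb{E}[Z_{f_k}] - \tfrac{1}{n}\sum_i Z_{f_k}(U_i) \;\leq\; \sqrt{\tfrac{2 c_1 \bigl(\mathbb{E}[Z_{f_k}] + \varepsilon\bigr)\log(N/\delta)}{n}} + \tfrac{2B \log(N/\delta)}{3n}.
\]
A union bound over the cover, evaluated at the nearest element $f_k$ to $\hat f$, combined with ERM optimality (which forces $\tfrac{1}{n}\sum_i Z_{f_k}(U_i) \leq 2\varepsilon$ after absorbing the $O(\varepsilon)$ discretization error), produces an inequality in which $\mathbb{E}[Z_{\hat f}]$ appears linearly on the left and under a square root on the right. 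Splitting the square root by AM-GM ($\sqrt{ab} \leq \tfrac12 a + \tfrac12 b$ with an appropriately weighted split) absorbs the square-root term into the left-hand side and yields the target bound $\widetilde{\mathcal{O}}\bigl(V(d\log(RL'/\varepsilon) + \log(1/\delta))/n + V\varepsilon\bigr)$ with $V = B + \varepsilon$.

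The main obstacle is the first step: carefully propagating the weak-central-condition slack $\varepsilon$ into a Bernstein-type condition with the correct $+\varepsilon$ additive form, while keeping the variance multiplier $c_1$ independent of $n$. A loose bound here would either degrade the rate from $1/n$ to $1/\sqrt{n}$ or inflate the residual $\varepsilon$ term. A secondary subtlety is calibrating the cover scale to $\varepsilon$ rather than to $1/n$, which is exactly what makes $\log(RL'/\varepsilon)$ (rather than a weaker $\log(RL' n)$) appear in the final bound, and then tuning the AM-GM split so that the fast $1/n$ contribution and the additive $V\varepsilon$ contribution separate cleanly.
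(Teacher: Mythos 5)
Your proposal is correct, but it takes a genuinely different route from the paper. The paper follows the exponential-moment approach of Mehta and van Erven et al.: it defines $\mathcal{F}_{\beta_n}=\{f:\mathbb{E}\Delta_f\geq a/n\}$, takes the same parameter-space cover at scale proportional to $\varepsilon$, but then, for each cover element, invokes a ``modification lemma'' to perturb $\Delta_f+\varepsilon$ downward so that $\mathbb{E}[e^{-\eta_f(\widetilde\Delta_f+\varepsilon)}]=1$ exactly for some $\eta_f\in[\eta,2\eta]$, applies Corollaries 7.4--7.5 of van Erven et al.\ to lower-bound the decay of the cumulant generating function at $\eta_f/2$, and finishes with Cram\'er--Chernoff plus a union bound to show ERM selects no element of $\mathcal{F}_{\beta_n}$. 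You instead first convert the $\varepsilon$-weak central condition into a Bernstein-type variance condition $\mathbb{E}[Z_f^2]\leq c_1(\mathbb{E}[Z_f]+\varepsilon)$ via the inequality $e^{-x}\geq 1-x+\kappa x^2$ on $[-\eta B,\eta B]$, and then run the standard Bernstein-inequality-plus-localization argument, resolving the self-bounding inequality by AM-GM. Both routes are sound and yield the stated bound; yours is more elementary and self-contained (no modification lemma, no external corollaries), and the self-bounding step transparently produces the $1/n$ rate, whereas the paper's argument works directly with the exponential moment in the form the hypothesis is given and keeps the $V\vee 1/\eta$ dependence explicit. The two points you flag as delicate are exactly the right ones: the conversion constant $c_1$ scales like $e^{\eta B}/\eta$, which is where the $V=B+\varepsilon$ factor must come from (the paper absorbs the analogous quantity as $V\vee 1/\eta$), and the cover scale must be tied to $\varepsilon$ (not $1/n$) to produce $\log(RL'/\varepsilon)$ --- the paper makes the identical choice. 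One cosmetic note: ERM optimality transferred to the nearest cover element gives $\tfrac1n\sum_i Z_{f_k}(U_i)\leq\varepsilon$ rather than $2\varepsilon$ with your stated cell diameter, but this affects only constants.
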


Any parameterized class of functions that is continuously differentiable in its parameters satisfies the $L'$-Lipschitz requirement since we assume the parameters live in a closed ball of radius $R$. The $\mathcal{Y}$ compactness assumption can be dropped in the case where $ y \mapsto \ell(y,\cdot)$ is Lipschitz. \\

Observe that the bound in Proposition~\ref{prop: generalization for weak central condition} depends linearly on $d$, the number of parameters of $\mathcal{F}$. Since we consider the regime where $n$ is small, the user might use only a small model (e.g., a shallow network) to parameterize $\mathcal{F}$, so $d$ may not be too large. On the other hand, the bound is independent of the complexity of $\mathcal{G}$. This is important since the user may want to use a powerful model class for $g$ to profit from the bountiful amounts of weak labels.  \\

Proposition~\ref{prop: generalization for weak central condition} gives a generalization bound for any learning problem $(\ell, Q, \mathcal{F})$ satisfying the weak central condition, and may therefore be of interest in the theory of fast rates more broadly. For our purposes, however, we shall apply it only to the particular learning problem $(\ell, \hat{P}, \mathcal{F})$. In this case, the $\varepsilon$ shall depend on $\text{Rate}_m(\mathcal{G}, P_{X,W} ) $, yielding strong generalization bounds when $\hat{g}$ has low excess risk.
%\begin{remark}
%Is there a Radamacher complexity version of this proposition that circumnavigates the convexity and radial assumption?
%\end{remark}
Combining Proposition~\ref{prop: generalization for weak central condition} with both of the two previous propositions yields fast rates guarantees (Theorem \ref{thm: fast rates}) for the double estimation algorithm (Algorithm \ref{algo:double-estimation}) for ERM. The final bound depends on the rate of learning for the weak task, and on the quantity of weak data available $m$. \\

\begin{theorem}[Main result]\label{thm: fast rates}
Suppose the assumptions of Proposition \ref{prop: generalization for weak central condition} hold, $(\ell, P, \mathcal{F} )$ satisfies the central condition, and that $\text{Rate}_m(\mathcal{G}, P_{X,W} ) = \mathcal{O}(m^{-\alpha})$. Then,  when $\text{Alg}_n(\mathcal{F},\hat{P})$ is ERM we obtain excess risk $\mathbb{E} _P [\ell_{\hat{h}}(X,Y) - \ell_{h^*}(X,Y) ]$ that is bounded by,
\begin{equation*}
\mathcal{O} \bigl(  \frac{  d \alpha \beta \log RL'n + \log \frac{1}{\delta}}{n} + \frac{ L  }{n^{\alpha\beta}}   \bigr),
\end{equation*}
with probability at least $1-\delta$, if either of the following conditions hold,
\begin{enumerate}
  \setlength{\itemsep}{0pt}
\item  $m = \Omega(n^{\beta})$ and  $\ell^\text{weak}(w,w') = \mathbb{1}\{ w \neq w'\}$ (discrete $\mathcal{W}$-space).
\item $m = \Omega(n^{2\beta})$  and $\ell^\text{weak}(w,w') = \norm{w-w'}$ (continuous $\mathcal{W}$-space).
\end{enumerate}
\end{theorem}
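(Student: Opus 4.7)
The plan is to combine the three ingredients already proved---the excess risk decomposition (Proposition \ref{thm: general bound}), the ``weak central condition transfer'' lemmas (Propositions \ref{prop: categorical strong implies epsilon weak} and \ref{prop: regression strong implies epsilon weak}), and the ERM generalization bound under the weak central condition (Proposition \ref{prop: generalization for weak central condition})---and to check that the parameters line up to produce the claimed rate. Since $(\ell,P,\mathcal{F})$ is assumed to satisfy the (strong, i.e.\ $0$-weak) central condition, we will think of the starting $\varepsilon$ as zero; the augmented problem $(\ell,\hat{P},\mathcal{F})$ only loses the amount explicitly produced by Propositions \ref{prop: categorical strong implies epsilon weak} and \ref{prop: regression strong implies epsilon weak}.

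First I would apply Proposition \ref{thm: general bound} to write
\[
\mathbb{E}_P[\ell_{\hat h}(X,Y)-\ell_{h^*}(X,Y)] \;\le\; 2L\,\text{Rate}_m(\mathcal{G},P_{X,W})+\text{Rate}_n(\mathcal{F},\hat P).
\]
Plugging in $\text{Rate}_m(\mathcal{G},P_{X,W})=\mathcal{O}(m^{-\alpha})$ and the sample-size conditions $m=\Omega(n^{\beta})$ (discrete) or $m=\Omega(n^{2\beta})$ (continuous) gives $\text{Rate}_m=\mathcal{O}(n^{-\alpha\beta})$ in the discrete case and $\text{Rate}_m=\mathcal{O}(n^{-2\alpha\beta})$ in the continuous case, so the first summand contributes at worst $\mathcal{O}(L/n^{\alpha\beta})$ in both cases. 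It remains to bound $\text{Rate}_n(\mathcal{F},\hat P)$ when $\text{Alg}_n$ is ERM.

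Next I would invoke Proposition \ref{prop: categorical strong implies epsilon weak} (discrete) or Proposition \ref{prop: regression strong implies epsilon weak} (continuous) to upgrade the strong central condition on $(\ell,P,\mathcal{F})$ to an $\varepsilon$-weak central condition on $(\ell,\hat P,\mathcal{F})$. Here the two cases fortuitously collapse to the same order: in the discrete case $\varepsilon=\mathcal{O}(e^{B}\,\text{Rate}_m)=\mathcal{O}(n^{-\alpha\beta})$, and in the continuous case the square root combines with the doubled exponent $2\beta$ to yield $\varepsilon=\mathcal{O}(\sqrt{Le^{B}\,\text{Rate}_m})=\mathcal{O}(n^{-\alpha\beta})$. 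Thus in both regimes the induced weak central condition has $\varepsilon=\mathcal{O}(n^{-\alpha\beta})$, holding on an event of probability at least $1-\delta/2$ after a union bound with the weak-task guarantee.

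With this $\varepsilon$ in hand, I would apply Proposition \ref{prop: generalization for weak central condition} to $(\ell,\hat P,\mathcal{F})$ with confidence $\delta/2$, obtaining
\[
\text{Rate}_n(\mathcal{F},\hat P)\;\le\;\mathcal{O}\!\Bigl(V\,\tfrac{d\log(RL'/\varepsilon)+\log(1/\delta)}{n}+V\varepsilon\Bigr),
\]
where $V=B+\varepsilon=\mathcal{O}(1)$. Substituting $\varepsilon=\Theta(n^{-\alpha\beta})$ turns $\log(RL'/\varepsilon)$ into $\alpha\beta\log n+\log(RL')$, so the parametric term becomes $\mathcal{O}((d\alpha\beta\log(RL'n)+\log(1/\delta))/n)$ and the residual $V\varepsilon$ becomes $\mathcal{O}(1/n^{\alpha\beta})$. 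Adding the $\mathcal{O}(L/n^{\alpha\beta})$ contribution from the first summand and taking a final union bound over the three failure events yields exactly the advertised bound with probability at least $1-\delta$.

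The only real subtlety, and the place I would spend care, is bookkeeping: making sure the $e^{B}$ and $\sqrt{L e^{B}}$ factors from Propositions \ref{prop: categorical strong implies epsilon weak} and \ref{prop: regression strong implies epsilon weak} get absorbed into constants (using the authors' remark that these can be dropped once $\text{Rate}_m=\mathcal{O}(e^{-B})$, which holds for $n$ large), and that the choice $m=\Omega(n^{2\beta})$ in the continuous case is exactly what is needed to equalize the two $\varepsilon$'s so that a single unified statement covers both cases. Everything else is just assembling already-proven pieces.
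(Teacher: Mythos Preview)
Your proposal is correct and follows essentially the same approach as the paper: chain Proposition~\ref{thm: general bound} with Proposition~\ref{prop: categorical strong implies epsilon weak} or~\ref{prop: regression strong implies epsilon weak} (to obtain the $\mathcal{O}(n^{-\alpha\beta})$-weak central condition on $(\ell,\hat P,\mathcal F)$, using the square root in the continuous case to cancel the doubled exponent), then feed this $\varepsilon$ into Proposition~\ref{prop: generalization for weak central condition} and sum the pieces. If anything, your bookkeeping on the union bound and on absorbing the $e^{B}$ factors is more explicit than the paper's own proof.
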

To reduce clutter we absorb the dependence on $B$ into the big-$\mathcal{O}$. One can obtain similar bounds if the weak central condition holds but with an extra additive term in the bound.

%These conditions are used to ensure $\mathcal{O}\big (\text{Alg}_m(\mathcal{G}, P_{X,W} ) \big ) = \mathcal{O}(1/n)$ in the Assumption \ref{assumption: rates categorical}  regime, and to ensure  $\mathcal{O}\big (\sqrt{L_P \text{Alg}_m(\mathcal{G}, P_{X,W} )}\big) = \mathcal{O}(1/n)$ in the Assumption \ref{assumption: rates regression} regime. 

%The Lipschitz constant $$ can be interpreted as the sensitivity of the estimate $\hat{Y} = \hat{f}(X , \hat{g}(X))$ to the weak prediction $\hat{W} = \hat{g}(X)$. 

%%% Local Variables:
%%% mode: latex
%%% TeX-master: "fast_main"
%%% End:

\vspace{-5pt}
\section{Experiments} 

\begin{figure*}[t]
  \centering
  \label{figure:5}\includegraphics[width=70mm]{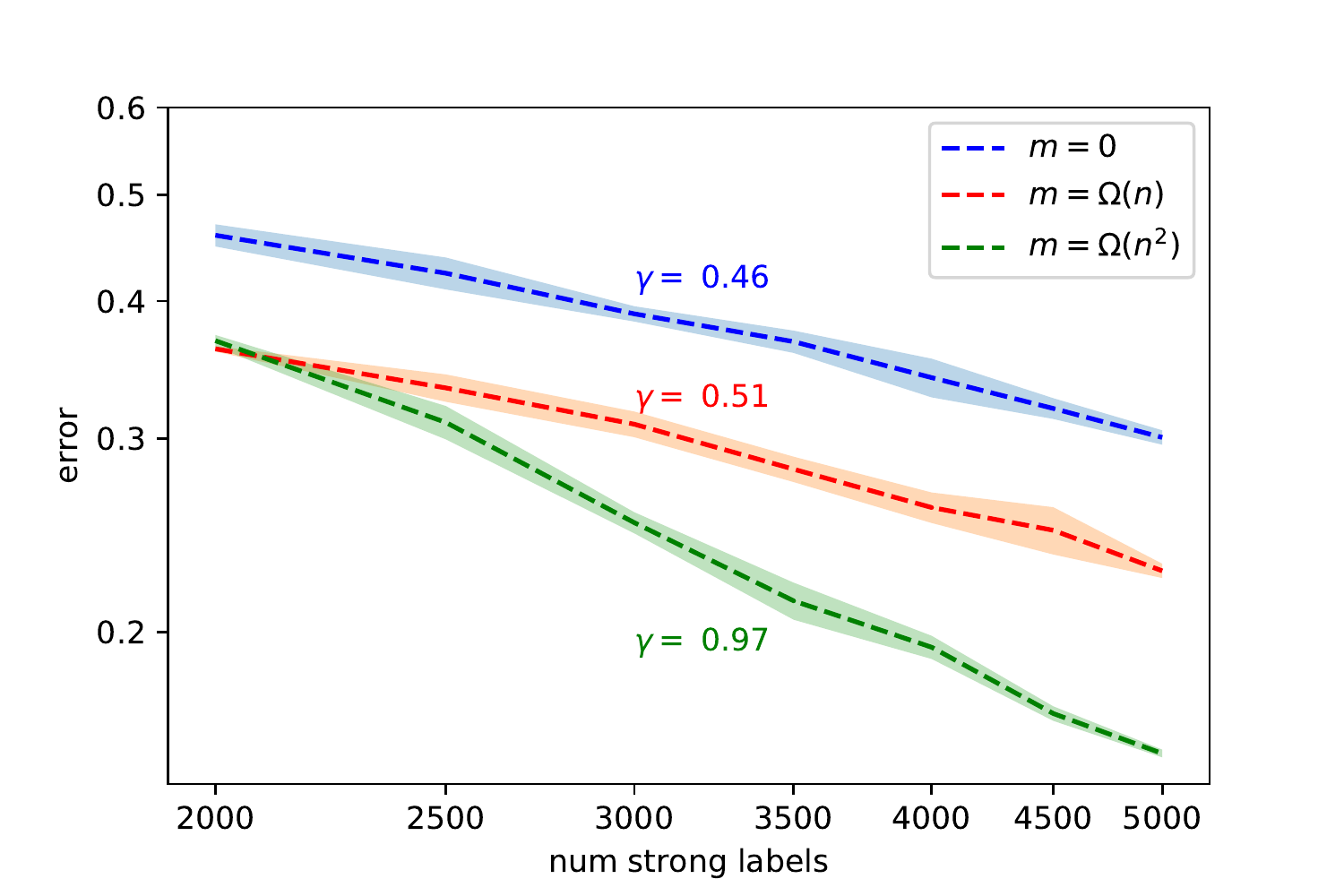}
  \label{figure:6}\includegraphics[width=70mm]{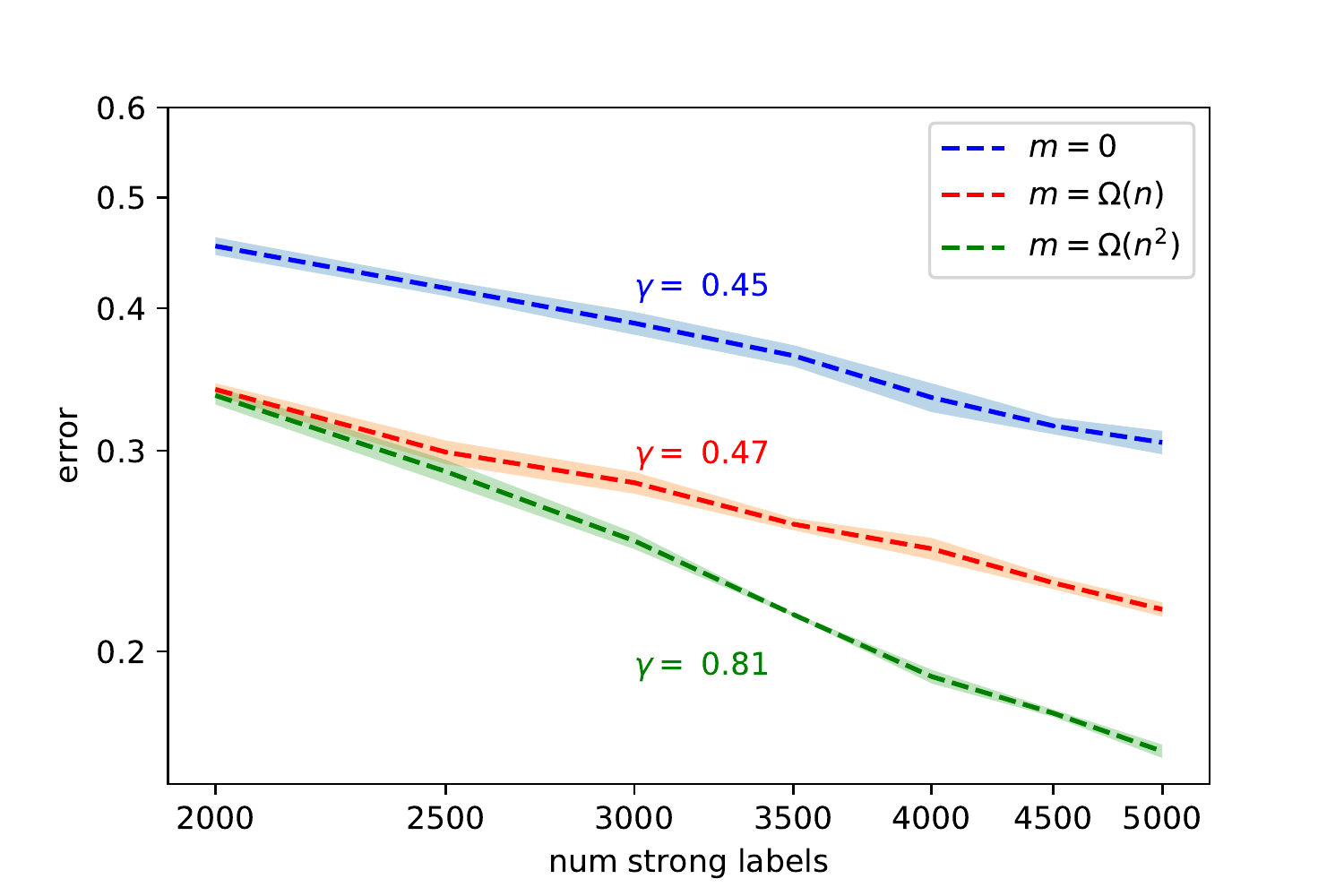}
   \begin{adjustwidth}{50pt}{50pt}
  \caption{Generalization error on CIFAR-$10$  using noisy weak labels for different growth rates of $m$. Left hand diagram is for simulated ``noisy labeler'', the right hand picture is for random noise.}%
  \label{fig: cifar10 noisy}
    \end{adjustwidth}
 \end{figure*}
 
   \begin{figure*}[h]
  \centering
  \includegraphics[width=40mm]{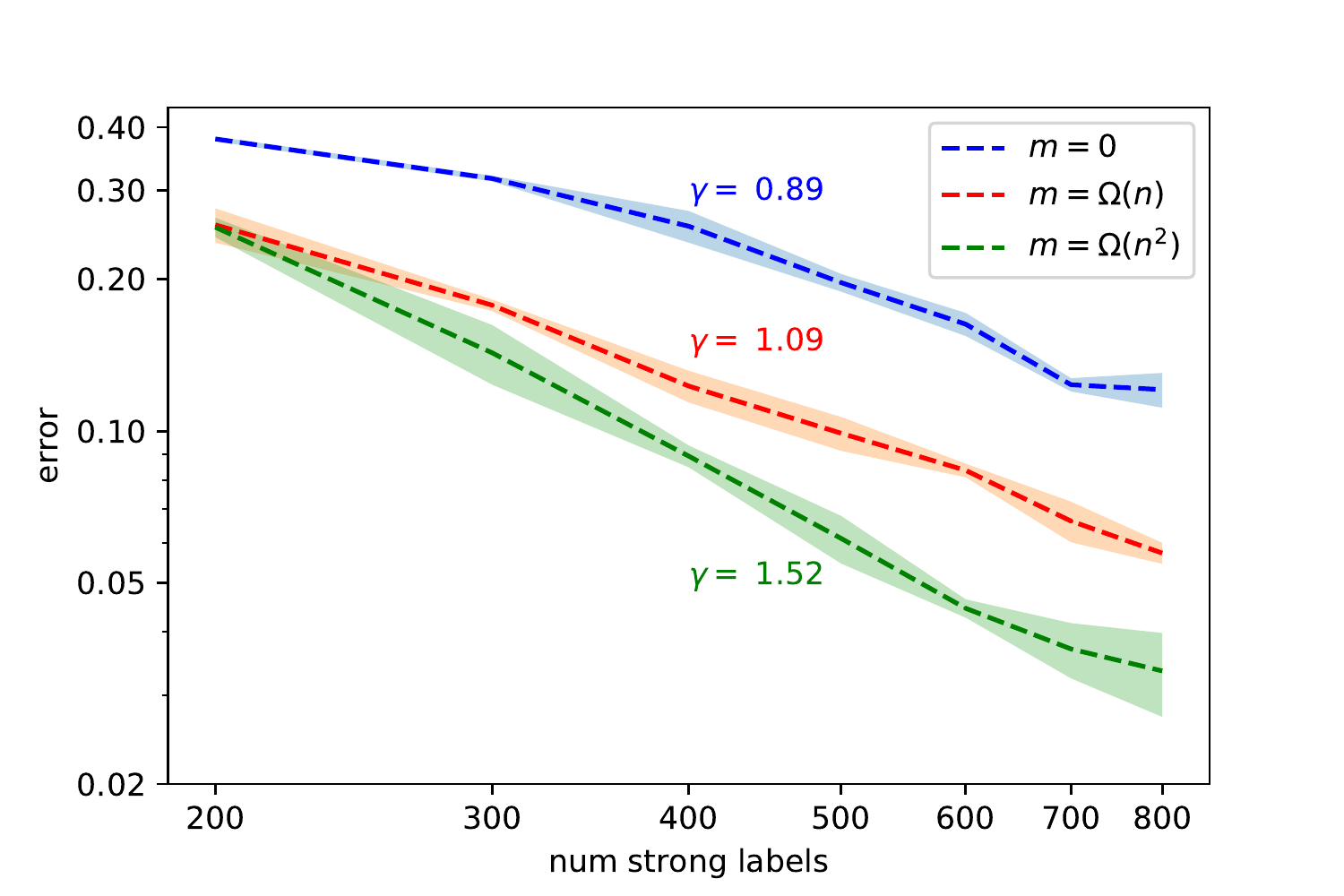}
  \label{figure:2}\includegraphics[width=40mm]{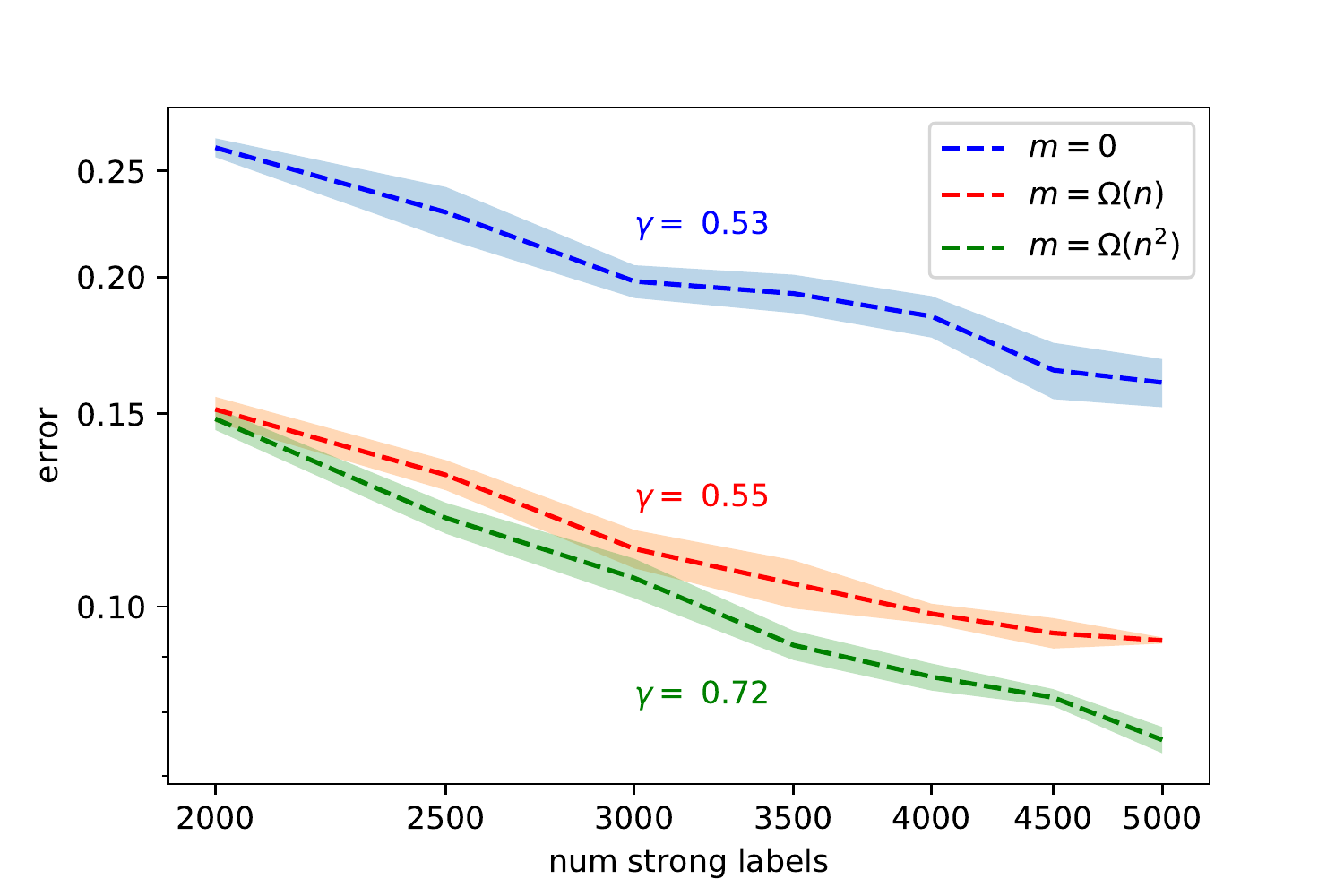}
  \label{figure:3}\includegraphics[width=40mm]{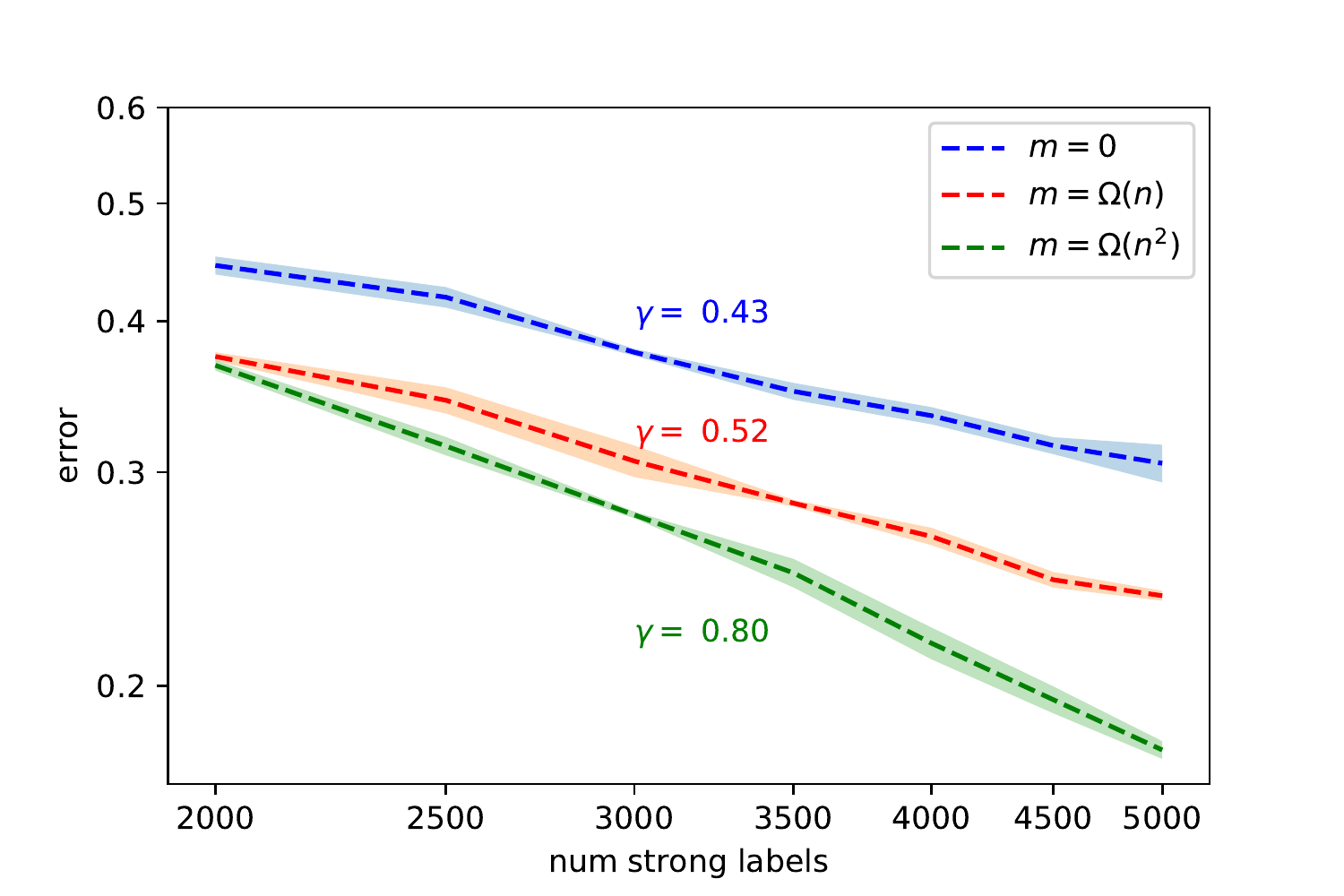}
  \begin{adjustwidth}{50pt}{50pt}
  \caption{Coarse labels.
  Generalization error on various datasets using coarse weak label grouping for different growth rates of $m$. Datasets left to right:  MNIST, SVHN, and CIFAR-$10$.}%
    \label{fig: mnist}
  \end{adjustwidth}
 \end{figure*} 
 \vspace{+3pt}

 Note that an excess risk bound of $b = C/n^\gamma$ implies a log-linear relationship $\log b = \log C - \gamma \log n$ between the error and amount of strong data. We are therefore able to visually interpret the learning rate $\gamma$ using a log-log scale as the \emph{negative of the gradient}. We experimentally study two types of weak label: noisy, and coarse labels. 
We study two cases: when the amount of weak data grows linearly with the amount of strong data, and when the amount of weak data grows quadratically with the amount of strong data (plus a baseline). All experiments use either a ResNet-$18$ or ResNet-$34$ for the weak feature map $g$. Full details of hyperparameter choices, architecture choices, and other experimental information are given in Appendix \ref{ap: experiments}. 

\vspace{-5pt}
\paragraph{Choice of baseline}

The aim of our experiments is to empirically study the relationship between generalization, weak dataset size, strong dataset size, and weak learning rate that our theoretical analysis predicts. Therefore, the clearest baseline comparison for Algorithm \ref{algo:double-estimation} is to vanilla supervised learning (i.e. $m=0$). 
\vspace{-5pt}

\subsection{Noisy Labels}

\paragraph{Simulated noisy labeler } 

First, we simulate a noisy labeler, who gets some examples wrong but in a way that is dependent on the example (as opposed to independent random noise). For example, think of a human annotator working on a crowd sourcing platform. We simulate noisy labelers by training an auxiliary deep network on a held out dataset to classify at a certain accuracy - for our CIFAR-$10$ experiments we train to $90\%$ accuracy. We then use the predictions of the auxiliary network as weak labels. The results are given in left hand part of Figure \ref{fig: cifar10 noisy}.

\vspace{-5pt}

\paragraph{Random noise} 
Second, we run experiments using independent random noise. To align with the simulated noisy labeler, we keep a given label the same with $90\%$ chance, and otherwise swap the label to any other label with equal chance (including back to itself). The results are given in right hand part of Figure \ref{fig: cifar10 noisy}. \\

In each case, both the generalization error when using additional weak data is lower, \emph{and} the learning rate itself is higher. Indeed, the learning rate improvement is significant. For simulated noisy labels, $\gamma = 0.46$ when $m=0$, and  $\gamma = 0.97$ for $m=\Omega(n^2)$. Random noisy labels has a similar result with $\gamma = 0.45$ and $\gamma = 0.81$ for $m=0$, and $m=\Omega(n^2)$ respectively.

\subsection{Coarse labels}

\paragraph{CIFAR-100 - concept clustering} To study learning with coarse weak labels, we first consider CIFAR-$100$. This dataset provides ready-made weak supervision. There are $100$ categories, which are clustered into $20$ super-categories each corresponding to a semantically meaningful collection. Each super category has exactly $5$ categories in each super-category. For example, the categories``maple'', ``oak'', ``palm'', ``pine'', and ``willow'' are all part of the super-category``trees''. We use the coarse super category as a weak label, and the fine grained $100$-way classes as strong labels. The results are presented in Figure \ref{fig: cifar100}.

\paragraph{Simple grouping} We also ran experiments using a  simple grouping to form weak labels for MNIST, SVHN, and CIFAR-$10$. We construct a weakly labeled dataset from MNIST and SVHN by assigning the weak label $W = Y ( \text{mod } d)$ for some $d \in \{2, \ldots, 10\}$. For CIFAR-$10$ we followed an analogous approach, forming a five different weak labels by grouping the ten strong labels into pairs. The results depicted in Figure \ref{fig: mnist} are all for $d=5$, however similar tests for different values of $d$, obtained similar results.  \\

The coarse label results are a similar story to noisy labels. Generalization error is consistently lower, and learning rate constantly high for larger $m$ growth rate. The differences are generally very significant, e.g. for CIFAR-$100$ where top-$1$ accuracy learning rate is $\gamma = 0.45$ for $m=0$, and  $\gamma = 0.70$ for $m=\Omega(n^2)$, and for MNIST  $\gamma = 0.89$ and $\gamma = 1.52$ for $m=0$ and  $m=\Omega(n^2)$ respectively.

 \begin{figure}[h]
  \centering
  \label{figure:5}\includegraphics[width=70mm]{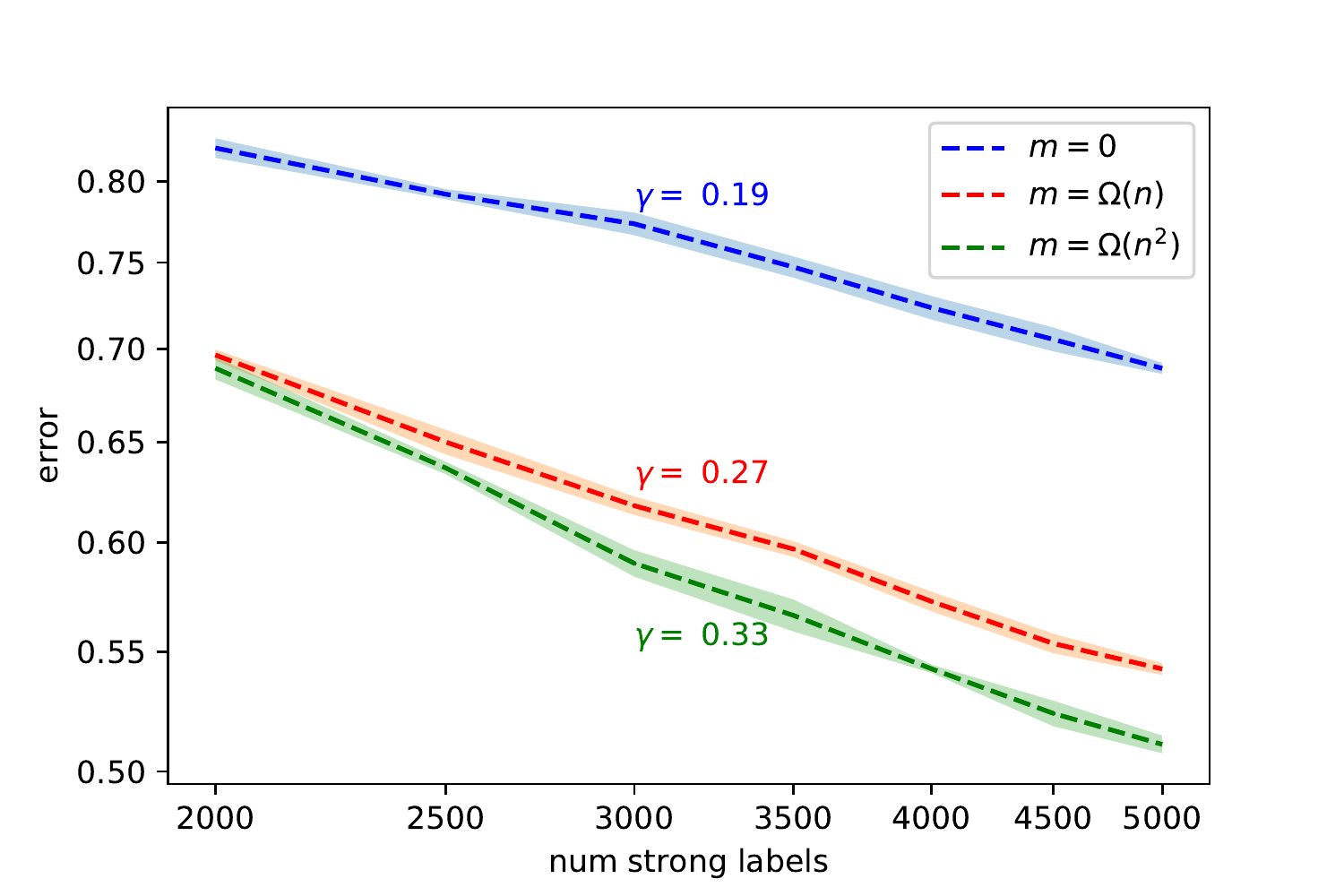}  
  \label{figure:6}\includegraphics[width=70mm]{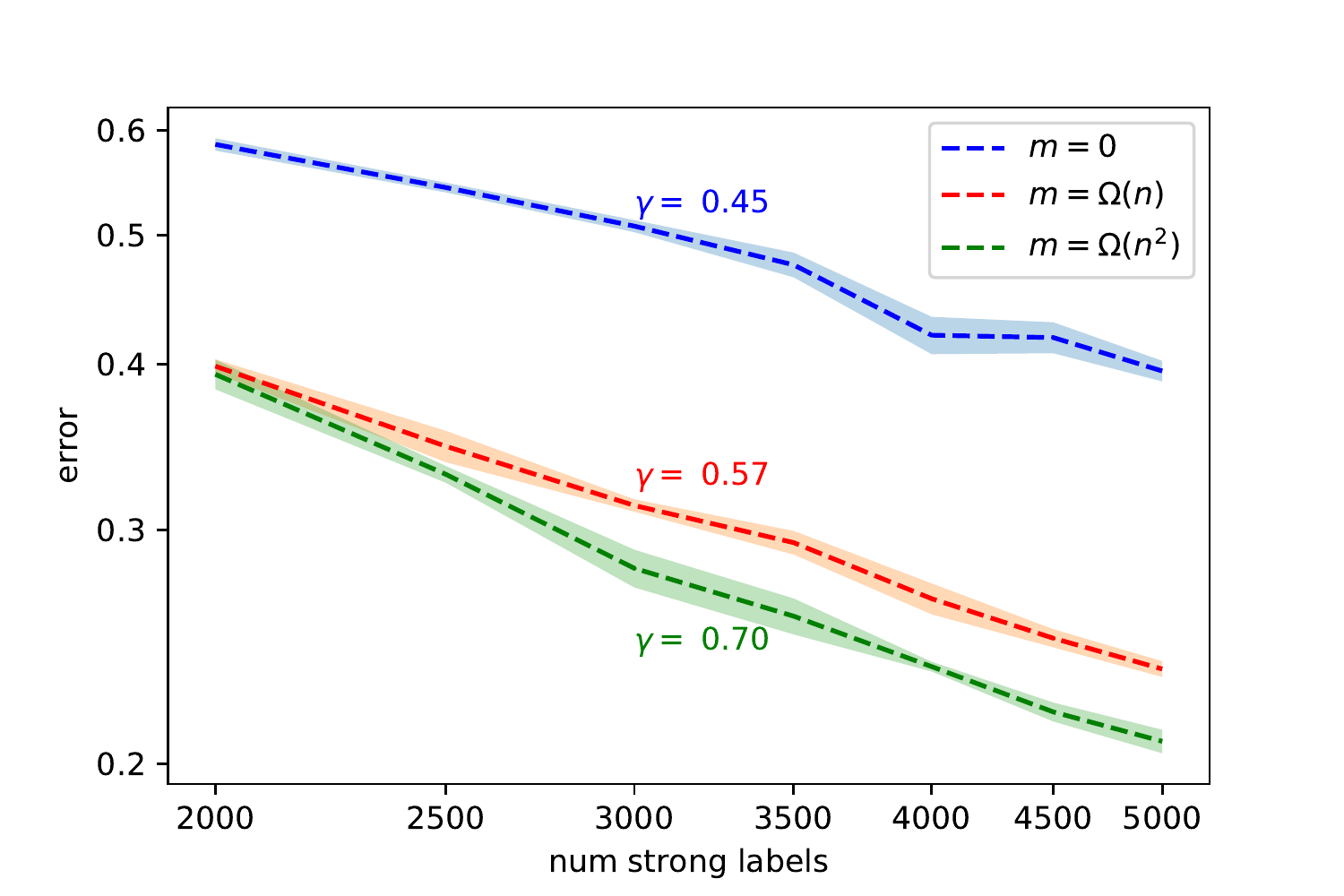}
    \begin{adjustwidth}{15pt}{15pt}
    \caption{Generalization error on CIFAR-$100$  using coarse weak labels for different growth rates of $m$. Top diagram is top-$1$ accuracy, and bottom diagram is top-$5$ accuracy. }%
        \label{fig: cifar100}
      \end{adjustwidth}
 \end{figure}

\section{Related Work}
\paragraph{Weakly supervised learning.}
%Instead of considering the setting where one has access to weak labels and a limited amount of strong labels
There exists previous work on the case where one \emph{only} has weak labels. \citet{khetan2017learning} consider crowd sourced labels and use an EM-style algorithm to model the quality of individual workers. Another approach proposed in \cite{ratner2016data,ratner2019training} uses correlation between multiple different weak label sources to estimate the ground truth label. A different approach is to use pairwise semantic (dis)similarity as a form of weak signal about unlabeled data \cite{arora2019theoretical} or to use complementary labels, which give you a label telling you a class that the input is \emph{not} in \cite{xu2019generative}.

\vspace{-8pt}
\paragraph{Fast rates.}
There is a large body of work studying a variety of favorable situations under which it is possible to obtain rates better than slow-rates. From a generalization and optimization perspective, strongly convex losses enable fast rates for generalization and for fast convergence of stochastic gradient \cite{kakade2009generalization, hazan2007logarithmic, foster2019orthogonal}.  These works are special cases of exponentially concave learning, which is itself a special case of the central condition. There are completely different lines of work on fast rates, such as developing data dependent local Rademacher averages \cite{bartlett2005local}; and herding, which has been used to obtain fast rates for integral approximation \cite{welling2009herding}. 

\vspace{-8pt}
\paragraph{Learning with a nuisance component.} 
The two-step estimation algorithm we study in this paper is closely related to statistical learning under a nuisance component \cite{chernozhukov2018double, foster2019orthogonal}. In that setting one wishes to obtain excess risk bounds for the model $\hat{f}( \cdot , g_0(\cdot))$ where $W = g_0(X)$ is the true weak predictor. The analysis of learning in such settings rests crucially on the Neyman orthogonality assumption \cite{neyman}. Our setting has the important difference of seeking excess risk bounds for the compositional model $\hat{f}(\cdot, \hat{g}(\cdot))$.

\vspace{-8pt}
\paragraph{Self-supervised learning.} In self-supervised learning the user artificially constructs pretext learning problems based on attributes of unlabeled data  \cite{doersch2015unsupervised, gidaris2018unsupervised}. 
%Predictors for pretext tasks are then trained in order for the learned feature maps to be used for downstream tasks, especially in the case that there is a scarcity of labeled data. 
In other words, it is often possible to construct a weakly supervised learning problem where the choice of weak labels are a design choice of the user. In line with our analysis, the success of self-supervised representations relies on picking pretext labels that capture useful information about the strong label such as invariances and spacial understanding \cite{noroozi2016unsupervised, misra2019self}. Conversely, weakly supervised learning can be viewed as a special case of self-supervision where the pretext task is selected from some naturally occurring label source \cite{jing2019self}.

%%% Local Variables:
%%% mode: latex
%%% TeX-master: "fast_main"
%%% End:

\vspace{-3pt}
\section{Discussion}
\label{sec:discussion}

Our work focuses on analyzing weakly supervised learning. We believe, however, that the same framework could be used to analyze other popular learning paradigms. One immediate possibility is to extend our analysis to consider multiple inconsistent sources of weak labels as in~\citep{khetan2017learning}. Other important extensions would be to include self-supervised learning and pre-training. The key technical difference between these settings and ours is that in both these settings the marginal distribution of features $P(X)$ is potentially different on the pretext task as compared to the downstream tasks of interest. Cases where the marginal $P(X)$ does not shift fall within the scope of our analysis. \\

Another option is to use our representation transfer analysis to study multi-task or meta-learning settings where one wishes to reuse an embedding across multiple tasks with shared characteristics with the aim of obtaining certified performance across all tasks. Finally, a completely different direction, based on the observation that our  analysis is predicated on the idea of ``cheap'' weak labels and ``costly'' strong labels, is to ask how best to allocate a finite budget for label collection when faced with varying quality label sources.

%\medskip
\small
\pagebreak
\setlength{\bibsep}{3pt}
\bibliographystyle{icml2020}
\bibliography{bibliography}

\clearpage

\onecolumn
\appendix

\section{Section \ref{sec: analysis} Proofs}

We begin by obtaining the decomposition that is instrumental in dividing the excess risk into two pieces that can be then studied separately. \\ 

\begin{prop}[Proposition \ref{thm: general bound}]

Suppose that $f^*$ is $L$-Lipschitz relative to $\mathcal{G}$. Then the excess risk  $\mathbb{E}  [\ell_{\hat{h}}(X,Y) - \ell_{h^*}(X,Y) ]$ bounded by,
\[ 2L \text{Rate}_m(\mathcal{G}, P_{X,W} ) +  \text{Rate}_n(\mathcal{F}, \hat{P}).  \]
\end{prop}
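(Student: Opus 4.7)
The plan is to decompose the excess risk by inserting a hybrid predictor that pairs the \emph{optimal} strong head $f^*$ with the \emph{learned} feature map $\hat{g}$. Concretely, I would write
\begin{align*}
\mathbb{E}\bigl[\ell_{\hat{h}}(X,Y)-\ell_{h^*}(X,Y)\bigr]
&= \underbrace{\mathbb{E}\bigl[\ell_{\hat{f}(\cdot,\hat{g})}(X,Y)-\ell_{f^*(\cdot,\hat{g})}(X,Y)\bigr]}_{(\mathrm{I})} \\
&\quad + \underbrace{\mathbb{E}\bigl[\ell_{f^*(\cdot,\hat{g})}(X,Y)-\ell_{f^*(\cdot,g^*)}(X,Y)\bigr]}_{(\mathrm{II})}.
\end{align*}
Term (I) is the excess risk of $\hat{f}$ on the augmented distribution $\hat{P}$, since by definition $\mathbb{E}_{(X,Z,Y)\sim\hat{P}}[\ell_f(X,Z,Y)] = \mathbb{E}[\ell_{f(\cdot,\hat{g})}(X,Y)]$. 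Because the $\hat{P}$-minimizer within $\mathcal{F}$ can only have smaller risk than $f^*$, Definition~\ref{assumption: step two rates} applied to $Q=\hat{P}$ gives $(\mathrm{I}) \leq \text{Rate}_n(\mathcal{F},\hat{P})$.

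For term (II) I would invoke relative Lipschitzness of $f^*$ to upper bound the integrand pointwise by $L\,\ell^\text{weak}(\beta_{\hat{g}}^\top\hat{g}(x), \beta_{g^*}^\top g^*(x))$. Both of our concrete choices of $\ell^\text{weak}$ (the $0$-$1$ loss and a norm-induced loss) satisfy the triangle inequality, so inserting the true weak label $W$ as an intermediate point yields
\begin{equation*}
\ell^\text{weak}(\beta_{\hat{g}}^\top\hat{g}(x),\beta_{g^*}^\top g^*(x))
\leq \ell^\text{weak}_{\hat{g}}(x,w) + \ell^\text{weak}_{g^*}(x,w).
\end{equation*}
Taking expectations and using Definition~\ref{assumption: rates regression} bounds $\mathbb{E}\,\ell^\text{weak}_{\hat{g}}(X,W)$ by $\text{Rate}_m(\mathcal{G},P_{X,W})$; since $g^*$ is the population minimizer on the weak task, $\mathbb{E}\,\ell^\text{weak}_{g^*}(X,W) \leq \mathbb{E}\,\ell^\text{weak}_{\hat{g}}(X,W)$, and hence $(\mathrm{II}) \leq 2L\,\text{Rate}_m(\mathcal{G},P_{X,W})$. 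Summing the two bounds yields the claim.

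The only mildly subtle point is the reading of $f^*$ in Definition~\ref{assumption: step two rates}: term~(I) is actually controlled by the excess risk of $\hat{f}$ over the true $\hat{P}$-minimizer, which is trivially at most its gap to the fixed $f^*$ used in $h^*$; I would note this at the start so there is no ambiguity. The rest is routine and the main obstacle is simply ensuring that the triangle inequality step for $\ell^\text{weak}$ is justified under the standing cases, which it is.
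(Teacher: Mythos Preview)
Your argument is correct and is essentially the paper's proof: the paper uses a three-term decomposition routing through the true embedding $g_0$ (so that $\beta_{g_0}^\top g_0(X)=W$ appears directly) and applies relative Lipschitzness twice, whereas you collapse this to a two-term split, apply Lipschitzness once between $\hat g$ and $g^*$, and then insert $W$ via the triangle inequality for $\ell^{\text{weak}}$---both routes land on $2L\,\text{Rate}_m+\text{Rate}_n$. One small wording slip in your closing note: the inequality there is stated backwards---the gap of $\hat f$ to the fixed $f^*$ is at most its excess risk over the $\hat P$-minimizer (since the minimizer has smaller $\hat P$-risk), which is exactly the direction you used correctly in the body when bounding $(\mathrm{I})$.
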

\begin{proof}[Proof of Proposition \ref{thm: general bound}]
Let us split the excess risk into three parts 
\begin{align*}
\mathbb{E} \big [\ell_{\hat{h}}(X,Y) - \ell_{h^*}(X,Y) \big  ]  = &\mathbb{E} \big [\ell_{ \hat{f}( \cdot , \hat{g} )}(X,Y) - \ell_{ f^*( \cdot , \hat{g} )}(X,Y) \big  ] \\
 &\qquad \qquad  +  \mathbb{E} \big [\ell_{ f^*( \cdot , \hat{g} )}(X,Y) - \ell_{ f^*( \cdot , g_0 )}(X,Y) \big  ] + \mathbb{E} \big [\ell_{ f^*( \cdot , g_0 )}(X,Y) - \ell_{ f^*( \cdot , g^{*} )}(X,Y) \big  ]. 
\end{align*}
By definition, the first term is bounded by $\text{Rate}_n(\mathcal{F}, \hat{P}) $. The relative Lipschitzness of $f^*$ delivers the following bound on the second and third terms respectively, 
\begin{align*}
\mathbb{E} \big [\ell_{ f^*( \cdot , \hat{g} )}(X,Y) - \ell_{ f^*( \cdot , g_0 )}(X,Y) \big ]
&\leq  L  \mathbb{E}_P \ell^{\text{weak}} \big ( \beta_{\hat{g}}^\top \hat{g}(X) , \beta_{g_0}^\top g_0(X)  \big ), \\
\mathbb{E} \big [\ell_{ f^*( \cdot , g_0 )}(X,Y) - \ell_{ f^*( \cdot , g^{*})}(X,Y) \big ]
&\leq  L  \mathbb{E}_P \ell^{\text{weak}} \big ( \beta_{g_0}^\top g_0(X), \beta_{g^{*}}^\top g^{*}(X)   \big ).
\end{align*}
Since $g^*$ attains minimal risk, and $W = \beta_{g_0}^\top g_0(X) $, the sum of these two terms  can be bounded by, 
\begin{align*}
2L   \mathbb{E}_P \ell^{\text{weak}} \big ( \beta_{\hat{g}}^\top \hat{g}(X) , W \big ) \leq 2L\text{Rate}_m(\mathcal{G}, P_{X,W} ). 
\end{align*}
Combining this with the bound on the first term yields the claim.

\end{proof}

The next two propositions show, for the two cases of $\ell^\text{weak}$ of interest, that the weak central condition is preserved (with a slight weakening in the constant) when replacing the population distribution $P$ by the distribution $\hat{P}$ obtained by replacing the true weak label $W$ by the learned weak estimate $\hat{g}(X)$. \\

\begin{prop}[Proposition \ref{prop: categorical strong implies epsilon weak}]
Suppose that $\ell^\text{weak}(w,w') = \mathbb{1}\{ w \neq w'\}$ and that $\ell$ is bounded by $B>0$, $\mathcal{F}$ is Lipschitz relative to $\mathcal{G}$, and that $(\ell, P, \mathcal{F})$ satisfies the $\varepsilon$-weak central condition. Then $(\ell, \hat{P}, \mathcal{F})$ satisfies the $\varepsilon + \mathcal{O}\big (\text{Rate}_m(\mathcal{G}, P_{X,W} ) \big )$-weak central condition with probability at least $1-\delta$. 
\end{prop}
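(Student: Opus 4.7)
My plan is to establish the new weak central condition by direct comparison of the expectations defining the $\varepsilon$-weak central condition under $P$ and under $\hat P$. The key observation is that both expectations can be rewritten as integrals against the common marginal $P_{X,Y}$: the one under $P$ has the integrand evaluated at $Z = g_0(X)$, while the one under $\hat P$ has it evaluated at $Z = \hat g(X)$. Introduce $a_f(X,Y) := \ell_f(X,\hat g(X),Y) - \ell_{f^*}(X,\hat g(X),Y)$ and $b_f(X,Y) := \ell_f(X,g_0(X),Y) - \ell_{f^*}(X,g_0(X),Y)$, so that the $\varepsilon$-weak central condition reads $\mathbb{E}_{P_{X,Y}}[e^{-\eta b_f}] \le e^{\eta\varepsilon}$ and the goal is to bound $\mathbb{E}_{P_{X,Y}}[e^{-\eta a_f}]$. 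The same $f^*$ will serve as the witness for the perturbed condition, since $f^* \in \mathcal{F}$.

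The central calculation is a pointwise comparison of $e^{-\eta a_f}$ with $e^{-\eta b_f}$ split on the event $E := \{\beta_{\hat g}^{\top}\hat g(X) = W\}$. On $E$, the relative-Lipschitz assumption applied with $g = \hat g$ and $g' = g_0$ (using $W = \beta_{g_0}^{\top}g_0(X)$) implies $\ell^{\text{weak}}(\beta_{\hat g}^{\top}\hat g(X), \beta_{g_0}^{\top}g_0(X)) = 0$, so $\ell_f(X,\hat g(X),Y) = \ell_f(X,g_0(X),Y)$ and likewise for $f^*$; hence $a_f = b_f$ and the two integrands agree. On $E^{c}$, I simply use $0 \le \ell \le B$ to get $|e^{-\eta a_f}-e^{-\eta b_f}| \le 2e^{\eta B}$. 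Integrating,
\begin{equation*}
\mathbb{E}_{\hat P}\bigl[e^{-\eta(\ell_f-\ell_{f^*})}\bigr] \;\le\; \mathbb{E}_{P}\bigl[e^{-\eta(\ell_f-\ell_{f^*})}\bigr] + 2e^{\eta B}\,\Pr(E^c).
\end{equation*}
Since $\Pr(E^c) = \mathbb{E}\,\mathbb{1}\{\beta_{\hat g}^{\top}\hat g(X)\neq W\} = \mathbb{E}[\ell^{\text{weak}}_{\hat g}(X,W)]$, Definition~\ref{assumption: rates regression} bounds this by $\text{Rate}_m(\mathcal{G},P_{X,W})$ on an event of probability at least $1-\delta$ over $\mathcal{D}_m^{\text{weak}}$, and we condition on this event for the remainder of the argument.

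Combining with the $\varepsilon$-weak central condition for $(\ell,P,\mathcal{F})$ yields $\mathbb{E}_{\hat P}[e^{-\eta(\ell_f-\ell_{f^*})}] \le e^{\eta\varepsilon} + 2e^{\eta B}\,\text{Rate}_m(\mathcal{G},P_{X,W})$. To convert this additive slack into the multiplicative form $e^{\eta(\varepsilon+\varepsilon')}$ required by Definition~\ref{def: central condition}, factor out $e^{\eta\varepsilon}$ and apply $\log(1+x)\le x$ to pick up $\varepsilon' = \tfrac{2}{\eta}e^{\eta B-\eta\varepsilon}\,\text{Rate}_m(\mathcal{G},P_{X,W}) = \mathcal{O}\bigl(e^{B}\text{Rate}_m(\mathcal{G},P_{X,W})\bigr)$ for the implicit constant $\eta$, which is exactly the advertised bound. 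The whole argument is uniform in $f \in \mathcal{F}$ because the relative-Lipschitz property and the boundedness by $B$ hold uniformly over $\mathcal{F}$.

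I do not expect a serious obstacle: the entire proof is a one-line split on the event $E$ plus a conversion lemma, and the only point to be careful about is recording the $1-\delta$ failure probability inherited from the definition of $\text{Rate}_m$. The reason the categorical loss gives the cleaner dependence (compared to the continuous case handled by Proposition~\ref{prop: regression strong implies epsilon weak}) is precisely that $E$ is a genuine $0/1$ event, so the integrand difference is truly killed on $E$ rather than only small.
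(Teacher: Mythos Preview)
Your proposal is correct and follows essentially the same approach as the paper: split the moment-generating expectation on the event $E=\{\beta_{\hat g}^{\top}\hat g(X)=W\}$, use the relative-Lipschitz property with the $0/1$ loss to make the integrands coincide on $E$, and bound crudely by $e^{\eta B}$ on $E^c$. Your execution is in fact slightly cleaner than the paper's, since you work additively at the level of the expectation and then convert via $\log(1+x)\le x$, whereas the paper's written derivation splits $\tfrac{1}{\eta}\log\mathbb{E}[\cdot]$ directly into two $\log$-terms (which is not literally valid) before applying $\log x\le x$; both routes land on the same $\varepsilon + \mathcal{O}(e^{B}\,\text{Rate}_m)$ bound.
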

\begin{proof}[Proof of Proposition \ref{prop: categorical strong implies epsilon weak}]
Note first that 
\[\frac{1}{\eta} \log \mathbb{E}_{\hat{P}} \exp {\big (- \eta (\ell_f - \ell_{f^*}) \big )}  =  \frac{1}{\eta} \log \mathbb{E}_{P} \exp{ \big (- \eta (\ell_{f(\cdot, \hat{g}) } - \ell_{f^*(\cdot, \hat{g})}) \big )} \]
 where we recall that we have overloaded the loss $\ell$ to include both $\ell_f$ and $\ell_h$.  To prove $(\ell, \hat{P}, \mathcal{F})$ satisfies the central condition we therefore need to bound $\frac{1}{\eta} \log \mathbb{E}_{P} \exp {\big (- \eta (\ell_{f(\cdot, \hat{g})} - \ell_{f^*(\cdot, \hat{g})}) \big )}$ above by some constant. We begin bounding (line by line explanations are below),
\begin{align*}
\begin{split}
\frac{1}{\eta} \log \mathbb{E}_{P} \exp{\big (- \eta (\ell_{f(\cdot, \hat{g})} - \ell_{f^*(\cdot, \hat{g})}) \big )} 
&= \frac{1}{\eta} \log \mathbb{E}_{P} \bigg[ \exp {\big (- \eta (\ell_{f(\cdot, \hat{g})} - \ell_{f^*(\cdot, \hat{g})}) \big )} \mathbb{1}\{\beta_{\hat{g}} ^\top  \hat{g}(X) = W \} \bigg ]  \\
&\qquad \qquad + \frac{1}{\eta} \log \mathbb{E}_{P} \bigg [ \exp {\big (- \eta (\ell_{f(\cdot, \hat{g})} - \ell_{f^*(\cdot, \hat{g})}) \big )}\mathbb{1}\{\beta_{\hat{g}} ^\top \hat{g}(X) \neq W \} \bigg ]   \\
& = \frac{1}{\eta} \log \mathbb{E}_{P} \bigg[ \exp {\big (- \eta (\ell_{f(\cdot, g_0)} - \ell_{f^*(\cdot, g_0)}) \big )} \mathbb{1}\{ \beta_{\hat{g}} ^\top \hat{g}(X) = W \} \bigg ]  \\
&\qquad \qquad + \frac{1}{\eta} \log \mathbb{E}_{P} \bigg [ \exp {\big (- \eta (\ell_{f(\cdot, \hat{g})} - \ell_{f^*(\cdot, \hat{g})}) \big )} \mathbb{1}\{ \beta_{\hat{g}} ^\top \hat{g}(X) \neq W \} \bigg ]  
\end{split}
\end{align*}

where the second line follows from the fact that for any $f$ in the event $\{ \beta_{\hat{g}} ^\top\hat{g}(X) = W \}$ we have $\ell_{f(\cdot, \hat{g})} = \ell_{f(\cdot, g_0)}$ and $\ell_{f^*(\cdot, \hat{g})} = \ell_{f^*(\cdot, g_0)}$. This is because $|\ell_{f(\cdot, \hat{g})}(X,Y) - \ell_{f(\cdot, g_0)}(X,Y) | \leq  L \ell^\text{weak} (\beta_{\hat{g}} ^\top \hat{g}(X) , \beta_{g_0}^\top g_0(X)) = L \ell^\text{weak} (W,W) = 0$. 

Dropping the indicator $\mathbb{1}\{ \beta_{\hat{g}} ^\top \hat{g}(X) = W\}$ from the integrand yields $\frac{1}{\eta} \log \mathbb{E}_{P} \big [e ^{- \eta (\ell_{f} - \ell_{f^*} )} \big ]$ which is upper bounded by $\varepsilon$ by the weak central condition. We may therefore upper bound the second term by,
\begin{align*}
\begin{split}
\frac{1}{\eta} \log \mathbb{E}_{P} \bigg [ \exp { \big (- \eta (\ell_{f(\cdot, \hat{g})} - \ell_{f^*(\cdot, \hat{g})}) \big )} \mathbb{1}\{\beta_{\hat{g}} ^\top  \hat{g}(X) \neq W \} \bigg ]  & \leq  \frac{1}{\eta} \log \mathbb{E}_{P} \bigg [\exp{(\eta B)} \mathbb{1}\{ \beta_{\hat{g}} ^\top \hat{g}(X) \neq W \} \bigg ]  \\
& \leq  \frac{\exp{(\eta B)}  }{\eta}  \mathbb{P}_{P}(\beta_{\hat{g}} ^\top \hat{g}(X) \neq W )  \\
& = \frac{\exp{(\eta B)}}{\eta} \text{Rate}(\mathcal{G}, \mathcal{D}_m^\text{weak}). \\
\end{split}
\end{align*}
The first inequality uses the fact that $\ell$ is bounded by $B$, the second line uses the basic fact $\log x \leq x$, and the final equality holds with probability $1-\delta$ by assumption. Combining this bound with the $\varepsilon$ bound on the first term yields the claimed result.
\end{proof}

\begin{prop}[Proposition \ref{prop: regression strong implies epsilon weak}]
Suppose that $\ell^\text{weak}(w,w') = \norm{w-w'}$ and that $\ell$ is bounded by $B >0$, $\mathcal{F}$ is $L$-Lipschitz relative to $\mathcal{G}$, and that $(\ell, P, \mathcal{F})$ satisfies the $\varepsilon$-weak central condition. Then $(\ell, \hat{P}, \mathcal{F})$ satisfies the $\varepsilon + \mathcal{O}\big (\sqrt{L \text{Rate}_m(\mathcal{G}, P_{X,W} )}\big )$-weak central condition with probability at least $1-\delta$. 
\end{prop}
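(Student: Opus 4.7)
The proof proceeds in direct analogy with Proposition \ref{prop: categorical strong implies epsilon weak}, but must replace the hard event split $\{\beta_{\hat g}^\top \hat g(X) = W\}$ versus its complement (available only for categorical $\mathcal{W}$) with a continuous-slack argument driven by the relative Lipschitz property. As in the categorical proof, first rewrite $\mathbb{E}_{\hat P}[\exp(-\eta(\ell_f - \ell_{f^*}))] = \mathbb{E}_P[\exp(-\eta(\ell_{f(\cdot,\hat g)} - \ell_{f^*(\cdot,\hat g)}))]$. Since $W = \beta_{g_0}^\top g_0(X)$ and $\mathcal{F}$ is $L$-Lipschitz relative to $\mathcal{G}$, one has $|\ell_{f(\cdot, \hat g)}(X,Y) - \ell_{f(\cdot, g_0)}(X,Y)| \leq L \|\beta_{\hat g}^\top \hat g(X) - W\|$ pointwise, and the same with $f^*$ in place of $f$. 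Setting $\Delta := 2L\|\beta_{\hat g}^\top \hat g(X) - W\|$, this yields the pointwise bound
\begin{equation*}
e^{-\eta(\ell_{f(\cdot,\hat g)} - \ell_{f^*(\cdot,\hat g)})} \leq e^{-\eta(\ell_{f(\cdot,g_0)} - \ell_{f^*(\cdot,g_0)})} \cdot e^{\eta \Delta}.
\end{equation*}

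Next, split $e^{\eta \Delta} = 1 + (e^{\eta \Delta} - 1)$. Taking expectations under $P$, the ``$1$'' piece contributes exactly $\mathbb{E}_P[e^{-\eta(\ell_{f(\cdot,g_0)} - \ell_{f^*(\cdot,g_0)})}] \leq e^{\eta \varepsilon}$ by the $\varepsilon$-weak central condition on $(\ell, P, \mathcal{F})$. To control the remainder $\mathbb{E}_P[e^{-\eta(\ell - \ell^*)}(e^{\eta \Delta} - 1)]$, apply Cauchy--Schwarz:
\begin{equation*}
\mathbb{E}_P\bigl[e^{-\eta(\ell - \ell^*)}(e^{\eta \Delta} - 1)\bigr] \leq \sqrt{\mathbb{E}_P[e^{-2\eta(\ell - \ell^*)}]} \cdot \sqrt{\mathbb{E}_P[(e^{\eta \Delta} - 1)^2]}.
\end{equation*}
The first square root is at most $e^{\eta B}$ since $|\ell - \ell^*| \leq B$. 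For the second, note that $\Delta$ lies in a bounded interval $[0, M]$ (as $\|\beta_{\hat g}^\top \hat g - W\|$ is bounded under the problem setup) and observe that $x \mapsto (e^x - 1)^2/x$ is increasing on $(0,\infty)$ (a one-line calculus check), so $(e^{\eta \Delta} - 1)^2 \leq C(M) \cdot \eta \Delta$. Taking expectations and invoking Definition \ref{assumption: rates regression} gives $\mathbb{E}_P[(e^{\eta \Delta} - 1)^2] \lesssim L \cdot \text{Rate}_m(\mathcal{G}, P_{X,W})$ with probability at least $1-\delta$, whose square root is $\mathcal{O}(\sqrt{L \cdot \text{Rate}_m(\mathcal{G}, P_{X,W})})$.

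Assembling everything yields $\mathbb{E}_{\hat P}[e^{-\eta(\ell_f - \ell_{f^*})}] \leq e^{\eta \varepsilon} + e^{\eta B}\cdot \mathcal{O}\bigl(\sqrt{L\cdot\text{Rate}_m(\mathcal{G}, P_{X,W})}\bigr)$. Dividing through by $e^{\eta \varepsilon}$ and applying $\log(1+x)\leq x$ (as in the categorical proof) to $\tfrac{1}{\eta}\log$ of the right-hand side produces the claimed $\varepsilon + \mathcal{O}\bigl(\sqrt{L e^{B}\cdot \text{Rate}_m(\mathcal{G}, P_{X,W})}\bigr)$-weak central condition.

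The main technical obstacle is controlling $\mathbb{E}_P[(e^{\eta \Delta} - 1)^2]$: the naive estimate $(e^{\eta \Delta} - 1)^2 \leq (\eta \Delta)^2 e^{2M}$ combined with $\mathbb{E}[\Delta^2] \leq M \cdot \mathbb{E}[\Delta]$ would still give a $\sqrt{\text{Rate}_m}$ rate, but drags along an extra $\sqrt{L}$ factor. Using the sharper monotonicity bound $(e^x - 1)^2 \lesssim x$ on bounded intervals is what keeps only a single factor of $\sqrt{L}$ in the statement. A secondary subtlety is handling the high-probability nature of $\text{Rate}_m$: the $1-\delta$ event on which $\hat g$ attains its weak generalization bound is inherited directly, since the rest of the argument is deterministic given the realized $\hat g$.
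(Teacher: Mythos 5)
Your proof takes a genuinely different route from the paper's. The paper splits the expectation over the event $\{\|\beta_{\hat g}^\top\hat g(X)-W\|\le\delta/L\}$ and its complement, bounds the first piece by $\varepsilon+2\delta$ via relative Lipschitzness, bounds the second via Markov's inequality by $\frac{L e^{\eta B}}{\delta\eta}\text{Rate}_m(\mathcal{G},P_{X,W})$, and then optimizes over the threshold $\delta$ --- the balancing of those two terms is exactly where the square root comes from. You instead use the pointwise multiplicative perturbation $e^{\eta\Delta}$, peel off the central-condition term via $e^{\eta\Delta}=1+(e^{\eta\Delta}-1)$, and control the remainder with Cauchy--Schwarz plus the elementary bound $(e^x-1)^2\lesssim x$ on a bounded interval. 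Both arguments ultimately reduce to the first-moment bound $\mathbb{E}\|\beta_{\hat g}^\top\hat g(X)-W\|\le\text{Rate}_m$, and both inherit the $1-\delta$ event from the weak learner in the same way. One notable observation: your Cauchy--Schwarz step is actually lossy. Since $e^{\eta\Delta}-1\ge 0$ and $e^{-\eta(\ell-\ell^*)}\le e^{\eta B}$ pointwise, you could bound the remainder directly by $e^{\eta B}\,\mathbb{E}[e^{\eta\Delta}-1]\lesssim e^{\eta B}\,\eta\,\mathbb{E}[\Delta]=\mathcal{O}(L\,\text{Rate}_m)$, i.e.\ a \emph{linear} rather than square-root dependence on $\text{Rate}_m$ --- stronger than the stated proposition, and stronger than the paper's own threshold argument, which cannot avoid the square root.

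There is, however, one step that needs repair: you assert that $\Delta=2L\|\beta_{\hat g}^\top\hat g(X)-W\|$ lies in a bounded interval $[0,M]$ ``under the problem setup.'' Nothing in the setup bounds $\mathcal{W}$ or the weak predictors, so this is an unstated extra assumption; worse, even granting such an $M$, it scales like $L$ times the sup of the weak error, so your constant $C(M)\approx(e^{\eta M}-1)^2/(\eta M)$ is exponential in $L$, which silently destroys the advertised single factor of $\sqrt{L}$. The fix is to use the boundedness of $\ell$: each of the two perturbation terms satisfies both the Lipschitz bound and the trivial bound $B$, so you may take $\Delta:=2\min\{L\|\beta_{\hat g}^\top\hat g(X)-W\|,\,B\}\in[0,2B]$, which still obeys the pointwise inequality $e^{-\eta(\ell_{f(\cdot,\hat g)}-\ell_{f^*(\cdot,\hat g)})}\le e^{-\eta(\ell_{f(\cdot,g_0)}-\ell_{f^*(\cdot,g_0)})}e^{\eta\Delta}$ and still has $\mathbb{E}[\Delta]\le 2L\,\text{Rate}_m(\mathcal{G},P_{X,W})$. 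With $M=2B$ the constant depends only on $B$ and $\eta$ and your argument goes through; your final constant is then of order $e^{c\eta B}$ rather than the paper's $e^{\eta B/2}$, but both are absorbed into the $\mathcal{O}$.
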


\begin{proof}[Proof of Proposition \ref{prop: regression strong implies epsilon weak}]
For any $\delta > 0$ we can split the objective we wish to bound into two pieces as follows,
\begin{align*}
\frac{1}{\eta} \log \mathbb{E}_{\hat{P}} \exp { \big (- \eta (\ell_f - \ell_{f^*}) \big )} &= \underbrace{\frac{1}{\eta} \log \mathbb{E}_{\hat{P}} \bigg[ \exp { \big (- \eta (\ell_f - \ell_{f^*}) \big )} \mathbb{1} \bigg \{ \norm{\beta_{\hat{g}} ^\top \hat{g}(X) - W} \leq \frac{\delta}{L} \bigg \} \bigg ]}_{ =: \text{ I}}   \\
&\qquad \qquad + \underbrace{\frac{1}{\eta} \log \mathbb{E}_{\hat{P}} \bigg [ \exp {\big (- \eta (\ell_f - \ell_{f^*}) \big )}\mathbb{1} \bigg \{ \norm{\beta_{\hat{g}} ^\top \hat{g}(X) - W} > \frac{\delta}{L} \bigg \}  \bigg ] }_{ =: \text{ II}}.
\end{align*}

We will bound each term separately. The first term can be rewritten as,

\[ \text{I} = \frac{1}{\eta} \log \mathbb{E}_{P} \bigg[ \exp {\big (- \eta (\ell_{f(\cdot , \hat{g})}  - \ell_{f^*(\cdot, \hat{g})}) \big )} \mathbb{1} \bigg \{ \norm{\beta_{\hat{g}} ^\top \hat{g}(X) - W} \leq \frac{\delta}{L} \bigg \} \bigg ] \]

Let us focus for a moment specifically on the exponent, which we can break up into three parts,

\[ \ell_{f(\cdot , \hat{g})}  - \ell_{f^*(\cdot, \hat{g})}  = (\ell_{f(\cdot ,  g_0)}  - \ell_{f^*(\cdot, g_0)} ) 
+ (\ell_{f(\cdot , \hat{g})}  -  \ell_{f(\cdot, g_0)} ) 
+(\ell_{f^*(\cdot , g_0)}  - \ell_{f^*(\cdot, \hat{g})} ). \]

In the event that $ \bigg \{ \norm{\beta_{\hat{g}} ^\top \hat{g}(X) - W} \leq \frac{\delta}{L} \bigg \}$ the second and third terms can be bounded using the Lipschitzness of $\ell$, and the relative Lipschitzness of $\mathcal{F}$ with respect to $\mathcal{G}$, 

\begin{align*}
 |\ell_{f(\cdot , \hat{g})}(X,Y)  -  \ell_{f(\cdot, g_0)}(X,Y) |
+|\ell_{f^*(\cdot , g_0)}(X,Y)  - \ell_{f^*(\cdot, \hat{g})}(X,Y) | &\leq 
 L  \norm{\beta_{\hat{g}} ^\top \hat{g} - \beta_{g_0}^\top g_0} + L  \norm{\beta_{\hat{g}} ^\top \hat{g} - \beta_{g_0}^\top g_0}  \\
&=  2L \norm{\beta_{\hat{g}} ^\top \hat{g} - W} \\
&\leq 2\delta. 
\end{align*}

Plugging this upper bound into the expression for $\text{I}$, we obtain the following bound 

\begin{align*}
\text{I} &\leq \frac{1}{\eta} \log \mathbb{E}_{P} \bigg[ \exp{ \big (- \eta (\ell_{f}  - \ell_{f^*} ) \big )}\mathbb{1} \bigg \{ \norm{\beta_{\hat{g}} ^\top \hat{g}(X) - W} \leq \frac{\delta}{L} \bigg \} \bigg ] \\
&\qquad \qquad +  \frac{1}{\eta} \log \mathbb{E}_{P} \bigg[ \exp{ (2 \eta \delta )}\mathbb{1} \bigg \{ \norm{\beta_{\hat{g}} ^\top \hat{g}(X) - W} \leq \frac{\delta}{L} \bigg \} \bigg ]  \\
&\leq \frac{1}{\eta} \log \mathbb{E}_{P} \big[ \exp{ \big (- \eta (\ell_{f}  - \ell_{f^*} ) \big )} \big ] + 2\delta \\
&\leq \varepsilon + 2\delta
\end{align*}

where in the second line we have simply dropped the indicator function from both integrands, and for the third line we have appealed to the $\varepsilon$-weak central condition. 
Next we proceed to bound the second term (line by line explanations are below) $\text{II}$ by,
\vspace{-5pt}

\begin{align*}
\frac{1}{\eta} \log \mathbb{E}_{\hat{P}} \bigg [   \exp{\big (- \eta (\ell_f - \ell_{f^*}) \big )}\mathbb{1} \bigg \{ \norm{ \beta_{\hat{g}} ^\top \hat{g}(X) - W} > \frac{\delta}{L} \bigg \}  \bigg ] 
&\leq\frac{1}{\eta} \log \mathbb{E}_{\hat{P}} \bigg [ \exp{(\eta B)}\mathbb{1} \bigg \{ \norm{ \beta_{\hat{g}} ^\top \hat{g}(X) - W} > \frac{\delta}{L} \bigg \}  \bigg ] \\ 
&\leq \frac{\exp{(\eta B)}}{\eta} \mathbb{E}_{P_{X,W}} \bigg [ \mathbb{1} \bigg \{ \norm{\beta_{\hat{g}} ^\top \hat{g}(X) - W} > \frac{\delta}{L} \bigg \}  \bigg ] \\
&= \frac{\exp{(\eta B)}}{\eta} \mathbb{P}_{P_{X,W}} \bigg ( \norm{\beta_{\hat{g}} ^\top \hat{g}(X) - W} > \frac{\delta}{L} \bigg ) \\
&\leq \frac{L\exp{(\eta B)}}{\delta \eta} \mathbb{E}_{P} \norm{\beta_{\hat{g}} ^\top \hat{g}(X) - W} \\
&\leq \frac{L\exp{(\eta B)}}{\delta \eta} \text{Rate}_m(\mathcal{G},P_{X,W})
\end{align*}

where the first line follows since $\ell$ is bounded by $B$, the second line since $\log x \leq x$, the fourth line is an application of Markov's inequality, and the final inequality holds by definition of $\text{Rate}_m(\mathcal{G},P_{X,W})$ with probability $1-\delta$. Collecting these two results together we find that 

\begin{align*}
\frac{1}{\eta} \log \mathbb{E}_{\hat{P}} \exp{\big (- \eta (\ell_f - \ell_{f^*}) \big )} &= \text{I} + \text{II} \leq  \varepsilon + 2 \delta + \frac{L\exp{(\eta B)}}{\delta \eta} \text{Rate}_m(\mathcal{G}, P_{X,W}).
\end{align*}

Since this holds for any $\delta > 0$ we obtain the bound,
\vspace{-10pt}

\begin{align*}
\frac{1}{\eta} \log \mathbb{E}_{\hat{P}} \exp{\big (- \eta (\ell_f - \ell_{f^*}) \big )} &\leq \varepsilon + \min_{\delta > 0} \bigg \{2 \delta + \frac{L\exp{(\eta B)}}{\delta \eta} \text{Rate}_m(\mathcal{G}, P_{X,W})
 \bigg \} \\
&= \varepsilon + 2 \sqrt{2} \sqrt{ \frac{L \exp{(\eta B)} }{\eta}} \sqrt{\text{Rate}_m(\mathcal{G}, P_{X,W})}.
\end{align*}

The minimization is a simple convex problem that is solved by picking $\delta$ to be such that the two terms are balanced.
\end{proof}

The next proposition shows that the weak central condition is sufficient to obtain excess risk bounds. This result generalizes Theorem $1$ of \cite{mehta2016fast}, which assumes the strong central condition holds. In contrast, we make only need the weaker assumption that the weak central condition holds. \\

\begin{prop}[Proposition \ref{prop: generalization for weak central condition}]
Suppose $(\ell, Q, \mathcal{F})$ satisfies the $\varepsilon$-weak central condition, $\ell$ is bounded by $B>0$, each $\mathcal{F}$ is $L'$-Lipschitz in its parameters in the $\ell_2$ norm, $\mathcal{F}$ is contained in the Euclidean ball of radius $R$, and $\mathcal{Y}$ is compact. Then when $\text{Alg}_n(\mathcal{F},Q)$ is ERM, the excess risk $\mathbb{E}_{Q}  [\ell_{\hat{f}}(U) - \ell_{f^*}(U)  ]$ is bounded by,

\[  \mathcal{O} \bigg ( V \frac{d \log \frac{RL'}{\varepsilon}  + \log \frac{1}{\delta} }{n} + V \varepsilon \bigg ) .  \]

with probability at least $1-\delta$, where $V = B + \varepsilon$. 
\end{prop}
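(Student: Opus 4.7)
The plan is to obtain a Bernstein-type variance-mean inequality from the $\varepsilon$-weak central condition, then combine it with a covering argument over the parameter space and a peeling argument tailored to ERM. This follows the same template as used by \cite{mehta2016fast} for the strong central condition, with an extra $\varepsilon$-slack threaded through each step.

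First I would convert the one-sided MGF bound $\mathbb{E}_Q[e^{-\eta(\ell_f-\ell_{f^*})}] \leq e^{\eta\varepsilon}$ into a variance bound. Since $\ell \in [0,B]$ and the central condition gives $R(f^*) \leq R(f) + \varepsilon$ (Jensen), the excess-loss variable $\ell_f - \ell_{f^*}$ lies in $[-B,B]$. A second-order lower bound on $e^{-\eta x}$ of the form $e^{-\eta x} \geq 1 - \eta x + c(\eta B)\,\eta^2 x^2$, valid for $|x| \leq B$, combined with the MGF bound and $\log(1+t) \leq t$, would yield a Bernstein-type inequality
\[
\mathbb{E}_Q\bigl[(\ell_f - \ell_{f^*})^2\bigr] \leq c_1 V \bigl(\mathbb{E}_Q[\ell_f - \ell_{f^*}] + \varepsilon\bigr),
\]
for a constant $c_1$ depending only on $\eta$ and $B$, where $V = B+\varepsilon$.

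Next I would construct a cover $\mathcal{F}_\epsilon$ of $\mathcal{F}$ in the sup-loss pseudometric. By $L'$-Lipschitzness of $f$ in its parameters and the Euclidean radius $R$, an $\epsilon/L'$-cover of the parameter ball induces an $\epsilon$-cover of $\mathcal{F}$ of size $|\mathcal{F}_\epsilon| \leq (3RL'/\epsilon)^d$. For each $f \in \mathcal{F}_\epsilon$, Bernstein's inequality applied to the i.i.d.\ sum $\sum_i (\ell_f - \ell_{f^*})(U_i)$, with variance controlled by Step~1 and range controlled by $V$, gives after a union bound that with probability at least $1-\delta$,
\[
\bigl(R(f) - R(f^*)\bigr) - \bigl(\hat R(f) - \hat R(f^*)\bigr) \;\leq\; \sqrt{\tfrac{c_2 V (R(f)-R(f^*)+\varepsilon)\, L_n}{n}} + \tfrac{c_2 V L_n}{n}
\]
uniformly over $\mathcal{F}_\epsilon$, where $L_n := d \log(RL'/\epsilon) + \log(1/\delta)$.

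Finally I would close the loop by peeling. Pick $\hat f^\circ \in \mathcal{F}_\epsilon$ within $\epsilon$ (in sup-loss) of the ERM $\hat f$; by the ERM inequality and the cover property, $\hat R(\hat f^\circ) - \hat R(f^*) \leq 2\epsilon$. Combining this with Step~2 at $f = \hat f^\circ$ gives a one-sided inequality $\Delta \leq 2\epsilon + \mathcal{O}\bigl(\sqrt{V(\Delta + \varepsilon)L_n/n} + V L_n/n\bigr)$ for $\Delta := R(\hat f^\circ) - R(f^*)$. A standard AM-GM step (absorb $\sqrt{V\Delta L_n/n} \leq \Delta/2 + \mathcal{O}(V L_n/n)$) solves this to yield $\Delta = \mathcal{O}(V L_n/n + V\varepsilon + \epsilon)$, and the bound on $\hat f$ itself follows from $R(\hat f) \leq R(\hat f^\circ) + \epsilon$. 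Choosing $\epsilon = \varepsilon$ produces the stated $\mathcal{O}\bigl(V \tfrac{d\log(RL'/\varepsilon)+\log(1/\delta)}{n} + V\varepsilon\bigr)$ bound. The main obstacle is Step~1: extracting a two-sided variance-mean inequality from a one-sided MGF bound while tracking the $\varepsilon$-slack linearly, since the central condition per se does not bound the variance of $\ell_f - \ell_{f^*}$. Once that is in place, Steps~2--3 follow the standard fast-rates template, modulo care that the $\varepsilon$-slack does not compound when solving the quadratic peeling inequality.
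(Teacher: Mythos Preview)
Your approach is correct and would yield the stated bound, but it differs substantively from the paper's argument. The paper does \emph{not} pass through a Bernstein-type variance--mean inequality. Instead it works directly with the moment-generating function: it introduces a ``modification'' lemma that perturbs $\Delta_f + \varepsilon$ downward to $\widetilde\Delta_f + \varepsilon$ so that the MGF inequality $\mathbb{E}[e^{-\eta(\Delta_f+\varepsilon)}] \le 1$ becomes an \emph{equality} at some $\eta_f \in [\eta,2\eta]$, then invokes Corollaries~7.4--7.5 of \cite{van2015fast} to bound $\log \mathbb{E}[e^{-\eta_f/2(\widetilde\Delta_f+\varepsilon)}]$ by $-c\,\mathbb{E}[\widetilde\Delta_f+\varepsilon]$, and finally applies Cram\'er--Chernoff and a union bound over a cover of the \emph{bad} set $\mathcal{F}_{\beta_n} = \{f : \mathbb{E}\Delta_f \ge a/n\}$ to conclude that ERM selects no such $f$.

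Your route---MGF $\Rightarrow$ Bernstein condition $\Rightarrow$ Bernstein's inequality $+$ covering $+$ AM--GM peeling---is the more textbook localization template and arguably more transparent; the obstacle you flag in Step~1 is real but surmountable exactly as you sketch (the quadratic lower bound $e^{-t}-1+t \ge \tfrac{1}{2}e^{-\eta B}t^2$ on $[-\eta B,\eta B]$ does the job, giving $\mathbb{E}[\Delta_f^2] \le \tfrac{2e^{\eta B}}{\eta}(\mathbb{E}\Delta_f + C\varepsilon)$). The paper's route avoids this detour and the subsequent quadratic-inequality solve, at the cost of the somewhat delicate modification lemma and reliance on the cited corollaries. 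Both approaches use the same covering-number bound and arrive at the same final rate; yours may even shave a factor of $V$ off the $\varepsilon$ term, though that is absorbed into the $\mathcal{O}(\cdot)$.
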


\begin{proof}[Proof of Proposition \ref{prop: generalization for weak central condition}] 

Before beginning the proof in earnest, let us first introduce a little notation, and explain the high level proof strategy. We use the shorthand $\Delta_f = \ell_f - \ell_{f^*}$. Throughout this proof we are interested in the underlying distribution $Q$. So, to avoid clutter, throughout the proof we shall write $\mathbb{E}$ and $\mathbb{P}$ as short hand for  $\mathbb{E}_{U \sim Q}$ and $\mathbb{P}_{U \sim Q}$. \\ 

Our strategy is as follows: we wish to determine an $a > 0$ for which, with high probability, ERM does not select a function $f \in \mathcal{F}$ such that $\mathbb{E}\Delta_f \geq  \frac{a}{n}$. 
Defining $\mathcal{F}_\beta = \{ f \in \mathcal{F} : \mathbb{E} \Delta_f \geq \beta \}$ this is equivalent to showing that, with high probability, ERM does not select a function $f \in \mathcal{F}_{\beta_n}$ where $\beta_n = \frac{a}{n}$.  In turn this can be re-expressed as showing with high probability that,
\begin{equation}\label{eqn: main thm proof}
\frac{1}{n} \sum_{j=1}^n \Delta_{f}(U_j) >  0
\end{equation}
 for all  $ f \in \mathcal{F}_{\beta_n}$, where the random variables $\{U_j\}_j$ are i.i.d samples from $Q$. In order to prove this we shall take a finite cover $\{ f_1, f_2, \ldots , f_s\}$ of our function class $\mathcal{F}_{\beta_n}$ and show that, with high probability $\frac{1}{n} \sum_{j=1}^n \Delta_{f}(U_j) >  c$ for all $f_i$ for some constant $c >0$ depending on the radius of the balls. To do this, we use the central condition, and two important tools from probability whose discussion we postpone until Appendix Section \ref{ap: prob tools},  to bound the probability of selecting each $f_i$, then apply a simple union bound. This result, combined with the fact that every element of $\mathcal{F}_{\beta_n}$ is close to some such $f_i$ allows us to derive equation (\ref{eqn: main thm proof}) for all members of the class $\mathcal{F}_{\beta_n}$.

With the strategy laid out, we are now ready to begin the proof in detail. We start by defining the required covering sets. Specifically, let $\mathcal{F}_{\beta_n, \varepsilon} $ be an optimal proper\footnote{ For a metric space $(M,\rho)$, let $S \subseteq M$. A set $E \subseteq M$  is an $\varepsilon$-cover for $S$, if for every $s \in S$ there is an $e \in E$ such that $\rho(s,e) \leq \varepsilon$. An $\varepsilon$-cover is optimal if it has minimal cardinality out of all $\varepsilon$-covers. $E$ is known as a proper cover if $E \subseteq S$.} $\varepsilon/L's$-cover of $\mathcal{F}_{\beta_n}$ in the $\ell_2$-norm, where we will pick $s$ later. It is a classical fact (see e.g. \cite{carl1990entropy} ) that the $d$-dimensional $\ell_2$-ball of radius $R$ has $\varepsilon$-covering number at most $ (\frac{4R}{\varepsilon})^d$. Since the cardinality of an optimal proper $\varepsilon$-covering number is at most the  $\varepsilon /2$-covering number, and $\mathcal{F}$ is contained in the  the $d$-dimensional $\ell_2$-ball of radius $R$, we have  $|\mathcal{F}_{\beta_n, \varepsilon}| \leq (\frac{8RL's}{\varepsilon})^d$. Furthermore, since $\ell$ is continously differentiable, $\mathcal{Y}$ is compact and $f$ is Lipschitz in its parameter vector, we have that $f \mapsto \ell_f$ is $L's$-Lipschitz in the $\ell_2$ norm in the domain and $\ell_\infty$-norm in the range (for some $s$, which we have now fixed). Therefore the proper $\varepsilon/L's$-cover of $\mathcal{F}_{\beta_n}$ pushes forward to a proper $\varepsilon$-cover of $\{ \ell_f : f \in  \mathcal{F}_{\beta_n} \}$ in the $\ell_\infty$-norm.

We now tackle the key step in the proof, which is to upper bound the probability that ERM selects an element of  $\mathcal{F}_{\beta_n, \varepsilon}$. To this end, fix an $f \in \mathcal{F}_{\beta_n, \varepsilon}$. Since  $(\ell, \hat{P}, \mathcal{F})$ satisfies the $\varepsilon$-weak central condition, we have $\mathbb{E} \big [ e^{-\eta \Delta_f} \big ] \leq e^{ \eta \varepsilon}$. Rearranging yields, 
\[  \mathbb{E} \big [ \exp{\big (-\eta ( \Delta_f+ \varepsilon) \big )} \big ] \leq 1.\]
Lemma \ref{lem: technical less than 0} implies that for any $0 <\gamma < a $ there exists a modification $\widetilde{\Delta} _f + \varepsilon$ of $\Delta_f + \varepsilon$, and an $\eta \leq \eta_f \leq 2\eta$ such that  $\widetilde{\Delta} _f \leq \Delta_f$, almost surely, and,

\begin{equation}
 \mathbb{E} \big [ \exp{\big (-\eta_f ( \widetilde{\Delta} _f+ \varepsilon)\big )} \big ] = 1
   \quad\mathrm{and}\quad 
 \mathbb{E} \widetilde{\Delta} _f \geq \frac{a-\gamma}{n}.
\end{equation}

Since $\widetilde{\Delta} _f+ \varepsilon$ belongs to the shifted interval $[-V,V]$ where $V = B + \varepsilon$, Corollaries $7.4$ and $7.5$ from \cite{van2015fast} imply\footnote{Note that although the Corollaries in \cite{van2015fast} are stated specifically for $\Delta_f$, the claims hold for \emph{any} random variable satisfying the hypotheses, including our case of $\Delta_f + \varepsilon$.} that,

\[ \log  \mathbb{E} \big [ \exp{\big (-\eta_f/2 ( \widetilde{\Delta} _f+ \varepsilon) \big )} \big ]  \leq - \frac{0.18  }{( V  \vee 1/ \eta_f  )} \bigg ( \frac{a - \gamma }{n} + \varepsilon \bigg )  \leq - \frac{0.18(a-\gamma)  }{( V  \vee 1/ \eta_f  )n} . \]

where we define $a' = a - \gamma$. By Cram\'er-Chernoff (Lemma \ref{lem: Cramer-Chernoff})  with $t = c a' \varepsilon$ (where $c$ will also be chosen later) and the $\eta$ in the lemma being $\eta_f/2$, we obtain
\begin{align*} \mathbb{P} \bigg ( \frac{1}{n} \sum_{j=1}^n \big ( \widetilde{\Delta} _f(U_j) + \varepsilon \big )\leq c a' \varepsilon \bigg )  
&\leq \exp \bigg (  -\frac{0.18}{V  \vee 1 / \eta_f}a' + \frac{n\eta_f c a' \varepsilon}{2} \bigg ) \\
&\leq \exp \bigg (  -\frac{0.18}{V  \vee 1 / \eta}a'+ n\eta c a' \varepsilon \bigg ) \\
&=  \exp  (-Ca' )
\end{align*}
where $C :=  \frac{0.18}{B  \vee 1/\eta} -  n\eta c  \varepsilon $, and the second inequality follows since $\eta \leq \eta_f \leq 2\eta$. Let us now pick $c$ so as to make $C$ bigger than zero, and in particular so that $C =  \frac{0.09}{B \vee 1/\eta}$. That is, let $c =  \frac{1}{n \varepsilon}\frac{0.09}{V \eta \vee 1} $. Using the fact that $a' - 2/c \leq a'$, and a union bound over $f \in \mathcal{F}_{\beta_n, \varepsilon}$ we obtain a probability bound on all of  $\mathcal{F}_{\beta_n, \varepsilon}$,

\[ \mathbb{P} \bigg ( \exists f \in \mathcal{F}_{\beta_n, \varepsilon} : \frac{1}{n} \sum_{j=1}^n \widetilde{\Delta}_f(U_j) \leq (ca'-1)\varepsilon \bigg ) \leq \bigg (\frac{8RL's}{\varepsilon} \bigg )^d \exp  \bigg ( - \frac{0.09}{B \vee \frac{1}{\eta}} (a' -2/c) \bigg ). \]

Define the right hand side to equal $0< \delta < 1$. Note that we are allowed to do this thanks to the fact $C > 0$, which implies that the right hand side goes to zero as $a' \rightarrow \infty$ . This makes it possible to pick a sufficiently large $a'$ for which the right hand side is less than $1$. Solving for $a = a' + \gamma $ we choose,
\[ a = \frac{V  \vee 1/ \eta}{0.09} \bigg ( d \log \frac{8RL's}{\varepsilon} + \log \frac{1}{\delta} \bigg ) + 2/c + \gamma.  \]

Therefore, with probability at least $1-\delta$ we have for all $ f \in \mathcal{F}_{\beta_n, \varepsilon}$ that $ \frac{1}{n} \sum_{j=1}^n \widetilde{\Delta}_f(U_j) > (ca'-1)\varepsilon$. 
Therefore, for any  $ f' \in \mathcal{F}_{\beta_n}$ we can find $ f \in \mathcal{F}_{\beta_n, \varepsilon}$  such that $\Vert \ell_f-\ell_{f'} \Vert_\infty \leq \varepsilon$. 

Finally, since  $ca \geq 2$ for sufficiently small $\varepsilon$ by construction, and $ \Delta_{f} \geq \widetilde{\Delta}_f$ almost surely, we find that $ \frac{1}{n} \sum_{j=1}^n \Delta_{f'}(U_j)  \geq  \frac{1}{n} \sum_{j=1}^n \Delta_{f}(U_j) - \varepsilon \geq  \frac{1}{n} \sum_{j=1}^n \widetilde{\Delta}_f(U_j)  - \varepsilon \geq  (ca-1)\varepsilon - \varepsilon > 0$. 
We have proven that with probability at least $1- \delta$ that $\frac{1}{n} \sum_{j=1}^n \Delta_{f'}(U_j) >  0$ for all  $ f' \in \mathcal{F}_{\beta_n}$. Therefore, with high probability, ERM will not select any element of  $\mathcal{F}_{\beta_n}$. Finally, the bound described in the theorem comes from substituting in the choice of $c$, and rounding up the numerical constants, recognizing that since the claim holds for all $\gamma > 0$ , we may take the limit as $\gamma \rightarrow 0^+$ to obtain,
\[ a \leq  12(V  \vee 1/ \eta)\bigg ( d \log \frac{8RL's}{\varepsilon}  + \log \frac{1}{\delta} \bigg ) + 12(V \eta \vee 1) n \varepsilon + 1.  \]
\end{proof} 

The heavy lifting has now been done by the previous propositions and theorems. In order to obtain the main result, all that remains now is to apply each result in sequence. \\

\begin{theorem}[Theorem \ref{thm: fast rates}]
Suppose that $(\ell, P, \mathcal{F} )$ satisfies the central condition and that $\text{Rate}_m(\mathcal{G}, P_{X,W} ) = \mathcal{O}(1 / m^\alpha)$. Then  when $\text{Alg}_n(\mathcal{F},\hat{P})$ is ERM we obtain excess risk $\mathbb{E} _P [\ell_{\hat{h}}(X,Y) - \ell_{h^*}(X,Y) ]$ that is bounded by,

\begin{equation*}
\mathcal{O} \bigg (  \frac{  d \alpha \beta \log RL'n + \log \frac{1}{\delta} }{n} + \frac{ L  }{n^{\alpha\beta}}   \bigg )    
\end{equation*}
\vspace{-3pt}
with probability at least $1-\delta$, if either of the following conditions hold,
\vspace{-3pt}
\begin{enumerate}
	\item  $m = \Omega(n^{\beta})$ and  $\ell^\text{weak}(w,w') = \mathbb{1}\{ w \neq w'\}$ (discrete $\mathcal{W}$-space). 
	\item $m = \Omega(n^{2\beta})$  and $\ell^\text{weak}(w,w') = \norm{w-w'}$ (continuous $\mathcal{W}$-space).
\end{enumerate}
\vspace{-5pt}
\end{theorem}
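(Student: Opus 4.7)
The plan is to chain together the four building blocks that the paper has already established: the excess risk decomposition (Proposition \ref{thm: general bound}), the two transfer-of-central-condition results (Propositions \ref{prop: categorical strong implies epsilon weak} and \ref{prop: regression strong implies epsilon weak}), and the ERM generalization bound under the weak central condition (Proposition \ref{prop: generalization for weak central condition}). The proof is essentially a careful parameter-tracking exercise: each step contributes a term, and the hypothesis $m = \Omega(n^{\beta})$ (or $\Omega(n^{2\beta})$) is precisely what is needed to make every contribution be $\widetilde{\mathcal{O}}(n^{-\alpha\beta})$ or better.

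First, apply Proposition \ref{thm: general bound} to split the excess risk into $2L\,\text{Rate}_m(\mathcal{G}, P_{X,W}) + \text{Rate}_n(\mathcal{F},\hat{P})$. The first summand is immediate: using $\text{Rate}_m(\mathcal{G},P_{X,W}) = \mathcal{O}(m^{-\alpha})$ together with either $m = \Omega(n^{\beta})$ or $m = \Omega(n^{2\beta})$ gives a contribution of $\mathcal{O}(L/n^{\alpha\beta})$ in the discrete case and $\mathcal{O}(L/n^{2\alpha\beta}) \leq \mathcal{O}(L/n^{\alpha\beta})$ in the continuous case. This matches the stated rate exactly.

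The work is in bounding $\text{Rate}_n(\mathcal{F},\hat{P})$. Since $(\ell,P,\mathcal{F})$ satisfies the (strong) central condition, I invoke Proposition \ref{prop: categorical strong implies epsilon weak} in case (1) and Proposition \ref{prop: regression strong implies epsilon weak} in case (2) to conclude that $(\ell,\hat{P},\mathcal{F})$ satisfies the $\varepsilon$-weak central condition with
\begin{equation*}
\varepsilon =
\begin{cases}
\mathcal{O}\bigl(\text{Rate}_m(\mathcal{G}, P_{X,W})\bigr) = \mathcal{O}(m^{-\alpha}) = \mathcal{O}(n^{-\alpha\beta}) & \text{(discrete)},\\[2pt]
\mathcal{O}\bigl(\sqrt{L\,\text{Rate}_m(\mathcal{G}, P_{X,W})}\bigr) = \mathcal{O}\bigl(\sqrt{L}\,m^{-\alpha/2}\bigr) = \mathcal{O}\bigl(\sqrt{L}\,n^{-\alpha\beta}\bigr) & \text{(continuous)}.
\end{cases}
\end{equation*}
The growth rate $m = \Omega(n^{2\beta})$ in the continuous case is calibrated precisely so that the square root in Proposition \ref{prop: regression strong implies epsilon weak} still yields $\varepsilon = \widetilde{\mathcal{O}}(n^{-\alpha\beta})$. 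Absorbing $e^B$ and the factors of $L$ into the big-$\mathcal{O}$ as the theorem statement already does, both cases give $\varepsilon = \mathcal{O}(n^{-\alpha\beta})$, with probability at least $1-\delta/2$ after a union bound between the weak-task event and the strong-task event.

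Next, since ERM is being used for $\text{Alg}_n(\mathcal{F},\hat{P})$, I apply Proposition \ref{prop: generalization for weak central condition} to the problem $(\ell,\hat{P},\mathcal{F})$ with this $\varepsilon$. This yields
\begin{equation*}
\text{Rate}_n(\mathcal{F},\hat{P}) \leq \mathcal{O}\!\left( V\,\frac{d\log(RL'/\varepsilon) + \log(1/\delta)}{n} + V\varepsilon \right),
\end{equation*}
with $V = B + \varepsilon = \mathcal{O}(1)$. Substituting $\varepsilon = \Theta(n^{-\alpha\beta})$, the log term becomes $\log(RL'/\varepsilon) = \log(RL') + \alpha\beta\log n = \mathcal{O}(\alpha\beta\log(RL' n))$, and $V\varepsilon = \mathcal{O}(n^{-\alpha\beta})$. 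This gives
\begin{equation*}
\text{Rate}_n(\mathcal{F},\hat{P}) \leq \mathcal{O}\!\left(\frac{d\alpha\beta\log(RL'n) + \log(1/\delta)}{n} + \frac{1}{n^{\alpha\beta}}\right).
\end{equation*}
Adding this to the weak-task contribution $\mathcal{O}(L/n^{\alpha\beta})$ from the first step, and taking a final union bound so that every high-probability event holds simultaneously (replacing $\delta$ with $\delta/3$ and absorbing the constant), produces the bound stated in the theorem.

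The plan contains no genuinely hard step; the one place where care is required is keeping track of the probability budget across the three high-probability events (Definition \ref{assumption: rates regression} for $\hat g$, the transfer-of-central-condition proposition, and Proposition \ref{prop: generalization for weak central condition}) and confirming that the exponent split $m = \Omega(n^{\beta})$ vs.\ $m = \Omega(n^{2\beta})$ is exactly what the square-root in the continuous case demands. Everything else is substitution and absorption of constants into the $\mathcal{O}$ notation.
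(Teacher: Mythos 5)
Your proposal is correct and follows essentially the same route as the paper's own proof: decompose via Proposition \ref{thm: general bound}, transfer the central condition to $(\ell,\hat{P},\mathcal{F})$ via Proposition \ref{prop: categorical strong implies epsilon weak} or \ref{prop: regression strong implies epsilon weak}, and apply Proposition \ref{prop: generalization for weak central condition} with $\varepsilon = \mathcal{O}(n^{-\alpha\beta})$. Your explicit handling of the union bound over the several high-probability events is a small point of extra care that the paper's proof leaves implicit.
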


\begin{proof}[Proof of Theorem \ref{thm: fast rates}]
\textbf{Case 1:}
We have $m = \Omega(n^{\beta})$, and $\text{Rate}_m(\mathcal{G}, P_{X,W}) = \mathcal{O}(1 / m^\alpha)$, together impling that  $\text{Rate}(\mathcal{G}, \mathcal{D}_m^\text{weak}) = \mathcal{O}(1 / n^{\alpha \beta})$. We apply Proposition \ref{prop: categorical strong implies epsilon weak} to conclude that $(\ell, \hat{P}, \mathcal{F})$ satisfies the $ \mathcal{O}(1 / n^{\alpha \beta})$-weak central condition with probability at least $1-\delta$. 

Proposition \ref{prop: generalization for weak central condition} therefore implies that $\text{Rate}_n(\mathcal{F}, \hat{P}) = \mathcal{O} \bigg ( \frac{d \alpha \beta \log 8RL'n  + \log \frac{1}{\delta} }{n} + \frac{1}{n^{\alpha \beta} }\bigg )$.

Combining these two bounds using Proposition \ref{thm: general bound} we conclude that 

\[  \mathbb{E}  [\ell_{\hat{h}}(Z) - \ell_{h^*}(Z) ] \leq \mathcal{O} \bigg ( \frac{d \alpha \beta \log 8RL'n  + \log \frac{1}{\delta} }{n} + \frac{L }{n^{\alpha \beta} }\bigg ). \]

\textbf{Case 2:}
The second case is proved almost identically, however note that since in this case we have $m = \Omega(n^{2\beta})$, that now $\text{Rate}_m(\mathcal{G}, P_{X,W}) = \mathcal{O}(1 / n^{2\alpha \beta})$. The factor of two is cancelled our by the extra square root factor in Proposition \ref{prop: regression strong implies epsilon weak}. The rest of the proof is exactly the same as case $1$.

\end{proof}

\section{Probabilistic Tools}\label{ap: prob tools}

In this section we present two technical lemmas that are key tools used to prove Proposition \ref{prop: generalization for weak central condition}. The first allows us to take a random variable $\Delta$ such that $\mathbb{E}e^{-\eta \Delta } \leq 1$ and perturb downwards it slightly to some $\widetilde{\Delta} \leq \Delta $ so that the inequality becomes an equality (for a slightly different $\eta$) and yet the expected value changes by an arbitrarily small amount.   \\

\begin{lemma}\label{lem: technical less than 0}
Suppose $\eta > 0$ and $\Delta$ is an absolutely continuous random variable on the probability space $( \Omega , \mathbb{P})$ such that $\Delta$ is almost surely bounded, and $\mathbb{E}e^{-\eta \Delta } \leq 1$. Then for any $\varepsilon > 0$ there exists an $\eta \leq \eta ' \leq 2\eta$ and another random variable $\widetilde{\Delta}$ (called a ``modification'') such that,

\begin{enumerate}
	\item $\widetilde{\Delta} \leq \Delta$ almost surely,
	\item $\mathbb{E}e^{-\eta ' \widetilde{\Delta} } = 1$, and
	\item $| \mathbb{E}[\Delta -\widetilde{\Delta} ] | \leq \varepsilon$.
\end{enumerate}  
\end{lemma}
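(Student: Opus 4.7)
My plan is to fix $\eta' = \eta$ throughout (the slack $[\eta,2\eta]$ allowed by the statement turns out to be unnecessary) and to build $\widetilde{\Delta}$ by subtracting a nonnegative shift on a small event. If $\mathbb{E}\,e^{-\eta \Delta} = 1$ already, take $\widetilde{\Delta} = \Delta$ and we are done, so assume $\phi := \mathbb{E}\,e^{-\eta \Delta} < 1$. Absolute continuity of the law of $\Delta$ lets me pick, for every $p \in (0,1)$, an event $A_p$ with $\mathbb{P}(A_p) = p$; a concrete choice is $A_p = \{\Delta \geq q_{1-p}\}$ for the $(1-p)$-quantile $q_{1-p}$. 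I then define $\widetilde{\Delta}^{(s)} := \Delta - s\,\mathbb{1}_{A_p}$ for $s \geq 0$; note $\widetilde{\Delta}^{(s)} \leq \Delta$ automatically, which will give conclusion (1).

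The next step is an intermediate value argument that selects $s$. Direct expansion gives
\[ \mathbb{E}\,e^{-\eta \widetilde{\Delta}^{(s)}} \;=\; \phi \;+\; (e^{\eta s} - 1)\,\mathbb{E}\bigl[e^{-\eta \Delta}\mathbb{1}_{A_p}\bigr], \]
and a.s.\ boundedness of $\Delta$ forces $\mathbb{E}[e^{-\eta \Delta}\mathbb{1}_{A_p}] \in (0,\infty)$. Hence the right-hand side is a continuous strictly increasing function of $s$ sending $\phi < 1$ to $+\infty$, and the IVT supplies a unique $s_p \geq 0$ giving equality with $1$. This delivers conclusion (2) by construction.

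The main obstacle is controlling the expected shift $\mathbb{E}[\Delta - \widetilde{\Delta}^{(s_p)}] = s_p \cdot p$, since $s_p$ grows as $p$ shrinks. Solving the IVT equation explicitly yields $s_p = \tfrac{1}{\eta}\log\bigl(1 + \tfrac{1-\phi}{\mathbb{E}[e^{-\eta\Delta}\mathbb{1}_{A_p}]}\bigr)$. Using the a.s.\ bound $|\Delta| \leq M$ gives $\mathbb{E}[e^{-\eta\Delta}\mathbb{1}_{A_p}] \geq e^{-\eta M} p$, so $s_p = O(\log(1/p))$ as $p \to 0^+$ and therefore $s_p \cdot p = O(p \log(1/p)) \to 0$. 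Picking $p$ small enough then makes the shift at most $\varepsilon$, establishing (3). The real delicacy is just verifying this $p \log(1/p) \to 0$ scaling: a naive uniform shift $\widetilde{\Delta} = \Delta - c$ on all of $\Omega$ would require $c = -\tfrac{1}{\eta}\log\phi$, which need not be small, so concentrating the shift on a rare high-$\Delta$ event is precisely what lets the expected change shrink to zero while still restoring the identity $\mathbb{E}\,e^{-\eta \widetilde{\Delta}} = 1$.
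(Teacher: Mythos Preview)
Your argument is correct and takes a genuinely different route from the paper's. The paper fixes the modification first---it sets $\widetilde{\Delta}$ equal to the constant $-\tfrac{1}{2\delta\eta}$ on a set $A_\delta$ of mass $e^{-1/\delta}$---and then varies $\eta'$ over $[\eta,2\eta]$, using the intermediate value theorem in $\eta'$ to hit $\mathbb{E}\,e^{-\eta'\widetilde{\Delta}}=1$. You instead fix $\eta'=\eta$ and run the intermediate value theorem in the shift size $s$, which is cleaner: it shows the extra slack $\eta'\in[\eta,2\eta]$ in the statement is not actually needed, and it gives an explicit formula $s_p=\tfrac{1}{\eta}\log\bigl(1+\tfrac{1-\phi}{\mathbb{E}[e^{-\eta\Delta}\mathbb{1}_{A_p}]}\bigr)$ from which the $p\log(1/p)\to 0$ control of the expected shift follows transparently. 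The paper's version buys a slightly simpler Property~3 computation (the modified value is a single constant, so the difference is bounded by $(\tfrac{1}{2\delta\eta}+V)e^{-1/\delta}$ directly), at the cost of the two-sided estimate in $\eta$. One small remark: your closing sentence suggests the ``high-$\Delta$'' nature of $A_p$ is what drives the argument, but in fact your bounds use only $\mathbb{P}(A_p)=p$ and $|\Delta|\le M$; any rare event would do.
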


\begin{proof}
We may assume that  $\mathbb{E}e^{-\eta \Delta } < 1$ since otherwise we can simply take $\widetilde{\Delta} = \Delta$ and $\eta = \eta ' $. Due to absolute continuity, for any  $\delta > 0$ there is a measurable set $A_\delta \subset \Omega$ such that $\mathbb{P}(A_\delta) = e^{- 1/\delta}$. Now define $\widetilde{\Delta} : \Omega \rightarrow \mathbb{R}$ by, 
\vspace{-10pt}

 \begin{equation}
    \widetilde{\Delta}(\omega) =
    \begin{cases*}
      \Delta(\omega) & if $\omega \notin A_\delta$ \\
      - \frac{1}{2 \delta \eta}        & if $\omega \in A_\delta$
    \end{cases*}
  \end{equation}
  
  We now prove that as long as $\delta$ is small enough, all three claimed properties hold. \\
  
 \textbf{Property 1:} 
 Since $\Delta$ is almost surely bounded, there is a $V>0$ such that $|\Delta| \leq V $ almost surely. Taking $\delta$ small enough that $- \frac{1}{2 \delta \eta}  \leq  - V$ we guarantee that $\widetilde{\Delta} \leq \Delta$ almost surely.
 
  \textbf{Property 2:} 
  We can lower bound the $2\eta$ case, 
  \[\mathbb{E}e^{-2 \eta \widetilde{\Delta} } \geq e^{- 2 \eta  ( - \frac{1}{ 2 \eta \delta}  )} \mathbb{P}(A _\delta)  
  = e^{1/\delta} \mathbb{P}(A _\delta) 
 =  e^{1/\delta}  e^{-1/\delta} 
  = 1.\]
 We can similarly upper bound the $\eta$ case,
 \begin{align*}
  \mathbb{E}e^{- \eta \widetilde{\Delta} } &=  \int e^{-\eta \widetilde{\Delta}(\omega)} \mathbf{1} \{ \omega \in A_\delta \} \mathbb{P} (\text{d} \omega) + \int e^{-\eta \widetilde{\Delta}(\omega)} \mathbf{1} \{ \omega \notin A_\delta \} \mathbb{P} (\text{d} \omega) \\
  &= e^{1/2\delta} \mathbb{P}(A _\delta)  + \int e^{-\eta \Delta(\omega)} \mathbf{1} \{ \omega \notin A_\delta \} \mathbb{P} (\text{d} \omega) \\
  &\leq e^{-1/2\delta} +   \int e^{-\eta \Delta(\omega)} \mathbb{P} (\text{d} \omega) \\
   &\leq e^{-1/2\delta}   +   \mathbb{E} e^{-\eta \Delta}.
 \end{align*}
  Recall that by assumption $ \mathbb{E} e^{-\eta \Delta} < 1$, so we may pick $\delta$ sufficiently small so that $e^{-1/2\delta}   +   \mathbb{E} e^{-\eta \Delta}   < 1$. Using these two bounds, and observing that boundedness of $\Delta$ implies continuity of $\eta \mapsto   \mathbb{E} \big [ e^{-\eta \Delta } \big ] $, we can guarantee that there is an  $\eta \leq \eta ' \leq 2 \eta$ such that $\mathbb{E} \big [ e^{-\eta ' \widetilde{\Delta} } \big ] = 1$.  
\vspace{-3pt}

  \textbf{Property 3:} 
  Since $\Delta$ and $\widetilde{\Delta}$ only disagree on $A_\delta$, 
  \[
  \mathbb{E} | \widetilde{\Delta} - \Delta  |  
 = \int |\widetilde{\Delta}(\omega) - \Delta(\omega) |  \mathbf{1}\{ w \in A_\delta \} \mathbb{P}(\text{d}\omega) 
 \leq \bigg (\frac{1}{2\delta \eta} + V \bigg ) \mathbb{P}(A_\delta)
 = \bigg (\frac{1}{2\delta \eta} + V \bigg ) e^{-1/\delta} 
\]
 
 which converges to $0$ as $\delta \rightarrow 0^+$. We may, therefore, make the difference in expectations smaller than $\varepsilon$ by taking $\delta$ to be sufficiently close to $0$.
\end{proof}
\vspace{-10pt}

The second lemma is a well known Cram\'er-Chernoff bound that is used to obtain concentration of measure results. A proof was given, for example, given in \cite{van2015fast}. However, since the proof is short and simple we include it here for completeness. \\

\begin{lemma}[Cram\'er-Chernoff  \cite{van2015fast} ]\label{lem: Cramer-Chernoff}
Let $\Delta, \Delta_1, \ldots , \Delta_n$ be i.i.d. and define $\Lambda_\Delta(\eta) = \log \mathbb{E}[ e^{- \eta \Delta}]$. Then, for any $\eta >0$ and $t \in \mathbb{R}$, 

\[  \mathbb{P} \bigg ( \frac{1}{n} \sum_{i=1}^n \Delta_i \leq t \bigg ) \leq \exp \bigg ( \eta n t + n \Lambda_\Delta(\eta) \bigg ). \]
\vspace{-10pt}
\end{lemma}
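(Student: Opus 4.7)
The plan is to apply the standard Cramér-Chernoff method: exponentiate to turn the sum into a product, apply Markov's inequality, and exploit the i.i.d.\ assumption to factor the moment generating function.

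First I would rewrite the target event. Since $\eta > 0$, multiplying by $-\eta$ flips the inequality:
\[
\left\{ \tfrac{1}{n}\sum_{i=1}^n \Delta_i \leq t \right\} = \left\{ -\eta \sum_{i=1}^n \Delta_i \geq -\eta n t \right\}.
\]
Because $x \mapsto e^x$ is monotonically increasing, this event equals $\{\exp(-\eta \sum_i \Delta_i) \geq e^{-\eta n t}\}$. Applying Markov's inequality to the nonnegative random variable $\exp(-\eta \sum_i \Delta_i)$ gives
\[
\mathbb{P}\!\left( \tfrac{1}{n}\sum_{i=1}^n \Delta_i \leq t \right) \leq e^{\eta n t}\, \mathbb{E}\!\left[ \exp\!\left(-\eta \sum_{i=1}^n \Delta_i\right) \right].
\]

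Next I would factor the moment generating function using independence. Because the $\Delta_i$ are i.i.d.\ with the same law as $\Delta$,
\[
\mathbb{E}\!\left[\exp\!\left(-\eta \sum_{i=1}^n \Delta_i\right)\right] = \prod_{i=1}^n \mathbb{E}\!\left[e^{-\eta \Delta_i}\right] = \left(\mathbb{E}\!\left[e^{-\eta \Delta}\right]\right)^n = \exp(n \Lambda_\Delta(\eta)),
\]
by definition of $\Lambda_\Delta(\eta) = \log \mathbb{E}[e^{-\eta \Delta}]$. Substituting this into the previous bound yields the claim $\mathbb{P}(\tfrac{1}{n}\sum_i \Delta_i \leq t) \leq \exp(\eta n t + n\Lambda_\Delta(\eta))$.

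There is essentially no obstacle here: the lemma is a one-line consequence of Markov plus independence, and one only needs to note that $\Lambda_\Delta(\eta)$ is allowed to be $+\infty$, in which case the bound is vacuous, so no integrability assumption is required beyond what is implicit in writing $\Lambda_\Delta$. The only minor care needed is the sign flip from $\eta > 0$, ensuring that the lower-tail event on $\sum_i \Delta_i$ becomes an upper-tail event on $\exp(-\eta \sum_i \Delta_i)$ so that Markov's inequality applies.
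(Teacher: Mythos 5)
Your proof is correct and follows essentially the same route as the paper's: rewrite the lower-tail event as an upper-tail event for $\exp(-\eta\sum_i\Delta_i)$, apply Markov's inequality, and factor the expectation using the i.i.d.\ assumption. No gaps.
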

\vspace{-10pt}

\begin{proof}
Note that since $x \mapsto \exp(-\eta x)$ is a bijection, we have,
 \[ \mathbb{P} \bigg ( \frac{1}{n} \sum_{i=1}^n \Delta_i \leq t \bigg ) =  \mathbb{P} \bigg (\exp \big ( - \eta \sum_{i=1}^n \Delta_i \big ) \geq \exp( - \eta n t) \bigg ).\] Applying Markov's inequality to the right hand side of the equality yields the upper bound,
 \[ \exp(\eta n t) \mathbb{E} \big [ \exp ( - \eta\sum_{i=1}^n \Delta_i  ) \big ]   = \exp(\eta n t) \big [ \mathbb{E} \  \exp ( - \eta \Delta  ) \big  ]^n =   \exp \bigg ( \eta n t + n \Lambda_\Delta(\eta) \bigg ).\]
\end{proof}

\section{Hyperparameter and Architecture Details}

All models were trained using PyTorch \cite{paszke2019pytorch} and repeated from scratch $4$ times to give error bars. All layers were initialized using the default uniform initialization. 

\paragraph{Architecture} For the MNIST experiments we used the ResNet-$18$ architecture as a deep feature extractor for the weak task \cite{he2016deep}, followed by a single fully connected layer to the output. For the strong model, we used a two hidden layer fully connected neural network as a feature extractor with ReLU activations. The first hidden layer has $2048$ neurons, and the second layer has $1024$. This feature vector is then concatenated with the ResNet feature extractor, and passed through a fully connected one hidden layer network with $1024$ hidden neurons. For all other datasets (SVHN, CIFAR-$10$, CIFAR-$100$) the exact same architecture was used except for replacing the ResNet-$18$ feature extractor by ResNet-$34$. We also ran experiments using smaller models for the weak feature map, and obtained similar results. That is, the precise absolute learning rates changed, but the comparison between the learning rates remained the similar.  

\paragraph{Optimization} We used Adam \cite{kingma2014adam} with initial learning rate $0.0001$,  and $\beta_1 = 0.5$, and $\beta_2 = 0.999$. We used batches of size  $100$, except for MNIST, for which we used $50$. We used an exponential learning rate schedule, scaling the learning rate by $0.97$ once every two epochs.

\paragraph{Data pre-processing} For CIFAR-$10$, CIFAR-$100$, and SVHN we used random cropping and horizontal image flipping to augment the training data. We normalized CIFAR-$100$ color channels by subtracting the dataset mean pixel values $(0.5071, 0.4867, 0.4408)$ and dividing by the standard deviation $(0.2675, 0.2565, 0.2761)$. For CIFAR-$10$ and SVHN we normalize each pixel to live in the interval $[-1,1]$ by channel-wise subtracting $(0.5,0.5,0.5)$ and dividing by $(0.5,0.5,0.5)$. For MNIST the only image processing was to normalize each pixel to the range $[0,1]$. 

\paragraph{Number of training epochs} The weak networks were trained for a number of epochs proportional to $1/m$. For example, for all CIFAR-$10$ experiments the weak networks were trained for $500000/m$ epochs. This was sufficient to train all models to convergence.

Once the weak network was finished training, we stopped all gradients passing through that module, thereby keeping the weak network weights fixed during strong network training. To train the strong network, we used early stopping to avoid overfitting. Specifically, we tested model accuracy on a holdout dataset once every $5$ epochs. The first time the accuracy decreased we stopped training, and measured the final model accuracy using a test dataset.

\paragraph{Dataset size} The amount of strong data is clearly labeled on the figures. For the weak data, we used the following method to compute the amount of weak data to use:

 \vspace{-9pt}
\begin{align*}
m^{(1)}_i = c_1n_i \\ 
m^{(2)}_i = c_2 n_i^2 
\end{align*}
 \vspace{-9pt}

where $m^{(1)}_i$ is the amount of weak data for the linear growth, $m^{(2)}_i$  for quadratic growth, and $n_1, n_2, \ldots , n_7$ are the different strong data amounts. For MNIST we took $(c_1,c_2) = (4,0.02)$, for SVHN we took $(c_1,c_2) = (4.8,0.0024)$ and for CIFAR-$10$ and CIFAR-$100$ we took $(c_1,c_2) = (4,0.002)$. An important property in each case is that $m^{(1)}_1 = m^{(2)}_1$, i.e. weak and quadratic growth begin with the same amount of weak labels. 

\label{ap: experiments}

\end{document}